\documentclass{article} % For LaTeX2e
\usepackage{iclr2023_conference,times}

% if you need to pass options to natbib, use, e.g.:
%     \PassOptionsToPackage{numbers, compress}{natbib}
% before loading neurips_2021
 \PassOptionsToPackage{square,sort,comma, numbers, compress}{natbib}

% to compile a camera-ready version, add the [final] option, e.g.:
%     \usepackage[final]{neurips_2021}

% to avoid loading the natbib package, add option nonatbib:
%    \usepackage[nonatbib]{neurips_2021}

%\usepackage[utf8]{inputenc} % allow utf-8 input
%\usepackage[T1]{fontenc}    % use 8-bit T1 fonts
\usepackage{url}            % simple URL typesetting
\usepackage{booktabs}       % professional-quality tables
\usepackage{nicefrac}       % compact symbols for 1/2, etc.
\usepackage{microtype}      % microtypography
\usepackage{adjustbox}
\usepackage[flushleft]{threeparttable}
\usepackage{wrapfig}
\usepackage{tcolorbox}
\usepackage[page, header]{appendix}
\usepackage[utf8]{inputenc} % allow utf-8 input
\usepackage[T1]{fontenc}    % use 8-bit T1 fonts
\usepackage{caption}
\usepackage{bm}
\usepackage{diagbox}
\usepackage{color} 
\usepackage{algorithmic}
\usepackage{amssymb}
\usepackage{amsmath}
\usepackage{amsfonts}
\usepackage{enumitem}

\usepackage{mathtools}
\usepackage{lineno}
\usepackage{minitoc}
\doparttoc % Tell to minitoc to generate a toc for the parts
\faketableofcontents % Run a fake tableofcontents command for the partocs
\usepackage[labelformat=simple]{subcaption}
\usepackage{wrapfig}
\newenvironment{smalleralign}[1][\small]
 {\par\nopagebreak\leavevmode\vspace*{-\baselineskip}%
  \skip0=\abovedisplayskip
  #1%
  \def\maketag@@@##1{\hbox{\m@th\normalfont\normalsize##1}}%
  \abovedisplayskip=\skip0
  \align}
 {\endalign\ignorespacesafterend}
 \iclrfinalcopy
\makeatother
\newtheorem{proof}{Proof}

\newtheorem{theorem}{Theorem}
\newtheorem{definition}{Definition}
\newtheorem{proposition}{Proposition}

\newtheorem{lemma}{Lemma}

 %{\rm ex\text{-}it}

\usepackage{natbib}
\usepackage{multicol, multirow}
\usepackage[ruled,vlined]{algorithm2e}

 %
 %[AS: #1]

% \newcommand\blfootnote[1]{%
% \begingroup
% \renewcommand\thefootnote{}\footnote{#1}%
% \addtocounter{footnote}{-1}%
% \endgroup
% }

%\newcommand{\theHalgorithm}{\arabic{algorithm}}

% \DeclareMathOperator*{\E}{\mathbb{E}}

\newcommand{\cA}{{\mathcal A}}

\newcommand{\cC}{{\mathcal C}}

\newcommand{\cP}{{\mathcal P}} 
 
\newcommand{\cR}{{\mathcal R}} 
\newcommand{\cS}{{\mathcal S}} 

\newcommand{\cM}{{\mathcal M}}

\newcommand{\cZ}{{\mathcal Z}}

\newcommand{\bma}{{\bm a}}

\SetKwInput{KwInput}{Input}                % Set the Input
\SetKwInput{KwOutput}{Output} 

\newenvironment{customthm}[1]
  {\innercustomthm}
  {\endinnercustomthm}
\providecommand{\customgenericname}{}
\newcommand{\newcustomtheorem}[2]{%
  \newenvironment{#1}[1]
  {%
  \renewcommand\customgenericname{#2}%
  \renewcommand\theinnercustomgeneric{##1}%
  \innercustomgeneric
  }
  {\endinnercustomgeneric}
}
\newcustomtheorem{customtheorem}{Theorem}
% \usepackage{lineno}
% \linenumbers
% \usepackage{setspace}
% \doublespacing

\usepackage{hyperref}
\hypersetup{
    colorlinks=true,
    linkcolor=red,
    citecolor=blue,
    filecolor=magenta,      
    urlcolor=blue,
linktocpage}
\usepackage{cleveref}

\title{Timing is Everything: Learning to Act Selectively with Costly Actions and  Constraints}

\author{David Mguni$^{\dag,}$\textsuperscript{1},
Aivar Sootla\textsuperscript{1},
Juliusz Ziomek\textsuperscript{1},
Oliver Slumbers\textsuperscript{1, 2},
\textbf{Zipeng Dai}\textsuperscript{1}, 
\\\textbf{Kun Shao}\textsuperscript{1},
\textbf{Jun Wang}\textsuperscript{1, 2}
\\
\textsuperscript{1}Huawei Noah's Ark Lab\\
\textsuperscript{2}UCL, London, United Kingdom
} 
\begin{document}

\maketitle

\begin{abstract}
Many
% \blfootnote{$^\dag$Corresponding author <david.mguni@hotmail.com>. } 
real-world settings involve costs for performing actions; transaction costs in financial systems and fuel costs being common examples. In these settings, performing actions at each time step quickly accumulates costs leading to vastly suboptimal outcomes. Additionally, repeatedly acting produces wear and tear and ultimately, damage.  Determining \textit{when to act} is crucial for achieving successful outcomes and yet, the challenge of efficiently \textit{learning} to behave optimally when actions incur minimally bounded costs remains unresolved. In this paper, we introduce a  reinforcement learning (RL) framework named \textbf{L}earnable \textbf{I}mpulse \textbf{C}ontrol \textbf{R}einforcement \textbf{A}lgorithm (LICRA), for learning to optimally select both when to act and which actions to take when actions incur costs. At the core of LICRA is a nested structure that combines RL and a form of policy known as \textit{impulse control} which learns to maximise objectives when actions incur costs. We prove that LICRA, which seamlessly adopts any RL method, converges to policies that optimally select when to perform actions and their optimal magnitudes.
% We firstly tackle the case in which the agent faces a cost for each action and develop an optimal learning framework for this task.  
We then augment LICRA to handle problems in which the agent can perform at most $k<\infty$ actions and more generally, faces a budget constraint. We show LICRA learns the optimal value function and ensures budget constraints are satisfied almost surely. We demonstrate empirically LICRA's superior performance against benchmark RL methods in OpenAI gym's \textit{Lunar Lander} and in \textit{Highway} environments and a variant of the Merton portfolio problem within finance.
\end{abstract}

\section{Introduction}

There are many settings in which agents incur costs each time they perform an action. Transaction costs in financial settings \citep{mguni2018duopoly}, fuel expenditure \citep{zhao2018deep}, toxicity as a side effect of controlling bacteria~\citep{sootla2016shaping} and physical damage produced by repeated action that produces wear and tear are just a few among many examples \citep{grandt2011damage}. In these settings, performing actions at each time step is vastly suboptimal since acting in this way results in prohibitively high costs and undermines the service life of machinery. Minimising wear and tear is an essential
attribute to safeguard against failures that can result in catastrophic losses \citep{grandt2011damage}. 

Reinforcement learning (RL) is a framework that enables autonomous agents to learn complex behaviours from interactions with the environment  \citep{sutton2018reinforcement}. Within the standard RL paradigm, determining optimal actions involves making a selection among many (possibly infinite) actions; a procedure that must be performed at each time-step as the agent decides on an action. In unknown settings, the agent cannot immediately exploit any topological structure of the action set. Consequently, learning \textit{not to take} an action i.e performing a zero or \textit{null action}, involves expensive optimisation procedures over the entire action set. Since this must be done at each state, this process is vastly inefficient for learning optimal policies when the agent incurs costs for acting. 

In this paper, we tackle this problem by developing an RL framework for finding both an optimal criterion to determine whether or not to execute actions as well as learning optimal actions. A key component of our framework is a novel combination of RL with a form of policy known as \textit{impulse control} \citep{mguni2018viscosity,mguni2018duopoly}. This enables the agent to determine the appropriate points to perform an action as well as the optimal action itself. Despite its fundamental importance as a tool for tackling decision problems with costly actions \citep{korn1999some,mitchell2014impulse}, presently, the use of impulse control within learning contexts (and unknown environments) is unaddressed.  

We present an RL impulse control framework called LICRA, which, to our knowledge, is the first learning framework for impulse control. To enable learning optimal impulse control policies in unknown environments, we devise a framework that consists of separate RL components for learning when to act and, how to act, optimally. The resulting framework is a structured two-part learning process which differs from current RL protocols. In LICRA, at each time step, the agent firstly makes a decision whether to act or not leading to a binary decision space $\{0,1\}$. The second decision part determines the best action to take.   This generates a subdivision of the state space into two regions; one in which the agent performs actions and another in which it does not act at all. We show the set of states where the agent ought to act is characterised using an easy-to-evaluate criterion on the value function and, LICRA quickly learns this set within which it then learns which actions are optimal.  
    
 We then establish theory that ensures convergence of a Q-learning variant of LICRA to the optimal policy for such settings.  To do this, we give a series of results namely:

\textbf{i)} We establish a dynamic programming principle (DPP) for impulse control and show that the optimal value function can be obtained as a limit of a value iterative procedure (Theorem \ref{theorem:existence}) which lays the foundation for an RL approach to impulse control.
\newline\textbf{ii)} We extend result i) to a new variant of Q-learning which enables the impulse control problem to be solved using our RL method (Theorem \ref{theorem:q_learning}). Thereafter, we extend the result to (linear) function approximators enabling the value function to be parameterised (Theorem \ref{primal_convergence_theorem}).    
\newline\textbf{iii)} We characterise the optimal conditions for performing an action which we reveal to be a simple `obstacle condition' involving the agent's value function (Proposition \ref{prop:switching_times}). Using this, the agent can quickly determine whether or not it should act and if so, then learn what the optimal action is. 
% For all other states, the agent need not learn the optimal action.
\newline\textbf{iv)} In Sec. \ref{sec:budget}, we extend LICRA to include budgetary constraints so that each action draws from a fixed budget which the agent must stay within. Analogous to the development of i), we establish another DPP from which we derive a Q-learning variant for tackling impulse control with budgetary constraints (Theorem \ref{thm:optimal_policy}). A particular case of a budget constraint is when the number of actions the agent can take over the horizon is capped. 
% by some number $k<\infty$. 
\newline Lastly, we perform a set of experiments to validate our theory within the \textit{Highway} driving simulator and OpenAI's \textit{LunarLander} \citep{brockman2016openai}.

% LICRA confers a series of advantages.
As we demonstrate in our experiments, LICRA  learns to compute the optimal problems in which the agent faces costs for acting in an efficient way which outperforms leading RL baselines. Second, as demonstrated in Sec. \ref{sec:budget}, LICRA handles settings in which the agent has a cap the total number of actions it is allowed to execute and more generally, generic budgetary constraints. LICRA is able to accommodate any RL algorithm unlike various RL methods designed to handle budgetary constraints.

%, \citep{ahilan2019feudal}
% TBA

% \begin{itemize}
%     \item Why impulse control is needed - add stuff about continuous time
%     \item Complexity discussion (why using standard RL won't work) + rapidity of learning
%     \item Budget
%     % \item Relation to optimal stopping (ours is a general case)
%     \item Advantages of the framework
%     \item Plug \& play
%     % \item \textbf{Optimal Stopping is a degenerate case}
% \end{itemize}

% \textbf{Key results}

% \begin{itemize}
%     \item Impulse control in discrete time (dynamic programming result) 
%     \item Function approximation result
%     \item Charaterisation of intervention time
%     \item Budgetory solution via state representation a la Aivar 
%     \item Experiments to validate framework (baselines using standard RL methods)
% \end{itemize}

\section{Related Work}

In continuous-time optimal control theory \citep{oksendal2003stochastic}, problems in which the agent faces a cost for each action are tackled with a form of policy known as \textit{impulse control} \citep{mguni2018viscosity,mguni2018duopoly,al2015british}. In impulse control frameworks, the dynamics of the system are modified through a sequence of discrete actions or \textit{bursts} chosen at times that the agent chooses to apply the control policy. This distinguishes impulse control models from classical decision methods in which an agent takes actions at each time step while being tasked with the decision of only which action to take. Impulse control models represent appropriate modelling frameworks for financial environments with transaction costs, liquidity risks and economic environments in which players face fixed adjustment costs (e.g. \textit{menu costs}) \citep{korn1999some,mguni2018optimal,mundaca1998optimal}. 

% The impulse control mechanism results in a framework in which the agent determines the set of states to perform actions and then, determines the optimal actions at these states. 
% % The first of these problems has a markedly reduced decision space in comparison to the standard RL agent's problem (though the agents share the same experiences).
% Crucially, the decision space for the determining whether or not to execute an action is $\mathcal{S}$ i.e at each state it makes a binary decision. Consequently, the learning process for aspect is much quicker than a policy which must optimise over a decision space which is $|\mathcal{S}||\mathcal{A}|$ (choosing an action from its action space at every state). This results in the agent rapidly learning which states to focus on to learn which actions to perform. 

The current setting is intimately related to the \textit{optimal stopping problem} which widely occurs in finance, economics and computer science \citep{mguni2019cutting,tsitsiklis1999optimal}. In the optimal stopping problem, the task is to determine a criterion that determines when to arrest the system and receive a terminal reward. In this case, standard RL methods are unsuitable since they require an expensive sweep (through the set of states) to determine the optimal point to arrest the system.   The current problem can be viewed as an augmented problem of optimal stopping since the agent must now determine both a sequence of points to perform an action or \textit{intervene} and their optimal magnitudes --- only acting when the cost of action is justified \citep{oksendal2019approximating}. Adapting RL to tackle optimal stopping problems has been widely studied \citep{tsitsiklis1999optimal,becker2018deep, chen2020zap} and applied to a variety of real-world settings within finance \citep{fathan2021deep} and network operating systems \citep{alcaraz2020online}. Our work serves as a natural extension of RL approaches to optimal stopping to the case in which the agent must decide at which points to take a series of actions. As with optimal stopping, standard RL methods cannot \textit{efficiently} tackle this problem since determining whether to perform a $0$ action requires a costly sweep through the action space at every state~\citep{tsitsiklis1999optimal}.  In~\citep{pang2021sparsity} the authors introduce ``sparse action'' with a similar motivation as impulse control. However, the authors treat only the discrete action space case. Moreover, \citep{pang2021sparsity} do not provide a general theoretical framework of dealing with ``sparse actions'' but develop purely algorithmic solutions. Additionally, unlike the approach taken in \citep{pang2021sparsity}, the problem setting we consider is one in which the agent faces a cost for each action - this produces a need for the agent to be selective about where it performs actions (but does not necessarily constrain the magnitude or choice of those actions).   

% hierarchical RL 
% The framework has parallels with hierarchical RL (HRL)~\citep{barto2003recent}, where a hierarchy of policies are learned, e.g., for learning subgoal policies~\citep{noelle2019unsupervised} and similarly, RL with \textit{options}~\citep{precup1998theoretical,klissarov2021flexible}, where the agent chooses an option which in turn selects a sequence of actions until its termination. 

% \AS{I think it is important to stress that options are quite similar to our settings, since they have two policies where one chooses an option and another based on this option chooses an action. Maybe something like this:

Our framework bears resemblance to hierarchical RL (HRL)~\citep{barto2003recent}. Policies in this hierarchy have usually different objectives. For example in sub-goal learning~\citep{noelle2019unsupervised}, a top-level policy decides the goal of a low-level policy. In this case, the top-level policy gives control to the low-level policy for a number of time steps and/or until the sub-goal is achieved. In contrast, LICRA's top-level policy must make a decision at every time step. One can note further similarities with a specific HRL framework --- RL with \textit{options}~\citep{precup1998theoretical,klissarov2021flexible}. There the agent chooses an option and then selects an action (or a sequence of actions) based on the current state and the chosen option. Therefore, there are two policies that the agent learns.
% 
% 
% Action policies depend on an option, as well. 
Options and HRL offer abstraction frameworks with higher levels of internal structure than most of the standard RL algorithms. This can significantly speed up learning in many situations. For example, in a setting with sparse rewards, a subgoal discovery algorithm can guide the agent to an optimal policy more quickly. Our impulse control framework offers an even higher level of structure than options and HRL in order to determine when to act \citep{jinnai2019discovering}. This is achieved by exploiting the specificity of the impulse control setting which includes a cost of acting while employing a structure that enables isolating null actions from the action space.

%LICRA can also be seen as a special instance of Hierarchical Reinforcement Learning\cite{pateria2021hierarchical}(HRL), where the if the top-level policy chooses to act, we run a sub-policy deciding which action to take and when it is decided not to act, we can think of such situation as giving control to a sub-policy with only the null action in its action space. However, usually in HRL, the top-level policy gives control to some lower-level policy for some number of time steps or until some goal is fulfilled. In contrary, LICRA's top-policy makes a decision at every time step.
\section{Preliminaries}
% \textbf{Reinforcement Learning (RL).} 
In RL, an agent sequentially selects actions to maximise its expected returns. The underlying problem is typically formalised as an MDP $\left\langle \mathcal{S},\mathcal{A},P,R,\gamma\right\rangle$ where $\mathcal{S}\subset \mathbb{R}^p$ is the set of states, $\mathcal{A}\subset \mathbb{R}^k$ is the (continuous) set of actions, $P:\mathcal{S} \times \mathcal{A} \times \mathcal{S} \rightarrow [0, 1]$ is a transition probability function describing the system's dynamics, $R: \mathcal{S} \times \mathcal{A} \rightarrow \mathbb{R}$ is the reward function measuring the agent's performance and the factor $\gamma \in \left [0, 1 \right)$ specifies the degree to which the agent's rewards are discounted over time \citep{sutton2018reinforcement}. At time $t\in 0,1,\ldots, $ the system is in state $s_{t} \in \mathcal{S}$ and the agent must
choose an action $a_{t} \in \mathcal{A}$ which transitions the system to a new state 
$s_{t+1} \sim P(\cdot|s_{t}, a_{t})$ and produces a reward $R(s_t, a_t)$. A policy $\pi: \mathcal{S} \times \mathcal{A} \rightarrow [0,1]$ is a probability distribution over state-action pairs where $\pi(a|s)$ represents the probability of selecting action $a\in\mathcal{A}$ in state $s\in\mathcal{S}$. The goal of an RL agent is to
find a policy $\hat{\pi}\in\Pi$ that maximises its expected returns given by the value function: $
v^{\pi}(s)=\mathbb{E}[\sum_{t=0}^\infty \gamma^tR(s_t,a_t)|a_t\sim\pi(\cdot|s_t), s_0=s]$ where $\Pi$ is the agent's policy set.  The action value function is given by $Q(s,a)=\mathbb{E}[\sum_{t=0}^\infty R(s_t,a_t)|a_0=a, s_0=s] $.

We consider a setting in which the agent faces at least some minimal cost  for each action it performs. With this, the agent's task is to maximise:
\begin{align}
v^{\pi}(s)=\mathbb{E}\left[\sum_{t=0}^\infty \gamma^t\left\{\cR(s_t, a_t)- \cC(s_t,a_t)\right\}\Big|s_0=s\right], \label{impulse_objective}
% \Big|a_t\sim\pi(\cdot|s_t)\right]    
\end{align}
where for any state $s\in\cS$ and any action $a\in\cA$, the functions $\cR$ and $\cC$ are given by $\cR(s, a)=R(s,a)\mathbf{1}_{a\in\mathcal{A}/\{0\}}+R(s,0)(1-\mathbf{1}_{a\in\mathcal{A}/\{0\}})$ where $\mathbf{1}_{a\in\mathcal{A}/\{0\}}$ is the indicator function which is $1$ when ${a\in\mathcal{A}}/\{0\}$ and $0$ otherwise and  $\cC(s,a):=
			c(s,a)\mathbf{1}_{a\in\mathcal{A}/\{0\}}
$ where $c:\cS\times\cA\to\mathbb{R}$ is a strictly positive (cost) function that introduces a cost each time the agent performs an action. Examples of the cost function is a quasi-linear function of the form $c(s_t,a_t)=\kappa+f(a_t)$ where $f:\cA\to\mathbb{R}_{>0}$ and $\kappa$ is a positive real-valued constant. 
Since acting at each time step would incur prohibitively high costs, the agent must be selective when to perform an action. Therefore, in this setting, the agent's problem is augmented to learning both an optimal policy for its actions and, learning at which states to apply its action policy.
Examples of problem settings of this kind include financial portfolio problems with transaction costs \citep{davis1990portfolio} where each investment incurs a fixed minimal cost and robotics where actions produce mechanical stress leading to fatigue 
\citep{grandt2011damage}.

% \textbf{Example: Merton Portfolio Problem with Transaction Costs \citep{davis1990portfolio}.}  An investor performs a series of costly portfolio adjustments by buying and selling amounts of different assets within their portfolio. Each investment incurs a fixed minimal cost (also known as \textit{transaction costs}) which is deducted from the investor's available cash-flow. The investor's aim is to maximise their total wealth (the value of the sum of their assets) at some time horizon by adjusting their portfolio of investments. Problems of this kind, portfolio investment problems are of fundamental importance within finance \citep{markowitz1991foundations}.
% however in order to avoid ruin (bankruptcy), the investor must ensure that their cash-flow remains above zero.
% \newline 
% \textbf{Example 2.} An autonomous vehicle must perform a series of actions to perform a task. Each action draws from its fuel budget. In order to complete its task successfully, during the task, the vehicle must ensure it maintains an available supply. 

\section{The LICRA Framework}\label{sec:LICRA_framework}

% \textbf{Preliminaries}
% 
% In this section, we provide relevant background knowledge needed for the rest of the paper. \AS{This sentence can be removed}

    In RL, the agent's problem involves learning to act at \textit{every} state including those in which actions do not significantly impact on its total return. 
While we can add a zero action to the action set $\mathcal{A}$ and apply standard methods, we argue that this may not be the best solution in many situations. 
% \AS{That's in the case I understood you correctly?} 
We argue the optimal policy has the following form:
\begin{equation}
    \widetilde \pi(\cdot | s) = 
    \begin{cases}
    a_t & s \in {\mathcal S}_I, \\
    0   & s \not \in {\mathcal S}_I,
    \end{cases}
\end{equation}
which implies that we simplify policy learning by determining the set ${\mathcal S}_I$ first --- the set where we actually need to learn the policy. 

We now introduce a learning method for producing impulse controls which enables the agent to learn to select states to perform actions. Therefore, now agent is tasked with learning to act at states that are most important for maximising its total return given the presence of the cost for each action. To do this effectively, at each state the agent first makes a \textit{binary decision} to decide to perform an action. Therefore, LICRA consists of two core components: first, an RL process $\mathfrak{g}:\cS\times \{0,1\}\to [0,1]$  and a second RL process $\pi:\cS\times\cA\to[0,1]$. The role of $\mathfrak{g}$ is to determine whether or not an action is to be performed by the policy $\pi$ at a given state $s$. In LICRA, the policy $\pi$ first proposes an action $a\in\cA$ which is observed by the policy $\mathfrak{g}$.  If activated, the policy $\pi$ determines the action to be selected. 
% Prior to decisions being made, the policy $\pi$ communicates to $\mathfrak{g}$ the action it would take. 
% An important feature of our LICRA is the \textit{sequential decision process}. 
Therefore, $\mathfrak{g}$ prevents actions under $\pi$ for which the change in expected return does not exceed the costs incurred for taking such actions. While we consider now two policies $\pi$, $\mathfrak{g}$, the cardinality of the action space does not change.  Crucially, the agent must compute optimal actions at only the subset of states chosen by $\mathfrak{g}$. By isolating the decision of whether to act or not, the LICRA framework may also reduce the computational complexity in this setting.

\begin{algorithm}[h!]
% \DM{Need to add supervised learning part for the model of $P$ and $L$ and INPUTS!}
\begin{algorithmic}[1] 
		\STATE {\bfseries Input:} Stepsize $\alpha$, batch size $B$, episodes $K$, steps per episode $T$, mini-epochs $e$
		\STATE {\bfseries Initialise:} Policy network (acting) $\pi$, Policy network (switching) $\mathfrak{g}$, \\ Critic network (acting )$V_{\pi}$,Critic network (switching )$V_{\mathfrak{g}}$ 
		\STATE Given reward objective function, $R$, initialise Rollout Buffers $\mathcal{B}_{\pi}$, $\mathcal{B}_{\mathfrak{g}}$ (use Replay Buffer for SAC)% \textcolor{red}{MUST INCLUDE OFF POLICY PART}
		\FOR{$N_{episodes}$}
		    \STATE Reset state $s_0$, Reset Rollout Buffers $\mathcal{B}_{\pi}$, $\mathcal{B}_{\mathfrak{g}}$
		    \FOR{$t=0,1,\ldots$}
    		    \STATE Sample $a_{t}\sim\pi(\cdot|s_{t})$ 
    		   \STATE Sample $g_t \sim \mathfrak{g}(\cdot|s_t)$
    		    \IF{$g_t = 1$} 
         \STATE Apply $a_{t}$ so $s_{t+1}\sim P(\cdot|a_t,s_t),$
                    
        		    \STATE Receive rewards $r_{t} = \cR(s_{t},a_{t})-c(s_t,a_t)$
                    \STATE Store $(s_t, a_{t}, s_{t+1}, r_t)$ in $\mathcal{B}_{\pi}$
    		    \ELSE
    		     \STATE Apply the null action so $s_{t+1}\sim P(\cdot|0,s_t),$
        	\STATE Receive rewards $r_t = \cR(s_{t},0)$.

    		    \ENDIF
                    \STATE Store $(s_t, g_t, s_{t+1}, r_t)$ in $\mathcal{B}_{\mathfrak{g}}$
        	\ENDFOR
    	\STATE{\textbf{// Learn the individual policies}}
    	%\STATE Sample a batch of $|B_{\pi}|$ transitions $B_{\pi}$ from $\mathcal{B}_{\pi}$
    	%\STATE Sample a batch of $|B_{\mathfrak{g}}|$ transitions $B_{\mathfrak{g}}$ from $\mathcal{B}_{\mathfrak{g}}$
        \STATE Update policy  $\pi$ and critic $V_{\pi}$ networks using $\mathfrak{B}_{\pi}$ 
        \STATE Update policy  $\mathfrak{g}$ and critic $V_{\mathfrak{g}}$ networks using $\mathfrak{B}_{\mathfrak{g}}$
        \ENDFOR
	\caption{\textbf{L}earnable \textbf{I}mpulse \textbf{C}ontrol \textbf{R}einforcement \textbf{A}lgorithm (LICRA)}
	\label{algo:Opt_reward_shap_psuedo} 
\end{algorithmic}         
\end{algorithm}

LICRA consists of two independent procedures: a learning process for the policy $\pi$ and simultaneously, a learning process for the impulse policy $\mathfrak{g}$ which determines at which states to perform an action. In Sec. \ref{sec:analysis}, we prove the convergence properties of LICRA. 
In our implementation, we used proximal policy optimisation (PPO) \citep{schulman2017proximal} for the policy $\pi$ and for the impulse policy $\mathfrak{g}$, whose action set consists of two actions (intervene or do not intervene) we used a soft actor critic (SAC) process \citep{haarnoja2018soft}.
% PPO is a popular RL algorithm which is on-policy, trust-region method. 
LICRA is a plug \& play framework which enables these RL components to be replaced with any RL algorithm of choice.  Above is the pseudocode for LICRA, we provide full details of the code in Sec. \ref{sec:algorithm} of the Appendix.

\section{Convergence and Optimality of LICRA with Q-Learning} \label{sec:analysis}

A key aspect of our framework is the presence of two RL processes that make decisions in a sequential order. In order to determine when to act the policy $\mathfrak{g}$ must learn the states to allow the policy $\pi$ to perform an action which the policy $\pi$ must learn to select optimal actions whenever it is allowed to execute an action. 
In this section, we prove that LICRA converges to an optimal solution of the system. Our proof is instantiated in a Q-learning variant of LICRA which serves as the natural basis for other extensions such as actor-critic methods.  We then extend the result to allow for (linear) function approximators. We provide a result that shows the optimal intervention times are characterised by an `obstacle condition' which can be evaluated online therefore allowing the $\mathfrak{g}$ policy to be computed online. 

Given a function $Q:\mathcal{S}\times\mathcal{A}\to\mathbb{R},\;\forall\pi,\pi'\in\Pi$ and $\forall\mathfrak{g}$, $\forall s_{\tau_k}\in\mathcal{S}$, we define the intervention operator $\mathcal{M}$ by $
\mathcal{M}^\pi[Q^{\pi',\mathfrak{g}}(s_{\tau_k},a)]:=\cR(s_{\tau_k},a_{\tau_k})-c(s_{\tau_k},a_{\tau_k})+\gamma\sum_{s'\in\mathcal{S}}P(s';a_{\tau_k},s)Q^{\pi',\mathfrak{g}}(s',a_{\tau_k})|a_{\tau_k}\sim\pi(\cdot|s_{\tau_k})$.\footnote{Note for the term $\cM Q$, the action input of the $Q$ function is a dummy variable decided by the operator $\cM$.}
% \mathcal{M}^\pi[Q^{\pi,\mathfrak{g}}(s_{\tau_k},a)]:=\underset{a'\in\cA}{\sup}\left\{\cR(s_{\tau_k},a')-c(s_{\tau_k},a')+\gamma\sum_{s'\in\mathcal{S}}P(s';a',s)Q^{\pi,\mathfrak{g}}(s',a')\right\}$,\footnote{Note for the term $\cM Q$, the action input of the $Q$ function is a dummy variable decided by the operator $\cM$.}
% where $\tau_k$ is an intervention time.
% For any function  $\psi:\mathcal{S}\times\mathcal{A}\to\mathbb{R}$,
% 
% 
% 
% 
The interpretation of $\mathcal{M}^{\pi'}[Q^{\pi,\mathfrak{g}}]$ is the following: suppose that at time $\tau_k$ the system is at a state $s_{\tau_k}$ and the agent performs an immediate action $a_{\tau_k}\sim\pi'(\cdot|s_{\tau_k})$. A cost of $c(s_{\tau_k},a_{\tau_k})$ is then incurred by the agent and the system transitions to $s'\sim P(\cdot;a_{\tau_k},s_{\tau_k})$ whereafter the agent executes the policy $(\pi,\mathfrak{g})$. 
% Lastly, recall $Q^{\pi,\mathfrak{g}}$ is the agent value function under the policy pair $(\pi,\mathfrak{g})$.
Therefore $\mathcal{M}^{\pi'} Q^{\pi,\mathfrak{g}}$ is the expected future stream of rewards after an immediate action minus the cost of action. This object plays a crucial role in the LICRA framework which as we later discuss, exploits the cost structure of the problem to determine when the agent should take an action. Denote by $\cM [Q^{\pi,\mathfrak{g}}]$ the intervention operator acting on $Q^{\pi,\mathfrak{g}}$ when the immediate action is chosen according to an epsilon-greedy policy.
Given any $v^{\pi,\mathfrak{g}}:\cS\to \mathbb{R}$, the Bellman operator $T$ is: 
% \begin{align}
% T v^{\pi,\mathfrak{g}}(s_t):=&\max\Big\{\mathcal{M}Q^{\pi,\mathfrak{g}}(s_t,a_t), \textcolor{red}{\underset{a\in\mathcal{A}}{\sup}}\;\left\{ \cR(s_t,a_t)+\gamma\sum_{s'\in\mathcal{S}}P(s';a,s_t)v^{\pi,\mathfrak{g}}(s')\right\}\Big\} \label{bellman_op}.
% \end{align}
% 
% \begin{align}
% T v^{\pi,\mathfrak{g}}(s_t):=&\max\Big\{\cR(s_{t},a_{t})-c(s_{t},a_{t})+\gamma\sum_{s'\in\mathcal{S}}P(s';a_{t},s)v^{\pi,\mathfrak{g}}(s'), \\
% &\textcolor{red}{\underset{a\in\mathcal{A}}{\sup}}\;\left\{ \cR(s_t,a_t)+\gamma\sum_{s'\in\mathcal{S}}P(s';0,s_t)v^{\pi,\mathfrak{g}}(s')\right\}\Big\} \label{bellman_op}.
% \end{align}
\begin{align}
T v^{\pi,\mathfrak{g}}(s):=&\max\Big\{\mathcal{M}[Q^{\pi,\mathfrak{g}}(s,a)], \cR(s,0)+\gamma\sum_{s'\in\mathcal{S}}P(s';0,s)v^{\pi,\mathfrak{g}}(s')\Big\},\qquad \forall s \in\cS. \label{bellman_op}
\end{align}
% 
% \begin{align}
% T v^{\pi,\mathfrak{g}}(s_t):=&\max\Big\{\textcolor{red}{\underset{a\in\mathcal{A}}{\sup}}\;\left\{\cR(s_{t},a_{t})-c(s_{t},a_{t})+\gamma\sum_{s'\in\mathcal{S}}P(s';a_{t},s)v^{\pi,\mathfrak{g}}(s')\right\}, \\
% &\cR(s_t,0)+\gamma\sum_{s'\in\mathcal{S}}P(s';0,s_t)v^{\pi,\mathfrak{g}}(s')\Big\}=\\
% &\max
% _{b \in\{0, 1\}} \textcolor{red}{\underset{a\in\mathcal{A}}{\sup}}\;\Bigl\{(\cR(s_{t},a)-c(s_{t},a) + \gamma\sum_{s'\in\mathcal{S}}P(s';a, s)v^{\pi,\mathfrak{g}}(s')) b + \\
% &(\cR(s_t,0)+\gamma\sum_{s'\in\mathcal{S}}P(s';0, s)v^{\pi,\mathfrak{g}}(s'))(1-b)\Bigl\}=\\
% &\max
% _{b \in\{0, 1\}, a\in\mathcal{A}} \;\Bigl\{(\cR(s_{t},a)-c(s_{t},a)) b + \cR(s_t,0)(1-b) +\\
% &\gamma\sum_{s'\in\mathcal{S}}(P(s';a, s) b + P(s';0, s) (1-b) )v^{\pi,\mathfrak{g}}(s'))\Bigl\}
% \label{bellman_op}.
% \end{align}

The Bellman operator captures the nested sequential structure of the LICRA method. In particular, the structure in \eqref{bellman_op} consists of an inner structure that consists of two terms: the first term $\mathcal{M}[Q^{\pi,\mathfrak{g}}]$ is the estimated expected return given the current best estimate of the optimal action is executed and the policy $\pi$ is executed thereafter. The second term $\cR(s,0)+\gamma\sum_{s'\in\cS} P(s',0,s) v^{\pi,\mathfrak{g}}(s')$ isolates the estimated expected return following no action and when the policy $\pi$ is executed thereafter. Lastly, the outer structure is an optimisation which compares the two quantities and selects the maximum.   Note that when $\cA$ is a continuous (dense) set, this reduces the burden on an optimisation (possibly over a function approximator) to extract the point $0$ in the action space when it is optimal not to act.   

Our first result proves $T$ is a contraction operator in particular, the following bound holds:
\begin{lemma}\label{lemma:bellman_contraction}
The Bellman operator $T$ is a contraction, that is the following bound holds:
\begin{align}\nonumber
&\left\|Tv-Tv'\right\|\leq \gamma\left\|v-v'\right\|,
\end{align}
\end{lemma}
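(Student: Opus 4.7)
The plan is the standard contraction argument for a max-type Bellman operator, reduced to a routine bound on each branch of the maximum via the elementary inequality $|\max(a,b) - \max(c,d)| \leq \max(|a-c|, |b-d|)$. I read the operator as acting in the sup norm $\|\cdot\|_\infty$ on bounded functions $v:\cS\to\mathbb{R}$, which is the natural setting for a discounted MDP.

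First I would decompose $Tv(s_t) = \max\{A_v(s_t), B_v(s_t)\}$, where the ``act'' branch $A_v(s_t) := \E_{a\sim\pi(\cdot|s_t)}\!\left[\cR(s_t,a) - c(s_t,a) + \gamma\sum_{s'\in\cS} P(s';a,s_t)\, v(s')\right]$ is obtained by unfolding $\mathcal{M}^{\pi,\mathfrak{g}}Q^{\pi,\mathfrak{g}}(s_t,a)$ with the understanding (from the definition in the paper) that $a$ is drawn from $\pi(\cdot|s_t)$, and the ``no-act'' branch is $B_v(s_t) := \cR(s_t,0) + \gamma\sum_{s'\in\cS} P(s';0,s_t)\, v(s')$. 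The reward and cost terms do not depend on $v$, so they cancel in the differences $A_v - A_{v'}$ and $B_v - B_{v'}$, leaving only the linear-in-$v$ next-state terms.

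Next I would bound each residual separately: $|A_v(s_t) - A_{v'}(s_t)| \le \gamma\, \E_{a\sim\pi(\cdot|s_t)}\!\left[\sum_{s'} P(s';a,s_t)\,|v(s') - v'(s')|\right] \le \gamma\, \|v - v'\|_\infty$, using that $P(\cdot;a,s_t)$ is a probability distribution on $\cS$ and that the outer expectation over $a$ preserves the bound. The same calculation applied to $B$ yields $|B_v(s_t) - B_{v'}(s_t)| \le \gamma\, \|v - v'\|_\infty$. Combining these via the max inequality gives $|Tv(s_t) - Tv'(s_t)| \le \gamma\, \|v - v'\|_\infty$ pointwise in $s_t$, and taking the supremum delivers the claimed contraction.

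The main obstacle, such as it is, is bookkeeping around the action $a$ appearing inside $\mathcal{M}^{\pi,\mathfrak{g}}$: the notation $\mathcal{M}^{\pi,\mathfrak{g}}Q^{\pi,\mathfrak{g}}(s_t,a)\big|_{a\sim\pi}$ must be interpreted as an expectation in order for the ``act'' branch to be a single-valued function of $v$. Once this is fixed --- which is natural given that the policy pair $(\pi,\mathfrak{g})$ is held fixed throughout the statement --- no further machinery is required, since the contraction constant $\gamma$ is inherited directly from the discount factor via a single application of the transition kernel inside each branch.
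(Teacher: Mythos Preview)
Your argument is correct and, in fact, cleaner than the route taken in the paper. The single elementary inequality $|\max(a,b)-\max(c,d)|\le\max(|a-c|,|b-d|)$ lets you bound $|Tv(s_t)-Tv'(s_t)|$ by the larger of the two branch differences, each of which reduces to $\gamma$ times an average of $|v-v'|$ because the reward and cost terms cancel. That is all that is needed, and the minimal-cost assumption on $c$ plays no role.

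By contrast, the paper proceeds through an explicit case analysis: it first shows that each branch separately contracts (your $A$ and $B$ bounds, essentially), and then treats the ``cross'' case where the two maxima select different branches as a third case, itself split into two sub-cases according to the sign of $\mathcal{M}^\pi\psi-(\cR+\gamma\max_a P^a\psi')$; in one of those sub-cases the lower bound on the intervention cost is invoked to close the estimate. Your use of the pairwise max inequality collapses all of this into a single line and shows that the cost lower bound is not actually needed for the contraction property. Your reading of the $a\sim\pi$ notation as an expectation is also the right way to make the act branch a well-defined function of $v$; the paper's own notation here is somewhat loose, and it effectively treats this the same way inside its case~ii).
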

where $v,v'$ are elements of a finite normed vector space.
% Lemma \ref{lemma:bellman_contraction} establishes the contraction property of the Bellman operator for the problem. 
We can now state our first main result:

\begin{theorem}\label{theorem:existence}
Given any $v^{\pi,\mathfrak{g}}:\mathcal{S}\times\mathcal{A}\to\mathbb{R}$,  the optimal value function is given by $\underset{k\to\infty}{\lim}T^kv^{\pi,\mathfrak{g}}=\underset{{\hat{\pi},\hat{g}}\in\Pi}{\max}v^{\hat{\pi},\hat{g}}=v^{\pi^\star,g^\star}$ where $(\pi^\star,g^\star)$ is the optimal policy pair.
\end{theorem}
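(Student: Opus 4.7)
The plan has two stages. First, I would invoke the Banach fixed point theorem for $T$ using Lemma \ref{lemma:bellman_contraction}. The space of bounded functions $\cS\to\R$ equipped with the sup-norm is a complete metric space, and $T$ is a $\gamma$-contraction with $\gamma\in[0,1)$; Banach's theorem therefore furnishes a unique fixed point $v^\star$ of $T$ together with the convergence $T^k v^{\pi,\mathfrak{g}}\to v^\star$ from any starting value $v^{\pi,\mathfrak{g}}$. This immediately establishes existence of the limit appearing in the statement, so the remaining work is to identify $v^\star$ with $\max_{\hat\pi,\hat g} v^{\hat\pi,\hat g}$.

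The second stage carries out this identification by the usual two-sided Bellman argument. For the upper bound, I would observe that for every admissible pair $(\pi,g)$ the induced value function satisfies a policy-specific Bellman consistency equation $v^{\pi,g}=T^{\pi,g} v^{\pi,g}$, where $T^{\pi,g}$ replaces the outer $\max$ in \eqref{bellman_op} by the branch dictated by $g$. Because $T v\geq T^{\pi,g} v$ pointwise, monotonicity of $T$ followed by iteration yields $T^k v^{\pi,g}\geq v^{\pi,g}$ for every $k$, and passing to the limit using Stage 1 gives $v^\star\geq v^{\pi,g}$. For the matching lower bound, I would define greedy policies $(\pi^\star,g^\star)$ with respect to the fixed point: $g^\star$ selects the intervention branch on exactly the set where $\mathcal{M}^{\pi^\star,g^\star}Q^{\pi^\star,g^\star}(s,\cdot)\geq \cR(s,0)+\gamma\sum_{s'}P(s';0,s)v^\star(s')$, and $\pi^\star$ realises the arg-max inside $\mathcal{M}$. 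By construction $v^{\pi^\star,g^\star}$ satisfies $T v^{\pi^\star,g^\star}=v^{\pi^\star,g^\star}$, so the uniqueness of the fixed point forces $v^{\pi^\star,g^\star}=v^\star$, closing both inequalities.

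The hard part will be the lower-bound construction. One must verify that the greedy selectors $\pi^\star,g^\star$ are well defined (requiring a measurable-selection argument when $\cA$ is uncountable) and that the induced value function really coincides with $v^\star$ rather than being merely dominated by it. A subtle point is that $\mathcal{M}^{\pi,\mathfrak{g}}$ is written with the action sampled from $\pi$, so the argmax step requires either reinterpreting the operator as a maximisation over the choice of $\pi$ or first establishing optimality over deterministic policies and extending by convexity to stochastic ones. The remaining ingredients, namely contraction, monotonicity, and the exchange of the limit with the finite-horizon unroll, are routine once this operator-level framework is fixed.
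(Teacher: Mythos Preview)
Your proposal is correct and follows essentially the same approach as the paper: the paper's proof of Theorem~\ref{theorem:existence} consists of establishing the contraction property of $T$ (Lemma~\ref{lemma:bellman_contraction}) and then appealing to the Banach fixed point theorem, which is exactly your Stage~1. In fact, your outline is more complete than the paper's, since the paper does not explicitly carry out your Stage~2 identification of the fixed point with $\max_{\hat\pi,\hat g} v^{\hat\pi,\hat g}$; it treats this as standard once the contraction is in hand, whereas you spell out the two-sided Bellman argument and correctly flag the measurable-selection and operator-interpretation subtleties that the paper leaves implicit.
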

The result of Theorem \ref{theorem:existence} enables the solution to the agent's impulse control problem to be determined using a value iteration procedure.
Moreover, Theorem \ref{theorem:existence} enables a Q-learning approach \citep{bertsekas2012approximate} for finding the solution to the agent's problem.

\begin{theorem}\label{theorem:q_learning}
Consider the following Q-learning variant:
\begin{align}\nonumber
    Q_{t+1}&(s_t,a_t)=Q_{t}(s_t,a_t)
\\&+\alpha_t(s_t,a_t)\left[\max\left\{\mathcal{M}[Q_t(s_t,a_t)], \cR(s_t,0)+\gamma\underset{a'\in\mathcal{A}}{\sup}\;Q_t(s_{t+1},a')\right\}-Q_{t}(s_t,a_t)\right],\label{q_learning_update}
\end{align}
then $Q_t$ converges to $Q^\star$ with probability $1$, where $s_t,s_{t+1}\in\cS$ and $a_t\in\cA$.
\end{theorem}

We now extend the result to (linear) function approximators:

\begin{theorem}\label{primal_convergence_theorem}
Given a set of linearly independent basis functions $\Phi=\{\phi_1,\ldots,\phi_p\}$ with $\phi_k\in L_2,\forall k$. LICRA converges to a limit point $r^\star\in\mathbb{R}^p$ which is the unique solution to  $\Pi \mathfrak{F} (\Phi r^\star)=\Phi r^\star$ where
    $\mathfrak{F}Q:=\cR+\gamma P \max\{\mathcal{M}[Q],Q\}$ and  $r^\star$ satisfies: $
    \left\|\Phi r^\star - Q^\star\right\|\leq (1-\gamma^2)^{-1/2}\left\|\Pi Q^\star-Q^\star\right\|$.
\end{theorem}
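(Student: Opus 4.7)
\begin{proof_of}{Theorem \ref{primal_convergence_theorem} (Proposal)}
The plan is to adapt the Tsitsiklis--Van Roy style of analysis for temporal difference learning with linear function approximation to the nested Bellman operator $\mathfrak{F}$ of LICRA. The key object is the projected operator $\Pi \mathfrak{F}$, where $\Pi$ is the orthogonal projection onto the span of $\Phi$ under a weighted $L_2$ norm $\|\cdot\|_D$ induced by the stationary distribution of the behaviour policy. The overall strategy is to (i) show that $\mathfrak{F}$ is a $\gamma$-contraction in $\|\cdot\|_D$; (ii) deduce that $\Pi \mathfrak{F}$ is also a $\gamma$-contraction; (iii) appeal to the Banach fixed point theorem to obtain existence and uniqueness of $r^\star$; and finally (iv) derive the approximation error bound via a Pythagorean decomposition.

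For step (i), I would first record the pointwise inequality $|\max\{a_1,b_1\} - \max\{a_2,b_2\}| \leq \max\{|a_1-a_2|,|b_1-b_2|\}$, and then apply it coordinatewise to $\max\{\mathcal{M}\Lambda, \Lambda\}$. Next, I would use that the intervention operator $\mathcal{M}$ is itself a one-step discounted Bellman-like operation, so that $|\mathcal{M}\Lambda_1 - \mathcal{M}\Lambda_2| \leq \gamma P^{\pi}|\Lambda_1 - \Lambda_2|$ pointwise (where $P^{\pi}$ is the induced transition kernel and the cost cancels in the difference). Combined with the standard fact that the transition kernel $P$ is non-expansive in the stationary-distribution weighted norm, this yields
\begin{equation*}
\|\mathfrak{F}\Lambda_1 - \mathfrak{F}\Lambda_2\|_D \leq \gamma \|P\|_D \cdot \|\max\{\mathcal{M}\Lambda_1,\Lambda_1\} - \max\{\mathcal{M}\Lambda_2,\Lambda_2\}\|_D \leq \gamma \|\Lambda_1-\Lambda_2\|_D.
\end{equation*}
Since orthogonal projections are non-expansive in the norm they project under, $\Pi \mathfrak{F}$ inherits the $\gamma$-contraction property, so the Banach fixed point theorem produces a unique $r^\star$ solving $\Pi \mathfrak{F}(\Phi r^\star) = \Phi r^\star$. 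For the algorithmic convergence of LICRA itself to this fixed point, I would invoke the standard stochastic approximation argument (Robbins--Monro / ODE method) using the contraction property and the usual Robbins--Monro stepsize conditions on $\alpha$, analogously to how one lifts the contraction of Theorem \ref{theorem:q_learning} to the function-approximation setting.

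For the error bound in step (iv), the plan is the classical decomposition: since $\Phi r^\star - Q^\star$ lies in the direct sum of the range of $\Phi$ and its orthogonal complement (relative to $\|\cdot\|_D$), Pythagoras gives
\begin{equation*}
\|\Phi r^\star - Q^\star\|_D^2 = \|\Phi r^\star - \Pi Q^\star\|_D^2 + \|\Pi Q^\star - Q^\star\|_D^2.
\end{equation*}
Using $\Phi r^\star = \Pi \mathfrak{F}(\Phi r^\star)$ and $Q^\star = \mathfrak{F} Q^\star$ (together with $\Pi$ non-expansive and $\mathfrak{F}$ a $\gamma$-contraction), the first term on the right satisfies $\|\Phi r^\star - \Pi Q^\star\|_D = \|\Pi \mathfrak{F}(\Phi r^\star) - \Pi \mathfrak{F} Q^\star\|_D \leq \gamma \|\Phi r^\star - Q^\star\|_D$. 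Substituting back and solving for $\|\Phi r^\star - Q^\star\|_D$ yields exactly the claimed factor $(1-\gamma^2)^{-1/2}$.

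The main obstacle I anticipate is step (i), specifically handling the interaction between the intervention operator $\mathcal{M}$ and the outer $\max$ in a weighted $L_2$ norm rather than the sup norm used in Lemma \ref{lemma:bellman_contraction}. Unlike the sup norm argument, the weighted norm does not commute with pointwise maxima, so one must carefully use the pointwise Lipschitz inequality on $\max$ before taking the weighted norm, and then verify that the kernel $P$ appearing inside $\mathcal{M}$ and the kernel $P$ appearing outside it are both non-expansive under the same weighting. If the behaviour distribution is only $\varepsilon$-approximately stationary, one may need to absorb a small error term; otherwise the argument closes cleanly and delivers the claimed contraction modulus.
\end{proof_of}
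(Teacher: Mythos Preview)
Your proposal follows essentially the same Tsitsiklis--Van Roy template as the paper: the paper also proves that $\mathfrak{F}$ (and hence $\Pi\mathfrak{F}$) is a $\gamma$-contraction, obtains the error bound via exactly the Pythagorean decomposition you describe in step (iv), and then establishes algorithmic convergence by stochastic approximation (specifically, it recasts the update as $r_{t+1}=r_t+\alpha_t\Xi(w_t,r_t)$, proves the descent condition $(r-r^\star)^\top\boldsymbol{\Xi}(r)<0$ via the identity $\boldsymbol{\Xi}_k(r)=\langle\phi_k,\mathfrak{F}\Phi r-\Phi r\rangle$, and verifies the hypotheses of a theorem of Benveniste--M\'etivier--Priouret).

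The one place your sketch diverges from the paper---and where there is a real difficulty---is step (i). Your pointwise inequality $|\max\{a_1,b_1\}-\max\{a_2,b_2\}|\le\max\{|a_1-a_2|,|b_1-b_2|\}$ together with $|\mathcal{M}\Lambda_1-\mathcal{M}\Lambda_2|\le\gamma P^\pi|\Lambda_1-\Lambda_2|$ yields, in a weighted $L_2$ norm, only
\[
\bigl\|\max\{\mathcal{M}\Lambda_1,\Lambda_1\}-\max\{\mathcal{M}\Lambda_2,\Lambda_2\}\bigr\|_D\le\sqrt{1+\gamma^2}\,\|\Lambda_1-\Lambda_2\|_D,
\]
since the pointwise maximum of two functions whose $\|\cdot\|_D$-norms are each bounded by $\|\Lambda_1-\Lambda_2\|_D$ need not itself have norm bounded by $\|\Lambda_1-\Lambda_2\|_D$. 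This gives $\mathfrak{F}$ modulus $\gamma\sqrt{1+\gamma^2}$, which is not a contraction for $\gamma$ close to $1$. The paper handles this differently: in addition to $\|\mathcal{M}Q-\mathcal{M}Q'\|\le\gamma\|Q-Q'\|$, it uses the ``cross'' bound $\|\mathcal{M}Q-Q'\|\le\gamma\|Q-Q'\|$ (note: $Q'$, not $\mathcal{M}Q'$, on the left), which it inherits from Case~2 of the proof of Lemma~\ref{lemma:bellman_contraction} and which relies essentially on the cost $c$ being bounded below by a strictly positive constant. With that extra ingredient all four cross-terms arising from the two-by-two $\max$ are controlled by $\gamma\|Q-Q'\|$, and the $\gamma$-contraction of $\mathfrak{F}$ follows. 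So your flagged obstacle is real, but the resolution is not merely checking non-expansiveness of two kernels under the same weighting; it is the minimally-bounded cost structure of the impulse problem, which you do not invoke.
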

The theorem establishes the convergence of the LICRA Q-learning variant to a stable point with the use of linear function approximators. The second statement bounds the proximity of the convergence point by the smallest approximation error that can be achieved given the choice of basis functions. 

Having constructed a procedure to find the optimal agent's optimal value function, we now seek to determine the conditions when an intervention should be performed. Let us denote by $\{\tau_k\}_{k\geq 0}$ the points at which the agent decides to act or \textit{intervention times}, so for example if the agent chooses to perform an action at state $s_6$ and again at state $s_8$, then $\tau_1=6$ and $\tau_2=8$. The following result  characterises the optimal intervention policy $\mathfrak{g}$ and the optimal times $\{\tau_k\}_{k\geq 0}$.

\begin{proposition}\label{prop:switching_times}
The policy $\mathfrak{g}$ is given by: $\mathfrak{g}(s_t)=H(\mathcal{M}^{\pi}[Q^{\pi,\mathfrak{g}}]- Q^{\pi,\mathfrak{g}})(s_t,a_t),\;\;\forall s_t\in\mathcal{S}$, where $Q^{\pi,\mathfrak{g}}$ is the solution in Theorem \ref{theorem:existence}, $\mathcal{M}$ is the intervention operator and $H$ is the Heaviside function, moreover the intervention times 
% $\{\tau_k\}_{k\geq 1}$ 
are $\tau_k=\inf\{\tau>\tau_{k-1}|\mathcal{M}^{\pi}[Q^{\pi,\mathfrak{g}}]= Q^{\pi,\mathfrak{g}}\}$. 
\end{proposition}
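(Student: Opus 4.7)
The plan is to exploit the fixed-point equation furnished by Theorem \ref{theorem:existence} together with the two-branch structure of the Bellman operator \eqref{bellman_op}. Write $Q^\star := Q^{\pi^\star,\mathfrak{g}^\star}$ for the optimal action-value function. By Theorem \ref{theorem:existence}, $T^k v \to v^\star$, and consequently $Q^\star$ solves
\[
Q^\star(s_t,a_t) \;=\; \max\!\Bigl\{\mathcal{M}^{\pi^\star,\mathfrak{g}^\star}Q^\star(s_t,a_t),\; \cR(s_t,0)+\gamma\sum_{s'\in\cS}P(s';0,s_t)\,v^\star(s')\Bigr\}.
\]
Denoting the second argument by $C^\star(s_t)$ (the continuation value), the maximum identity yields $\mathcal{M}^{\pi^\star,\mathfrak{g}^\star}Q^\star(s_t,a_t)\le Q^\star(s_t,a_t)$ everywhere, with equality precisely on the intervention region $\mathcal{I}:=\{s\in\cS:\mathcal{M}^{\pi^\star,\mathfrak{g}^\star}Q^\star(s,a)=Q^\star(s,a)\}$.

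From here the first claim follows by a greedy-policy argument: any optimal $\mathfrak{g}^\star$ must, at every state, select whichever branch of the Bellman equation attains the maximum. Encoding this as a binary decision, $\mathfrak{g}^\star(s_t)=1$ on $\mathcal{I}$ and $\mathfrak{g}^\star(s_t)=0$ on $\cS\setminus\mathcal{I}$. Because $\mathcal{M}^{\pi^\star,\mathfrak{g}^\star}Q^\star - Q^\star\le 0$ pointwise, taking the Heaviside function $H$ (with convention $H(0)=1$, $H(x)=0$ for $x<0$) gives exactly $\mathfrak{g}^\star(s_t)=H(\mathcal{M}^{\pi^\star,\mathfrak{g}^\star}Q^\star - Q^\star)(s_t,a_t)$, which is the advertised formula. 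Any measurable tie-breaking on $\partial\mathcal{I}$ leaves $v^\star$ unchanged since both branches attain the same value.

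For the intervention times, I would combine the stationary greedy rule above with the Markov property. Conditional on the previous intervention occurring at $\tau_{k-1}$, on the event $\{s_\tau\notin\mathcal{I}\}$ the optimal $\mathfrak{g}^\star$ plays $0$ (no action), so the next intervention is triggered as soon as the trajectory enters $\mathcal{I}$; this is precisely $\tau_k=\inf\{\tau>\tau_{k-1}:\mathcal{M}^{\pi^\star,\mathfrak{g}^\star}Q^\star(s_\tau,a_\tau)=Q^\star(s_\tau,a_\tau)\}$. Optimality of this hitting time admits a short verification argument: intervening strictly earlier, at a state where $\mathcal{M}^{\pi^\star,\mathfrak{g}^\star}Q^\star<Q^\star$, strictly decreases the value since the non-intervention branch dominates there; while deferring past $\tau_k$ would violate the Bellman identity at $s_{\tau_k}$ and therefore cannot be optimal either. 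Combining these two directions pins the intervention times down uniquely (up to ties on $\partial\mathcal{I}$).

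The main obstacle I anticipate is the apparent circularity: $\mathcal{M}^{\pi^\star,\mathfrak{g}^\star}$ itself depends on the very policy pair being characterised. I would resolve this by invoking consistency at the fixed point — Theorem \ref{theorem:existence} already guarantees existence of a triple $(Q^\star,\pi^\star,\mathfrak{g}^\star)$ solving the coupled system, and the greedy selection above is exactly the $\mathfrak{g}$ that renders the system self-consistent. A secondary subtlety is measurability of the hitting time and the treatment of ties on $\partial\mathcal{I}$; standard arguments using the right-continuity of the filtration generated by $\{s_t\}$ and the closedness of $\mathcal{I}$ (inherited from continuity of $Q^\star$ in the linear-approximation setting of Theorem \ref{primal_convergence_theorem}) suffice.
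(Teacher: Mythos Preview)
Your proposal is correct and reaches the same conclusion, but the route differs from the paper's. The paper argues by contradiction at the trajectory level: it posits an alternative optimal first intervention time $\tau'_1>\tau_1$ (where $\tau_1$ is the first hitting time of the obstacle set $\{\mathcal{M}Q=Q\}$), expands $v^{\pi'}$ as a nested conditional expectation of rewards up to $\tau'_1$, and then shows that replacing the tail beyond $\tau_1$ by $\mathcal{M}v$ and invoking the definition of $T$ produces a strictly larger value, contradicting optimality of $\pi'$; the case $\tau'_1<\tau_1$ is dispatched symmetrically. You instead work top-down from the Bellman fixed point: the identity $Q^\star=\max\{\mathcal{M}Q^\star,C^\star\}$ immediately forces $\mathcal{M}Q^\star\le Q^\star$, the greedy branch-selector is then exactly the Heaviside rule, and the hitting-time formula follows from a short two-sided verification. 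Both arguments pivot on the same comparison (acting where $\mathcal{M}Q^\star<Q^\star$ is dominated; deferring past the obstacle set forfeits value), but the paper makes this explicit via the trajectory expansion while you invoke the abstract greedy-policy principle. Your presentation is cleaner and closer to the classical verification style in optimal stopping; the paper's is more self-contained in that it does not appeal to a separate greedy-policy theorem, at the cost of heavier bookkeeping in the nested expectations.
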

 Prop. \ref{prop:switching_times} characterises the (categorical) distribution $\mathfrak{g}$. Moreover, given the function $Q$, the times $\{\tau_k\}$ can be determined by evaluating if $\mathcal{M}[Q]=Q$ holds. A key aspect of Prop. \ref{prop:switching_times} is that it exploits the cost structure of the problem to determine when the agent should perform an intervention. In particular, the equality $\mathcal{M}[Q]=Q$ implies that performing an action is optimal. 

\section{Budget Augmented LICRA via State Augmentation}\label{sec:budget}
% \begin{definition}\label{defn:constrained_mdp}
% The constrained MDP (c-MDP) is a tuple $\cM_c = \langle  \cS, \cA, \cP, R, \gamma, G, \gamma_G \rangle$ where the additional terms are a cost function $G:\cS \times \cA \rightarrow [0, +\infty)$ and its discount factor $\gamma \in(0, 1)$. The problem is now given by the following for any $s\in\mathcal{S}$:
% \begin{subequations}
% \begin{align}
%         \max\limits_{\pi\in\Pi,g}~& v^{\pi,\mathfrak{g}}(s) 
%         \label{prob:original_objective}\\
%      \text{s. t. }\qquad & d - \sum\limits_{t=0}^\infty   G(s_t, a_t)\geq 0, \;\;\;\; a.s.,  \label{prob:constraint}
% \end{align}\label{prob:average_constrained_mdp}
% \end{subequations}
% where $d\in\mathbb{R}$ is a fixed value that represents the agent's initial budget.  
% \end{definition}

% We follow~\citep{sootla2022saute} and augment the state-space as follows:
% where the variable $z_t$ is such that 
% \begin{equation}
%     % \gamma_G^t z_t = d - \sum\limits_{k=0}^{t-1}  \gamma_{G}^k G(s_k, a_k)
%     z_t = d - \sum\limits_{k=0}^{t-1} G(s_k, a_k)
% \end{equation}
% and has the interpretation of the remaining safety budget at time $t$. Indeed, if $z_t = 0$ for some time $t$ then any further accumulation of cost $G(s_k, a_k)$ for $k \ge t$ will result in the constraint violation.

% \textbf{Example: RL with a Capped Number of Actions}

We now tackle the problem of RL with a budget. To do this, we combine the above impulse control technology with state augmentation technique proposed in~\citep{sootla2022saute} 
The mathematical formulation of the problem is now given by the following for any $s\in\mathcal{S}$:
% \begin{subequations}
\begin{align}
        \max\limits_{\pi\in\Pi,g}~& v^{\pi,\mathfrak{g}}(s)\;\; 
        \text{s. t. } n - \sum^\infty_{t=0}\sum_{k\geq 1} \delta^t_{\tau_k}\geq 0,   \label{capped_problem}
\end{align}
% \begin{align}\label{prob:average_constrained_mdp}
%         \max\limits_{\pi\in\Pi,g}~& v^{\boldsymbol{\pi},\mathfrak{g}}_G(s) 
%         \label{prob:original_objective}\\
%      \text{s. t. }\qquad & n - \sum^\infty_{t=0}\sum_{k\geq 1} \delta^t_{\tau_k}\geq 0,   \label{prob:constraint}
% \end{align}\label{prob:average_constrained_mdp}
% \end{subequations}
where $n\in \mathbb{N}$ is a fixed value that represents the maximum number of allowed interventions and $\sum_{k\geq 1}\delta^t_{\tau_k}$ is equal to one if an impulse was applied at time $t$ and zero if it was not.  In order to avoid dealing with a constrained MDP, we propose to introduce a new variable $z_t$ tracking the remaining number of impulses: $z_t = n - \sum_{i=0}^{t-1}\sum_{k\geq 1} \delta^i_{\tau_k}$.   
% \begin{equation}
%     z_t = n - \sum_{i=0}^{t-1}\sum_{k\geq 1} \delta^i_{\tau_k}.
% \end{equation}
We treat $z_t$ as another state and augment the state-space resulting in the transition $\widetilde \cP$:
\begin{smalleralign}\label{impulse_saute_mdp}
    % \begin{aligned}
        s_{t+1} \sim P(\cdot | s_t, a_t),\qquad
        z_{t+1} = z_t - \sum_{k\geq 1}\delta^t_{\tau_k} , \quad z_0 = n.
    % \end{aligned}
\end{smalleralign}
To avoid violations, we reshape the reward as follows: $
    \widetilde \cR(s_t, z_t, a_t) = \begin{cases}
        \cR(s_t, a_t) & z_t \ge 0 ,\\
        -\Delta & z_t <0,
    \end{cases}$
where $\Delta>0$ is a large enough hyper-parameter ensuring there are no safety violations. To summarise, we aim to solve the following budgeted impulse control (BIC) problem $v^{\boldsymbol{\pi},\mathfrak{g}}(s, z)  = \mathbb{E}\left[\sum_{t=0}^\infty \gamma^t \widetilde \cR(s_t, z_t, a_t)|a_t\sim\boldsymbol{\pi}(\cdot|s_t, z_t)\right]$,
% \begin{smalleralign}\label{saute_impulse_problem}
    % v^{\pi,\mathfrak{g}}(s, z)  = \mathbb{E}\left[\sum_{t=0}^\infty \gamma^t \widetilde \cR(s_t, z_t, a_t)|a_t\sim\pi(\cdot|s_t, z_t)\right],
% \end{smalleralign}
where the policy now depends on the variable $z_t$. Note $\widetilde \cP$ in Equation~\ref{impulse_saute_mdp} is a Markov process and, the rewards $\widetilde \cR$ are bounded given the boundedness of $\cR$. Therefore, we can apply directly the results for impulse control. We denote the augmented MDP by $\widetilde M = \langle\cS\times\cZ, \cA, \widetilde \cP, \widetilde R, \gamma\rangle$, where $\cZ$ is the space of the augmented state. We have the following: 
\begin{theorem} \label{thm:optimal_policy} Consider 
the MDP $\widetilde M$  for the BIC problem, then: 

a) The Bellman equation holds, i.e. $\exists\tilde{v}^{\ast,\boldsymbol{\pi},\mathfrak{g}}:\cS\times \cZ\to \mathbb{R}$ such that $\tilde{v}^{\ast,\boldsymbol{\pi},\mathfrak{g}}(s, z) = \max\limits_{\bma \in \cA}\left(\widetilde \cR(s, z, \bma)  + \gamma \mathbb{E}_{s', z' \sim\cP}\left[ \tilde{v}^{\ast,\boldsymbol{\pi},\mathfrak{g}}(s', z')\right] \right)$,  where the optimal policy for $\widetilde M$ has the form $\pi^{\ast}(\cdot | s, z)$; b) Given a $\widetilde{v}:\cS\times \cZ\to \mathbb{R}$, the stable point solution for $\widetilde{\cM}$ is given by $
\underset{k\to\infty}{\lim}\tilde{T}^k\widetilde{v}^{\boldsymbol{\pi},g}=\underset{{\boldsymbol{\hat{\pi}}}\in\boldsymbol{\Pi},\hat{\mathfrak{g}}}{\max}\widetilde v^{\boldsymbol{\hat{\pi}},\hat{\mathfrak{g}}}=\widetilde v^{\boldsymbol{\pi\ast},\mathfrak{g}^{\ast}}$, where $(\boldsymbol{\pi}^{\boldsymbol{\ast}},\mathfrak{g}^{\ast})$ is an optimal policy and $\tilde{T}$ is the Bellman operator of $\widetilde M$.
\end{theorem}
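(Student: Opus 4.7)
The plan is to reduce the budget-constrained problem to an ordinary impulse-control problem on the augmented MDP $\widetilde{\cM}$, and then invoke the convergence machinery already established in Lemma \ref{lemma:bellman_contraction} and Theorem \ref{theorem:existence}. In other words, the whole point of the state-augmentation construction in Section \ref{sec:budget} is to make the budget bookkeeping part of the Markov state, so the theorem should follow without any genuinely new analysis, provided the hypotheses of the earlier results transfer cleanly.

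First I would verify that $\widetilde{\cM}=\langle\cS\times\cZ,\cA,\widetilde{\cP},\widetilde{\cR},\gamma\rangle$ satisfies the hypotheses of Theorem \ref{theorem:existence}. The augmented transition factorises as $s_{t+1}\sim P(\cdot\mid s_t,a_t)$ together with the deterministic update $z_{t+1}=z_t-\sum_{k\ge 1}\delta^t_{\tau_k}$, the latter being a function of $z_t$ and of whether an impulse was applied at time $t$. Hence the joint process $(s_t,z_t)$ is Markov on $\cS\times\cZ$ under any pair $(\pi(\cdot\mid s,z),\mathfrak{g}(\cdot\mid s,z))$. The reshaped reward $\widetilde{\cR}$ is bounded since it equals $\cR$ on $\{z\ge 0\}$ (where $\cR$ is bounded by assumption) and equals $-\Delta$ on $\{z<0\}$, so $\|\widetilde{\cR}\|_\infty\le\max(\|\cR\|_\infty,\Delta)<\infty$.

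Next I would define the Bellman operator $\tilde{T}$ on bounded functions $\tilde v:\cS\times\cZ\to\mathbb{R}$ by direct analogy with \eqref{bellman_op}, replacing $s$ by $(s,z)$, $\cR$ by $\widetilde{\cR}$, $P$ by $\widetilde{\cP}$ and $\mathcal{M}$ by the corresponding intervention operator on the augmented state (which deducts one unit from $z$ upon an impulse). Repeating verbatim the proof of Lemma \ref{lemma:bellman_contraction} — which only uses $\gamma<1$ and non-expansiveness of the $\max$ operator — yields $\|\tilde T\tilde v-\tilde T\tilde v'\|\le\gamma\|\tilde v-\tilde v'\|$. Banach's fixed point theorem on the Banach space of bounded functions $\cS\times\cZ\to\mathbb{R}$ then produces a unique fixed point $\tilde v^{\ast,\boldsymbol{\pi},\mathfrak{g}}$ satisfying the Bellman equation stated in part (a). Because $\widetilde{\cP}$ and $\widetilde{\cR}$ depend only on the current augmented state $(s,z)$ and the action, the standard principle-of-optimality argument (reducing history-dependent policies to Markovian ones on the enlarged state) shows that an optimal policy of the form $\pi^{\ast}(\cdot\mid s,z)$ attains the supremum, which is exactly the form claimed. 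Part (b) is then immediate from Theorem \ref{theorem:existence} applied to $\widetilde{\cM}$: iteration of the contraction $\tilde T$ from any starting $\tilde v$ converges geometrically to the unique fixed point, which coincides with $\tilde v^{\ast,\boldsymbol{\pi},\mathfrak{g}^{\ast}}$.

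The main subtlety — and the step that requires the most care — is justifying that the soft-constraint reshaping via the penalty $-\Delta$ actually enforces the hard budget constraint of \eqref{capped_problem} almost surely. Concretely, one must argue that for $\Delta$ larger than a threshold determined by the original reward bound and the discount factor (e.g.\ $\Delta>\|\cR\|_\infty/(1-\gamma)$), any policy that enters $\{z<0\}$ with positive probability is strictly dominated by a policy that plays the null action from that point onward, so every optimal policy of $\widetilde{\cM}$ respects the budget with probability one. Once this domination argument is in place, the rest of the proof is simply a transcription of Lemma \ref{lemma:bellman_contraction} and Theorem \ref{theorem:existence} to the enlarged state space $\cS\times\cZ$.
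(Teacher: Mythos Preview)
Your proposal is correct and follows essentially the same approach as the paper. The paper does not give a separate proof of this theorem in the appendix; its entire justification is the inline remark in Section~\ref{sec:budget} that $\widetilde{\cP}$ is Markov and $\widetilde{\cR}$ is bounded whenever $\cR$ is, ``therefore, we can apply directly the results for impulse control to this case as well.'' Your first two paragraphs simply spell this reduction out in more detail than the paper does. One minor point: your final paragraph, about choosing $\Delta$ large enough so that the soft penalty enforces the hard budget constraint almost surely, is not part of the theorem statement as written (parts (a) and (b) concern only the Bellman equation and value-iteration convergence for $\widetilde{\cM}$, not feasibility of the original constrained problem~\eqref{capped_problem}); it is relevant to the surrounding discussion but is extra work beyond what the theorem itself requires.
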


The result has several important implications. The first is that we can use a modified version of LICRA to obtain the solution of the problem while guaranteeing convergence (under standard assumptions). Secondly, our state augmentation procedure admits a Markovian representation of the optimal policy. 

% \textbf{Key steps}
% \begin{itemize}
%     \item Rewrite MDP (as in Saute)
%     \item Derive DPP and value iteration result \AS{There is a bit of a catch here with the average constrained problem. I think if we consider the episodic setting then all we need to do is to reshape the end value function, i.e., $v$ at time $T$. However, in this case the policy will technically be time dependent as the value function is non-stationary as well. Which route do you think we should take?}
% \end{itemize}

\section{Experiments} \label{sec:experiments}
% (\url{https://github.com/huawei-noah/HEBO/tree/master/SAUTE})
% Implementation: Julius Thurs/Friday (TBC)
We will now study empirically the performance of the LICRA framework. In experiments, we use different instances of LICRA, one where both policies are trained using PPO update (referred to as \textbf{LICRA\_PPO}) and one where the policy deciding whether to act is trained using SAC and the other policy trained with PPO (referred to as \textbf{LICRA\_SAC}). We have benchmarked both of these algorithms together with common baselines on environments, where it would be natural to introduce the concept of the cost associated with actions. We performed a series of ablation studies (see Appendix) which examine \textbf{i)} LICRA's ability to handle different cost functions including the case when $c(s,a)\equiv 0$. \textbf{ii)} the benefits of LICRA in settings with a varying intervention regions. \textbf{iii)} the performance of the LICRA Q-learning variant in a discrete state space environment. In the following experiments, all algorithms apply actions that are drawn from an action set which contains a $0$ action. When this action is executed it produces  costs and exerts no influence on the transition dynamics.  
% ensemble SAC \citep{eysenbach2018leave}, risk sensitive Q learning \citep{geibel2005risk}, SNO-MDP \citep{wachi2020safe}. 
% 
% 
% \textbf{Julius}
% \begin{itemize}
%     \item Divide assets over a portfolio of stocks (e.g. Stock A, B, C w/ differing means and std devs) and Risk-free assets (e.g. Cash).
%     \item Stock values evolve over time following Brownian motion.
%     \item An action is a proposed portfolio allocation (e.g. 30\% Stock A, 30\% Stock B, 30\% Stock B, 10\% Cash)
%     \item If portfolio allocation changes from the last state, the player incurs a cost for changing allocation.
%     \item Therefore can learn to keep a portfolio as is to not incur cost.
%     \item Can be utilised in both settings, unlimited budget in terms of ability to pay for asset costs. Or costs come out against total assets and therefore the total budget is the portfolio value.
% \end{itemize}
% 
% 
% 
\begin{figure}[h!]
%  	\centering
%  	\begin{subfigure}{.4\textwidth}
\vspace{-1.7 mm}
  \centering
   \includegraphics[width=0.42\linewidth]{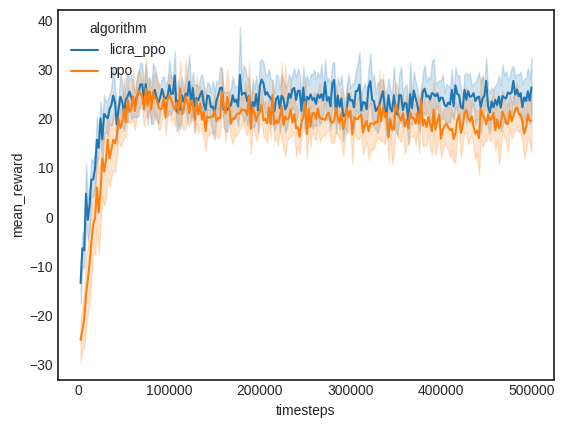}
   \caption{Training results in Merton investment problem for PPO style algorithms. (Confidence intervals over 20 seeds). Average final performance was  26.40 for LICRA\_PPO and 19.60 for PPO.}
   \label{fig:merton}
%  \end{subfigure}%
%  	\begin{subfigure}{.4\textwidth}
%   \centering
%   \includegraphics[width=\linewidth]{}
%   \caption{Training results in Merton investment problem for SAC style algorithms.}
   \label{fig:merton_budget}
%  \end{subfigure}
\vspace{-2 mm}
\end{figure}
% 
% \newline
\textbf{Merton's Portfolio Problem with Transaction Costs.} To exemplify the problem in an applied setting, we choose a well-known problem in finance known as the Merton Investment Problem with transaction costs \citep{davis1990portfolio}. In this simple environment, the agent can decide to move its wealth between a risky asset and a risk-free asset. The agent receives a reward only at the final step, equal to the utility of the portfolio with a risk aversion factor equal to $0.5$. If the final wealth of risky asset is $s_T$ and final wealth of risk-free asset is $c_T$, then the agent will receive a reward of $ u(x) = 2 \sqrt{s_T + c_T}$.  The wealth evolves according to the following SDE:
\begin{align}
    dW_t = (r + p_t(\mu - r)) W_t + W_tp_t\sigma dB_t
\end{align}
where $W_t$ is the current wealth (the state variable), $dB_t$ is an increment of Brownian motion and $p_t$ is the proportion of wealth invested in the risky asset. We set the risk-free return $r = 0.01$, risky asset return $\mu = 0.05$ and volatility $\sigma = 1$.  We discretise the action space so that at each step the agent has three actions available: move $10$\% of risky asset wealth to the risk-free asset, move $10$\% of risk-free asset wealth to the risky asset or do nothing. Each time the agent moves the assets, it incurs a cost of $1$ i.e. a transaction fee. The agent can act after a time interval of $0.01$ seconds and the episode ends after $75$ steps. The results of training are shown in Fig. \ref{fig:merton} which clearly demonstrates that LICRA\_PPO learns faster than standard PPO. 
% and learns much faster than SAC. 

% \DM{Can we add a description of the transition dynamics i.e. equation of motion (ODE, SDE) for the risk-free and risky asset. Also can we increase the size of the action space e.g. $\{-30\%,-20\%,-10\%,0\%,+10\%,+20\%,+30\%\}$?} 
% \DM{Perhaps we can also smoothen the curves}
% 
% \subsubsection{Gym Control Experiments}
% \begin{itemize}
%     \item Simple version of MuJoCo continuous control robotic experiments.
%     \item Discrete action control experiments.
%     \item Want to control the amount of torque applied to robotic joints, e.g. by applying 0 torque if it is unnecessary.
%     \item Utilised in the non-limited setting.
% \end{itemize}
% 
% 
% 
\begin{figure}[t!]
\begin{tabular}{cc}
  \includegraphics[width=5.5cm, height=3.3cm]{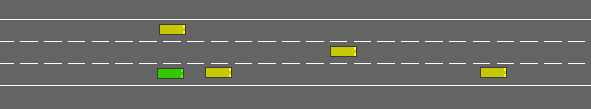} &   \includegraphics[width=7.5cm, height=3.3cm]{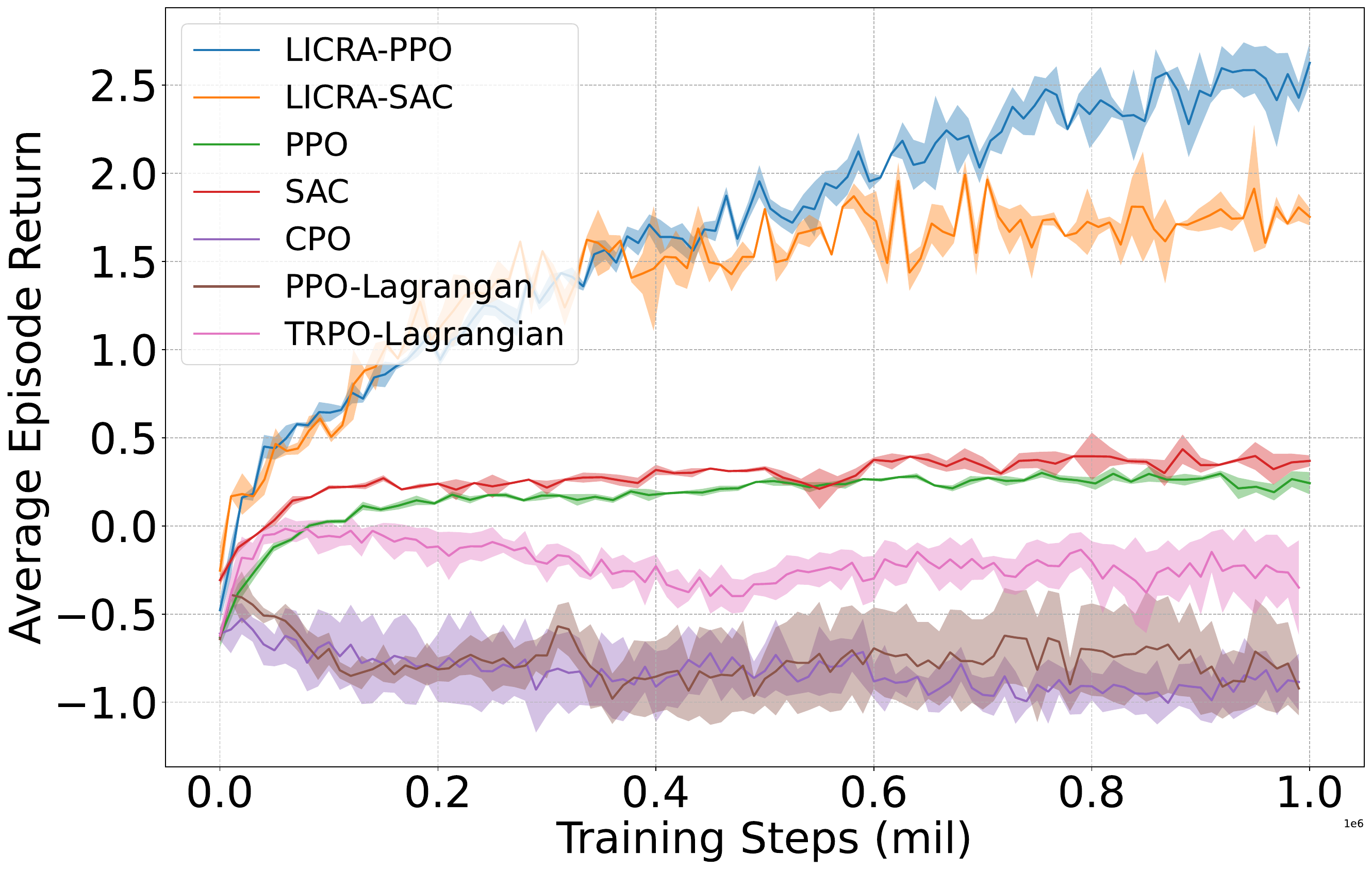} \\
(a) & (b) \\[6pt]
\end{tabular}
\caption{a) Drive Environment. b) Training results in drive environment. (Confidence intervals over 5 seeds) Average final performance was 1.75 for LICRA\_SAC, 0.37 for SAC, 2.63 for LICRA\_PPO, 0.24 for PPO.
% \AS{[I'D split the training for TRPO-based methods (PPO, CPO) and SAC-based]}
} \vspace{-3 mm}
\label{figure:drive_res}
\end{figure}
\textbf{Driving Environment Fuel Rationing.}
We studied an autonomous driving scenario where fuel-efficient driving is a priority. One of the main components of fuel-efficient driving is controlled usage of acceleration and braking, in the sense that 1) the amount of acceleration and braking should be limited 2) if accelerations should be slowly and gently. We believe this is a problem where LICRA should thrive as the impulse control agent can learn to restrict the amount of acceleration and braking in the presence of other cars and choose when to allow the car to decelerate naturally. We used the highway-env \citep{highway-env} environment on a highway task (see Fig (\ref{figure:drive_res}. a)) where the green vehicle is our controlled vehicle and the goal is to avoid crashing into other vehicles whilst driving at a reasonable speed. We add a cost function into the reward term dependent on the continuous acceleration action, $C(a_t) = K + a_t^2$, where $K>0$ is a fixed constant cost of taking any action, and $a_t \in [-1, 1]$, with larger values of acceleration or braking being penalised more. The results are presented in Fig. (\ref{figure:drive_res}.b). Notably, LICRA is able to massively outperform the baselines, especially our safety specific baselines which struggle to deal with the cost function associated with the environment. We believe one reason for the success of LICRA is that it is far easier for it to utilise the null action of zero acceleration/braking than the other algorithms, whilst all the algorithms have a guaranteed cost at every time step whilst not gaining a sizeable reward to counter the cost. 
% 
% \subsubsection{MuJoCo Robotic Arm Control}
% \begin{itemize}
%     \item Continuous version of Gym control exps
% \end{itemize}
% 
% \subsubsection{Glucose}
% \textbf{ALL}
% 
% 
% 
% 
% 
\begin{figure}[ht!]
\begin{tabular}{cc}
  \includegraphics[width=4.5cm, height=3.3cm]{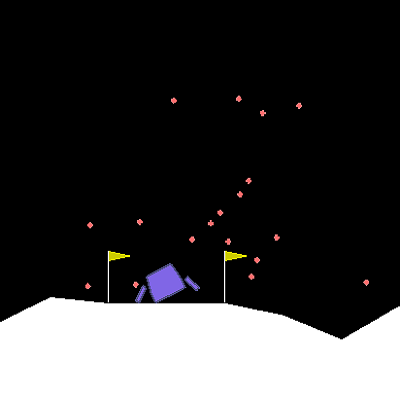} &   \includegraphics[width=7cm, height=3.3cm]{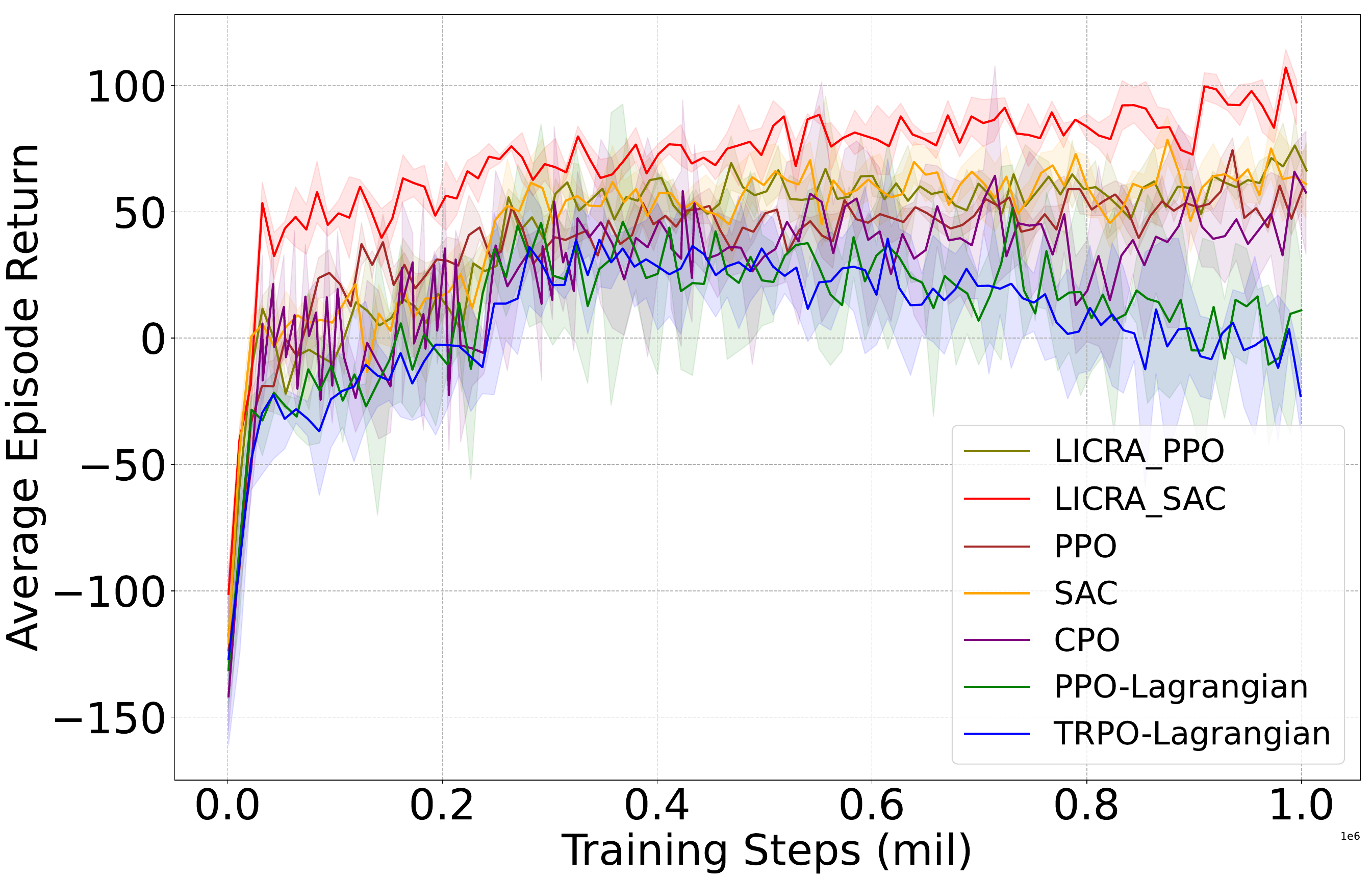} \\
(a) & (b) 
\end{tabular}
\caption{a) The lander must land on the pad between two flags. b) Training results in Lunar Lander. Note that we use $6$ different random seeds to get the error bars in this figure. The average final performance was 98.03 for LICRA\_SAC, 61.20 for SAC, 58.84 for LICRA\_PPO, 49.77 for PPO.
}
\label{figure:lunar_lander}\vspace{-3 mm}
\end{figure}
% 
% 
% \begin{figure}[h]
% \centering
% \includegraphics[height=1.3in]{figs/LunarLander.png}
% %\caption{fig2}
% % \label{fig:exp_LunarLander_des}
% % \end{minipage}%
% \vspace{-5 mm}
% \caption{The lander must land on the pad between two flags. }
% \vspace{-5 mm}
% \end{figure}
% % \textbf{Zipeng} 
% % (Arrange contact with Usman, Xiuling) - +7 days from receving code.
% % \begin{itemize}
% %     \item Utilised in both settings. Limited fuel budget with large negative reward in the case of running out of fuel.
% % \end{itemize}
% % 
% \begin{figure}[h!]
% 	\centering
% 	\includegraphics[width=0.6\columnwidth]{}
% 	\vspace{0mm}
% 	\caption{Training results in Lunar Lander.}
% 	\vspace{-4mm}
% 	
% \end{figure}
% 
% \DM{Suggest adding the term: $k_1(1-\boldsymbol{1}_{v_{t}-v_{t-1}=0})+k_2(1-\boldsymbol{1}_{\omega_{t}-\omega_{t-1}=0}) $ where $\boldsymbol{1}_{X}$ is the indicator function which is $1$ when the statement $X$ is true and $0$ when $X$ is false.][I think we should keep the original cost terms too.} \textcolor{red}{[Zipeng: Got it]} 
% 
\newline\textbf{Lunar Lander Environment.}
We tested the ability of LICRA to perform in environment that simulate real-world physical dynamics. We tested LICRA's performance the Lunar Lander environment in OpenAI gym \citep{brockman2016openai} which we adjusted to incorporate minimal bounded costs in the reward definition. 
In this environment, the agent is required to maintain both a good posture
mid-air and reach the landing pad as quickly as possible.
The reward function is given by: $$
\operatorname{Reward}\left(s_{t}\right)=3*(1-\boldsymbol{1}_{d_{t}-d_{t-1}=0})-3*(1-\boldsymbol{1}_{v_{t}-v_{t-1}=0})-3*(1-\boldsymbol{1}_{\omega_{t}-\omega_{t-1}=0})$$
$$ - 0.03 * \operatorname{FuelSpent}(s_{t})
 -10*(v_{t}-v_{t-1})-10*(\omega_{t}-\omega_{t-1}) +100 * \operatorname{hasLanded}\left(s_{t}\right)$$
where $d_t$ is the distance to the landing pad, $v_t$ is the velocity
of the agent, and $\omega_t$ is the angular velocity of the agent at time
$t$. $\boldsymbol{1}_{X}$ is the indicator function of taking actions, which is $1$ when the statement $X$ is true and $0$ when $X$ is false. 
%$\operatorname{FuelSpent}(s_{t})$ is the current spent fuel, which brings costs to the agents.
Considering the limited fuel budget, we assume that we have a fixed cost for each action taken by the agent here, and doing nothing brings no cost.
Then, to describe the goal of the game, we define the function of the final status by hasLanded(), which is $0$ when not landing; $1$ when the agent has landed softly on the landing pad; and $-1$ when the lander runs out of fuel or loses contact with the pad on landing. 
The reward function rewards the agent for reducing
its distance to the landing pad, decreasing its speed to land
smoothly and keeping the angular speed at a minimum to prevent
rolling. Additionally, it penalises the agent for running out of fuel and deters the agent from taking off again after landing.\looseness=-1

By introducing a reward function with minimally bounded costs, our goal was to test if LICRA can exploit the optimal policy. In Fig.~\ref{figure:lunar_lander}, we observe that the LICRA agent outperforms all the baselines, both in terms of sample efficiency and average test return (total rewards at each timestep). We also observe that LICRA enables more stable training than PPO, PPO-Lagrangian and CPO. 
% On the other hand, we observe that SAC is more suitable than PPO in this scenario, also when adopting impulse control.
% 
\newline\textbf{Ablation Study 1. Prioritisation of Most Important Actions.} We next tested LICRA's ability to prioritise where it performs actions when the necessity to act varies  significantly between states. To test this, we modified the Drive Environment to now consist of a single lane, a start state and a goal state start (at the end) where there is a reward. With no acceleration, the vehicle decreases velocity. To reach the goal, the agent must apply an acceleration $a_t\in[-1,1]$. Each acceleration $a_t$ incurs a cost $C(a_t)$ as defined above. At zones  $k=1,2, 3$ of the lane, if the vehicle is travelling below a velocity $v_{min}$, it is penalised by a strictly negative cost $c_k$ where $c_1<c_2< c_3$.  As shown in Fig. \ref{figure:drive_budget_heatmap}, when the intervention cost increases i.e. when $K\to \infty$, LICRA successfully prioritises the highest penalty zones to avoid incurring large costs. 
\begin{figure}[h!]
\centering\vspace{-3.6 mm}
  \includegraphics[width=0.79\linewidth]{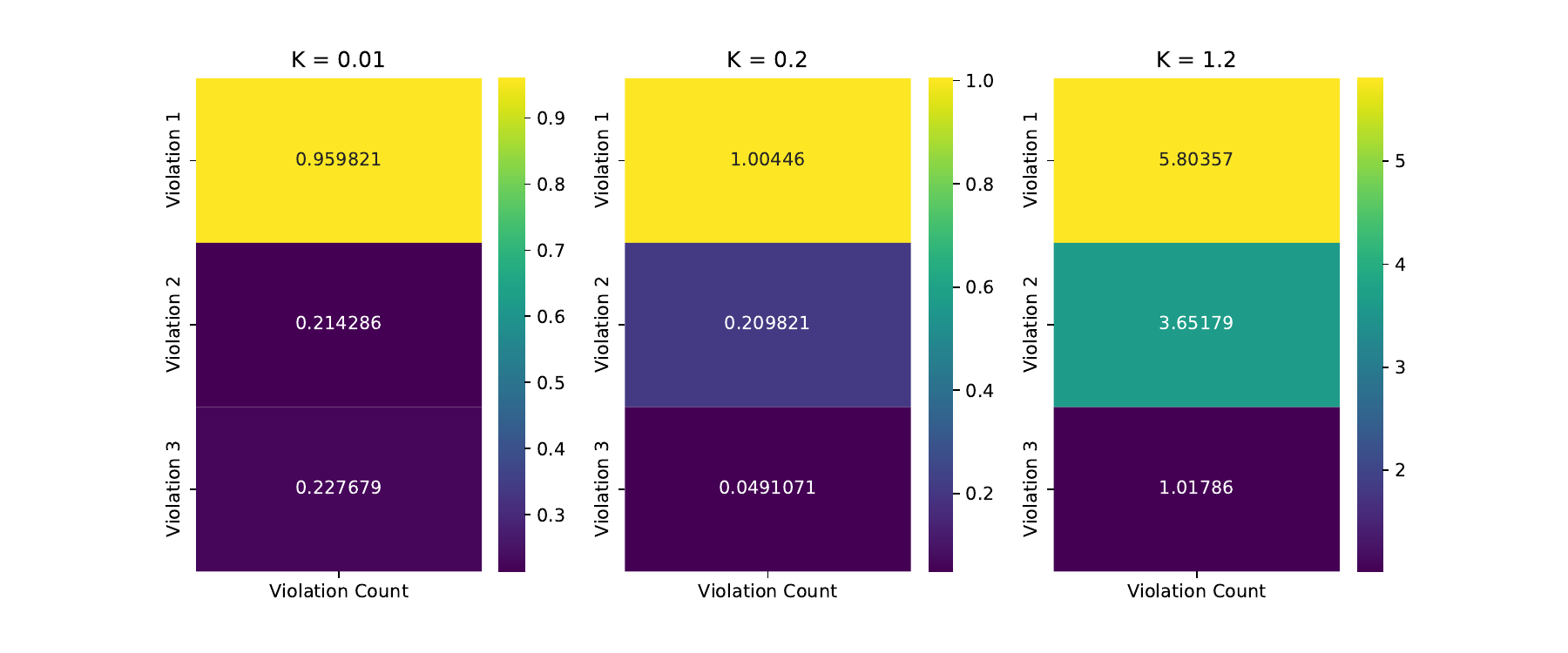}
  \vspace{-1.7 mm}
\caption{Results for Ablation Study 1. Heatmaps display the number of times the agent drives below $v_{min}$ in the penalty zones. Violation 1 refers to the lowest cost zone, whilst Violation 3 refers to the largest cost zone. $K$ refers to the fixed cost for taking an action. Results averaged over 5 seeds.}
\label{figure:drive_budget_heatmap}
\end{figure}
\vspace{-4.7 mm}
\section{Conclusion}
We presented a novel method to tackle the problem of learning how to select when to act in addition to learning which actions to execute.  Our framework, which is a general tool for tackling problems of this kind seamlessly adopts RL algorithms enabling them to efficiently tackle problems in which the agent must be selective about when it executes actions. This is of fundamental importance in practical settings where performing many actions over the horizon can lead to costs and undermine the service life of machinery. We demonstrated that our solution, LICRA which at its core has a sequential decision structure that first decides whether or not an action ought to be taken under the action policy can solve tasks where the agent faces costs with extreme efficiency as compared to leading reinforcement learning methods. In some tasks, we showed that LICRA is able to solve problems that are unsolvable using current reinforcement learning machinery. We envisage that this framework can serve as the basis extensions to different settings including adversarial training for solving a variety of problems within RL.

% moreover, the set $A$ takes the form $A=\{s\in \mathcal{S},m\in M|\mathcal{M}\psi(s_{\tau})\geq \psi(s_{t},I(\tau))\}.$

\bibliography{main}
\bibliographystyle{iclr2023_conference}
\newpage
\addcontentsline{toc}{section}{Appendix} % Add the Appendix text to the document TOC
\part{{\Large{Appendix}}} % Start the Appendix part
\parttoc
\newpage

\section{Full Algorithms}\label{sec:algorithm}

In this supplementary section, we explicitly define two versions of the LICRA algorithm. Algorithm \ref{algo:licra_ppo} describes the version where both agents use PPO. PPO\_update() subroutine  is a standard PPO gradient update done as in  Algorithm 1 of \citep{schulman2017proximal} with clipping surrogate objective with parameter $\epsilon$. The gradient update utilises batch size $B$, stepsize $\alpha$ and performs $T$ update steps per episode. Algorithm \ref{algo:licra_sac} defines the LICRA\_SAC version, utilising SAC for the switching agent. Here SAC\_update() is analogously a standard soft actor-critic update done as in Algorithm 1 of \citep{haarnoja2018soft}, where $B$, $\alpha$ and $T$ play the same role as in the PPO update.
\begin{algorithm}[h!]
% \DM{Need to add supervised learning part for the model of $P$ and $L$ and INPUTS!}
\begin{algorithmic}[1] 
		\STATE {\bfseries Input:} Stepsize $\alpha$, batch size $B$, episodes $K$, steps per episode $T$, mini-epochs $e$, clipping-parameter $\epsilon$
		\STATE {\bfseries Initialise:} Policy network (acting) $\pi$, Policy network (switching) $\mathfrak{g}$, \\ Critic network (acting)$V_{\pi}$,Critic network (switching)$V_{\mathfrak{g}}$ 
		\STATE Given reward objective function, $R$, initialise Rollout Buffers $\mathcal{B}_{\pi}$, $\mathcal{B}_{\mathfrak{g}}$
		\FOR{$N_{episodes}$}
		    \STATE Reset state $s_0$, Reset Rollout Buffers $\mathcal{B}_{\pi}$, $\mathcal{B}_{\mathfrak{g}}$
		    \FOR{$t=0,1,\ldots$}
    		    \STATE Sample $a_{t}\sim\pi(\cdot|s_{t})$ 
    		   \STATE Sample $g_t \sim \mathfrak{g}(\cdot|s_t)$
    		    \IF{$g_t = 1$} 
         \STATE Apply $a_{t}$ so $s_{t+1}\sim P(\cdot|a_t,s_t),$
                    
        		    \STATE Receive rewards $r_{t} = \cR(s_{t},a_{t})$
                    \STATE Store $(s_t, a_{t}, s_{t+1}, r_t)$ in $\mathcal{B}_{\pi}$
    		    \ELSE
    		     \STATE Apply the null action so $s_{t+1}\sim P(\cdot|0,s_t),$
        	\STATE Receive rewards $r_t = \cR(s_{t},0)$.

    		    \ENDIF
                    \STATE Store $(s_t, g_t, s_{t+1}, r_t)$ in $\mathcal{B}_{\mathfrak{g}}$
        	\ENDFOR
    	\STATE{\textbf{// Learn the individual policies}}
    	%\STATE Sample a batch of $|B_{\pi}|$ transitions $B_{\pi}$ from $\mathcal{B}_{\pi}$
    	%\STATE Sample a batch of $|B_{\mathfrak{g}}|$ transitions $B_{\mathfrak{g}}$ from $\mathcal{B}_{\mathfrak{g}}$
        \STATE PPO\_update($\pi$, $V_{\pi}$, $\mathfrak{B}_{\pi}$, $B$, $\alpha$, $T$) 
        \STATE PPO\_update($\mathfrak{g}$, $V_{\mathfrak{g}}$ , $\mathfrak{B}_{\mathfrak{g}}$, $B$, $\alpha$, $T$)
        \ENDFOR
	\caption{LICRA with PPO}
	\label{algo:licra_ppo} 
\end{algorithmic}         
\end{algorithm}

\begin{algorithm}[ht!]
% \DM{Need to add supervised learning part for the model of $P$ and $L$ and INPUTS!}
\begin{algorithmic}[1] 
		\STATE {\bfseries Input:} Stepsize $\alpha$, batch size $B$, episodes $K$, steps per episode $T$, mini-epochs $e$
		\STATE {\bfseries Initialise:} Policy network (acting) $\pi$, Policy network (switching) $\mathfrak{g}$, \\ Critic network (acting)$V_{\pi}$,Q-Critic network (switching )$Q_{\mathfrak{g}}$,V-Critic network (switching)$V_{\mathfrak{g}}$ 
		\STATE Given reward objective function, $R$, initialise Rollout Buffer $\mathcal{B}_{\pi}$ Replay Buffer $\mathcal{B}_{\mathfrak{g}}$
		\FOR{$N_{episodes}$}
		    \STATE Reset state $s_0$, Reset Rollout Buffer $\mathcal{B}_{\pi}$
		    \FOR{$t=0,1,\ldots$}
    		    \STATE Sample $a_{t}\sim\pi(\cdot|s_{t})$ 
    		   \STATE Sample $g_t \sim \mathfrak{g}(\cdot|s_t)$
    		    \IF{$g_t = 1$} 
         \STATE Apply $a_{t}$ so $s_{t+1}\sim P(\cdot|a_t,s_t),$
                    
        		    \STATE Receive rewards $r_{t} = \cR(s_{t},a_{t})$
                    \STATE Store $(s_t, a_{t}, s_{t+1}, r_t)$ in $\mathcal{B}_{\pi}$
    		    \ELSE
    		     \STATE Apply the null action so $s_{t+1}\sim P(\cdot|0,s_t),$
        	\STATE Receive rewards $r_t = \cR(s_{t},0)$.

    		    \ENDIF
                    \STATE Store $(s_t, g_t, s_{t+1}, r_t)$ in $\mathcal{B}_{\mathfrak{g}}$
        	\ENDFOR
    	\STATE{\textbf{// Learn the individual policies}}
        \STATE PPO\_update($\pi$, $V_{\pi}$, $\mathfrak{B}_{\pi}$, $B$, $\alpha$, $T$) 
    	\STATE Sample a batch of $|B_{\mathfrak{g}}|$ transitions $B_{\mathfrak{g}}$ from $\mathcal{B}_{\mathfrak{g}}$
        \STATE SAC\_update($\mathfrak{g}$, $V_{\mathfrak{g}}$, $Q_{\mathfrak{g}}$ , $B_{\mathfrak{g}}$, $B$, $\alpha$, $T$)
        \ENDFOR
	\caption{LICRA with SAC}
	\label{algo:licra_sac} 
\end{algorithmic}         
\end{algorithm}
\clearpage
\section{Fuel Budget}
We test the ability of LICRA (LICRA\_SAC) in a budgeted Drive environment. We modified the Drive environment in Sec. \ref{sec:experiments} so that there is an additional scarce fuel level for the controlled vehicle. This fuel level decreases in proportion to the magnitude of the action taken by the agent. If the controlled vehicle runs out fuel then it receives a large negative reward, and therefore it is important that the agent learns to use minimal acceleration/braking to reach the final destination. 

Fig. \ref{figure:drive_budget} shows the performance of LICRA and corresponding baselines. The majority of the baselines fail to learn anything in the environment, which we expect is due to the difficult of exploration due to low fuel levels (LICRA can counter this by exploring using the null action) and how easy it is to run out of fuel. In some seeds CPO is able to solve the environment to the same level as LICRA, but is not as stable over seeds and it is not as fast as LICRA in improving performance. 
\begin{figure}[h]
\centering
  \includegraphics[width=0.7\linewidth]{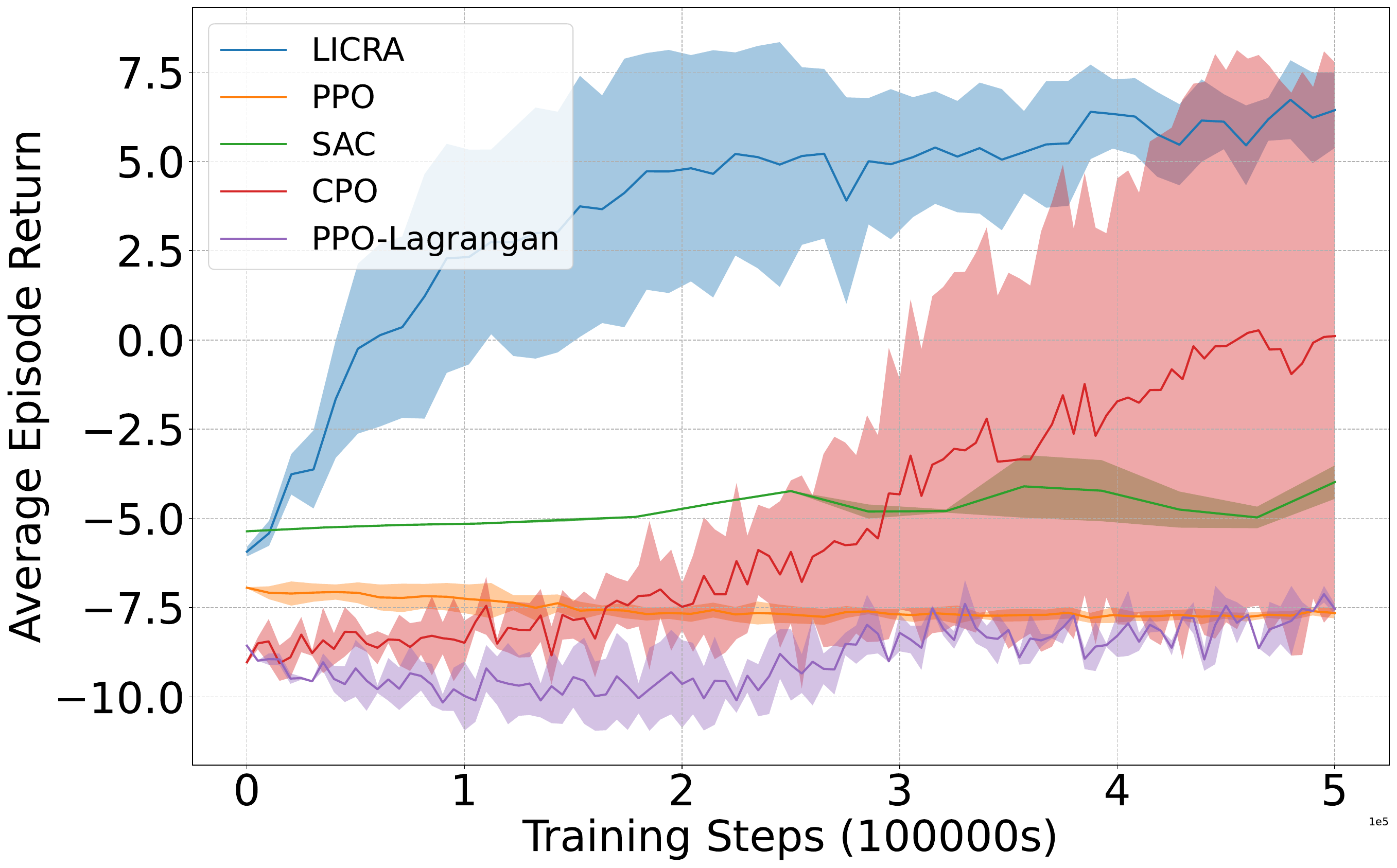}
\caption{}
\label{figure:drive_budget}
\end{figure}
\section{Ablation studies}
% In this section, we analyse the behaviour of LICRA by performing a series of ablation studies. In particular, we examine i) LICRA's ability to prioritise states in which an intervention is most required with increasing intervention costs ii) LICRA's ability to handle various functional forms of the intervention cost.

Fig.~\ref{figure:lunar_lander} shows the performance of LICRA variants and baselines when $c(s,a)\equiv 5$. In this section, we analyse LICRA's ability to handle various functional forms of the intervention cost.

\subsection{Ablation Study 2: Testing Different functional forms of the intervention costs}

\textbf{\textit{Intervention costs of the form $c(s,a)\equiv 0$.}}

In Sec. \ref{sec:LICRA_framework}, we claimed that LICRA's impulse control mechanism which first determines the set of states to perform actions then only learns the optimal actions at these states can induce a much quicker learning proceess. In particular, we claimed that LICRA enables the RL agent to rapidly learns which states to focus on to learn optimal actions.

In our last ablation analysis, we test LICRA's ability to handle the case in which the agent faces no fixed costs so $c(s,a)\equiv 0$, therefore deviating from the form of the impulse control objective \eqref{impulse_objective}. In doing so, we test the ability of LICRA to prioritise the most important states for learning the correct actions in a general RL setting. As shown in Fig.~\ref{figure:ablation_cost_lunar_0} we present  the average returns when $c(s,a)\equiv 0$. For this case, LICRA\_SAC both learns the fastest indicating the benefits of the impulse control framework even in the setting in which the agent does not face fixed minimal costs for each action. Strikingly, LICRA also achieves the highest performance which demonstrates that LICRA also improves overall performance in complex tasks.

\textbf{\textit{Intervention costs of the form $c(s,a)\equiv k>0$.}}

We first analyse the behaviour of LICRA and leading baselines when the environment has intervention costs of the form $c(s,a)=k$ where the fixed part $k>0$ is a strictly positive constant. Intervention costs of this kind are frequently found within economic settings where they are characterised as `menu costs'. This name derives from the fixed costs associated to a vendor adjusting their prices and serves as an explanation for price rigidities within the macroeconomy \citep{caplin1987menu,mguni2018optimal2}.

 We next test the average returns when $c(s,a)\equiv 10$, $c(s,a)\equiv 20$, as shown in Fig.~\ref{figure:ablation_cost_lunar}. Note that the action space is discrete and the intervention costs only occur when $a\neq 0$. In this case, LICRA\_SAC both learns the fastest and achieves the highest performance. Moreover, unlike PPO which produces declining performance and does not converge, LICRA\_PPO converges to a high reward stable point.

\textbf{\textit{Intervention costs of the form  $c(s,a)=k+\lambda a$.}}

We next analyse the behaviour of LICRA and leading baselines when the environment has intervention costs of the form $c(s,a)=k+\lambda a$ where the fixed part and proportional part $\lambda, k>0$ are strictly positive constants. Intervention costs of this kind are frequently found within financial settings in which an investor incurs a fixed cost for investment decisions e.g. broker costs \citep{davis1990portfolio, mguni2018optimal}. Note that the action space is discrete and the intervention costs only occur when $a\neq 0$.  

We present the average returns when $c(s,a)=5+|a|$, $c(s,a)=5+5\cdot|a|$ in Fig.~\ref{figure:ablation_cost_lunar}. As with previous case, LICRA\_SAC both learns the fastest and achieves the highest performance. Moreover, unlike PPO which produces declining performance and does not converge, LICRA\_PPO converges to a high reward stable point. 

\textbf{\textit{Intervention costs of the form $c(s,a)=k+f(s, a)$.}}

% \subsection{Ablation study 2. Plug \& Play Aspect}\label{sec:plug_n_play}
% In order to validate our claim that LICRA freely adopts RL learners, we tested the ability of LICRA to solve problems  using independent Proximal policy optimization algorithm \citep{schulman2017proximal} as the base learner for $\mathfrak{g}$.

In general, the intervention costs incurred by an agent for each intervention can be allowed to be a function of the state. For example, activating an actuator under adverse environment conditions may incur greater wear to machinery than in other conditions. The functional form of the cost function is the most general and produces the most complex decision problem out of the aforementioned cases. To capture this general case, we lastly analysed the behaviour of LICRA and leading baselines when the environment has intervention costs of the form $c(s,a)=k+f(s, a)$ where the fixed part $k>0$ and $f:\cS\times\cA\to\mathbb{R}_{>0}$ is a positive function.

As shown in Fig.~\ref{figure:ablation_cost_lunar}, for this case we present  the average returns when $c(s,a)=5+|d_s-d_{\rm target}|\cdot|a|$, where $|d_s-d_{\rm target}|$ represents the distance between the current position to the destination (determined by the state $s$). As before, the action space is discrete and the intervention costs only occur when $a\neq 0$. As with previous cases, LICRA\_SAC both learns the fastest and achieves the highest performance, demonstrating LICRA's ability to solve the more complex task. Moreover, as in the previous cases, unlike PPO which produces declining performance and does not converge, LICRA\_PPO converges to a high reward stable point.

% In Lunar Lander, we define the action space is discrete and the $0^{\rm th}$ action represents no-op, which will not bring any intervention costs (i.e., $\cC(s,a)=c(s,a)\mathbf{1}_{a\in\mathcal{A}/\{0\}}$ 

\begin{figure}[t!]
\centering
  \includegraphics[width=13cm, height=4cm]{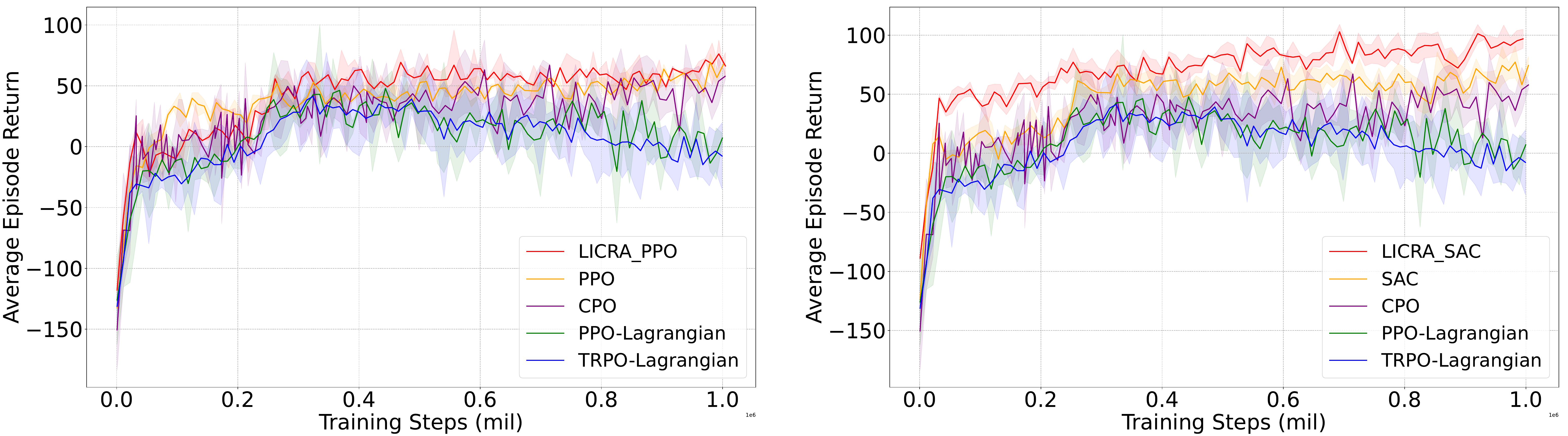}  
\caption{Ablation Study when $c(s,a)\equiv 0$ in Lunar Lander.}
\label{figure:ablation_cost_lunar_0}
\end{figure}

\begin{figure}[t!]
\centering
\begin{tabular}{cc}
\includegraphics[width=6cm, height=4cm]{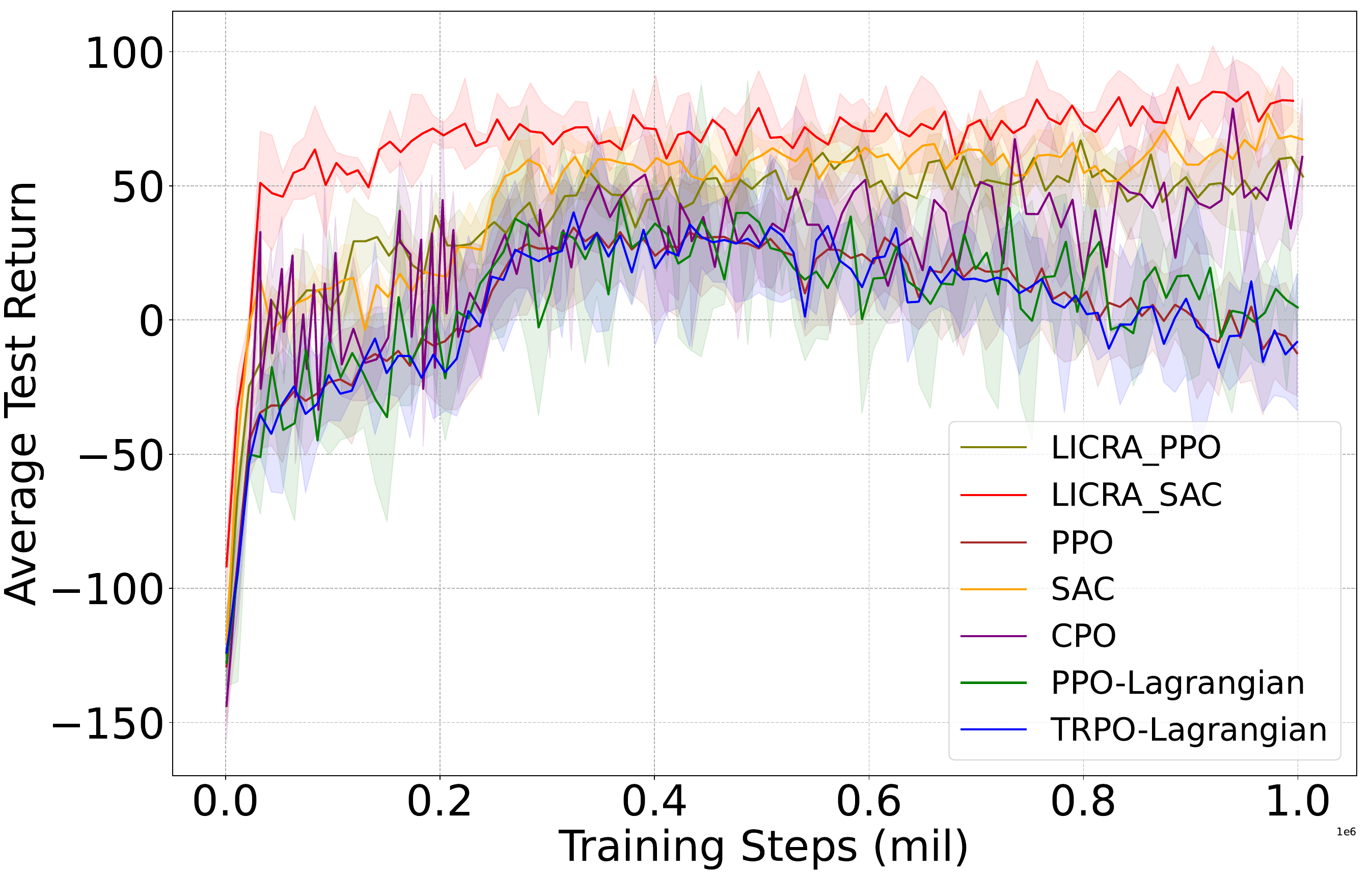}& \includegraphics[width=6cm, height=4cm]{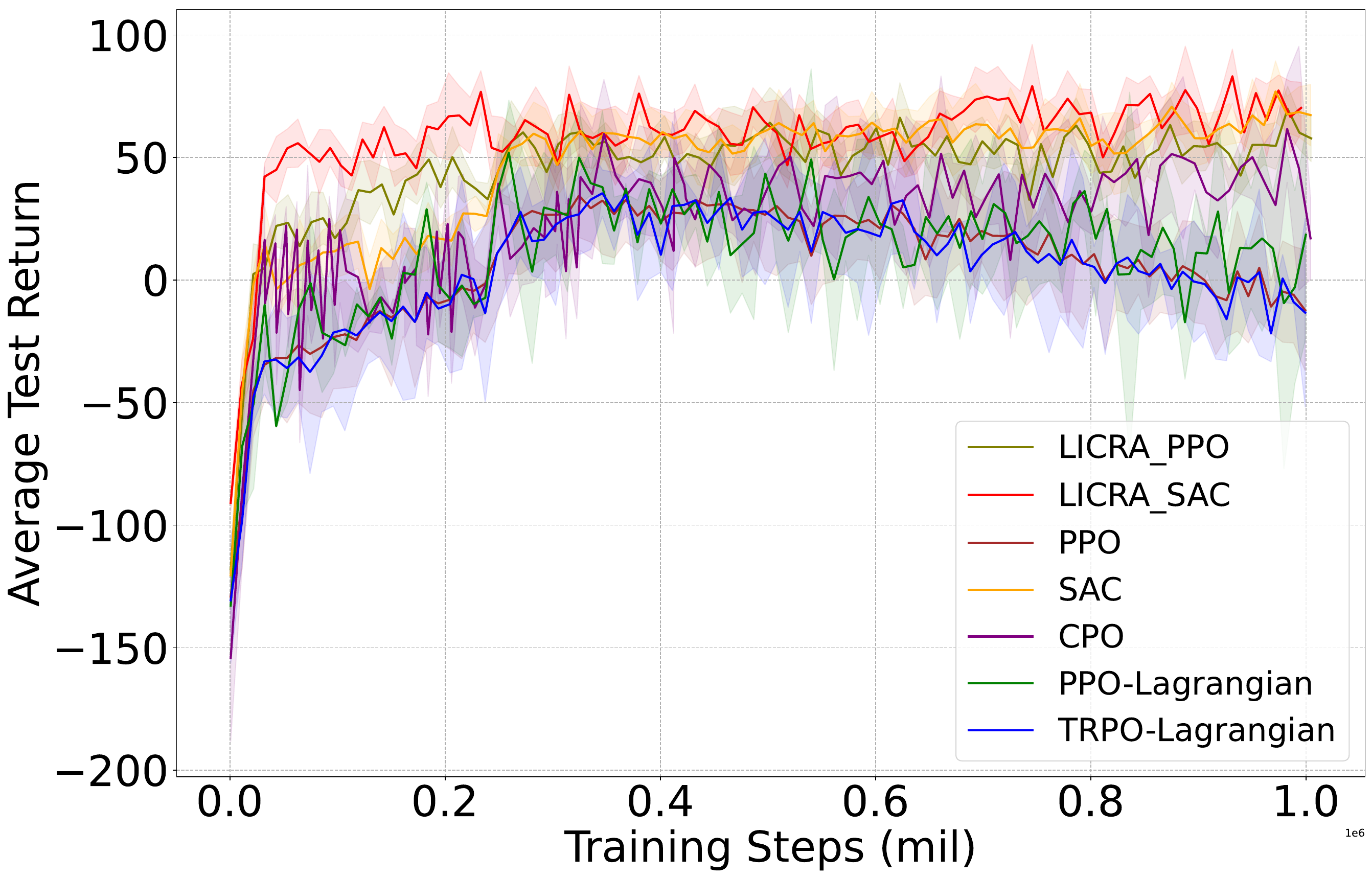}  \\
(a) $c(s,a)\equiv 10$ & (b) $c(s,a)\equiv 20$\\[6pt]
\includegraphics[width=6cm, height=4cm]{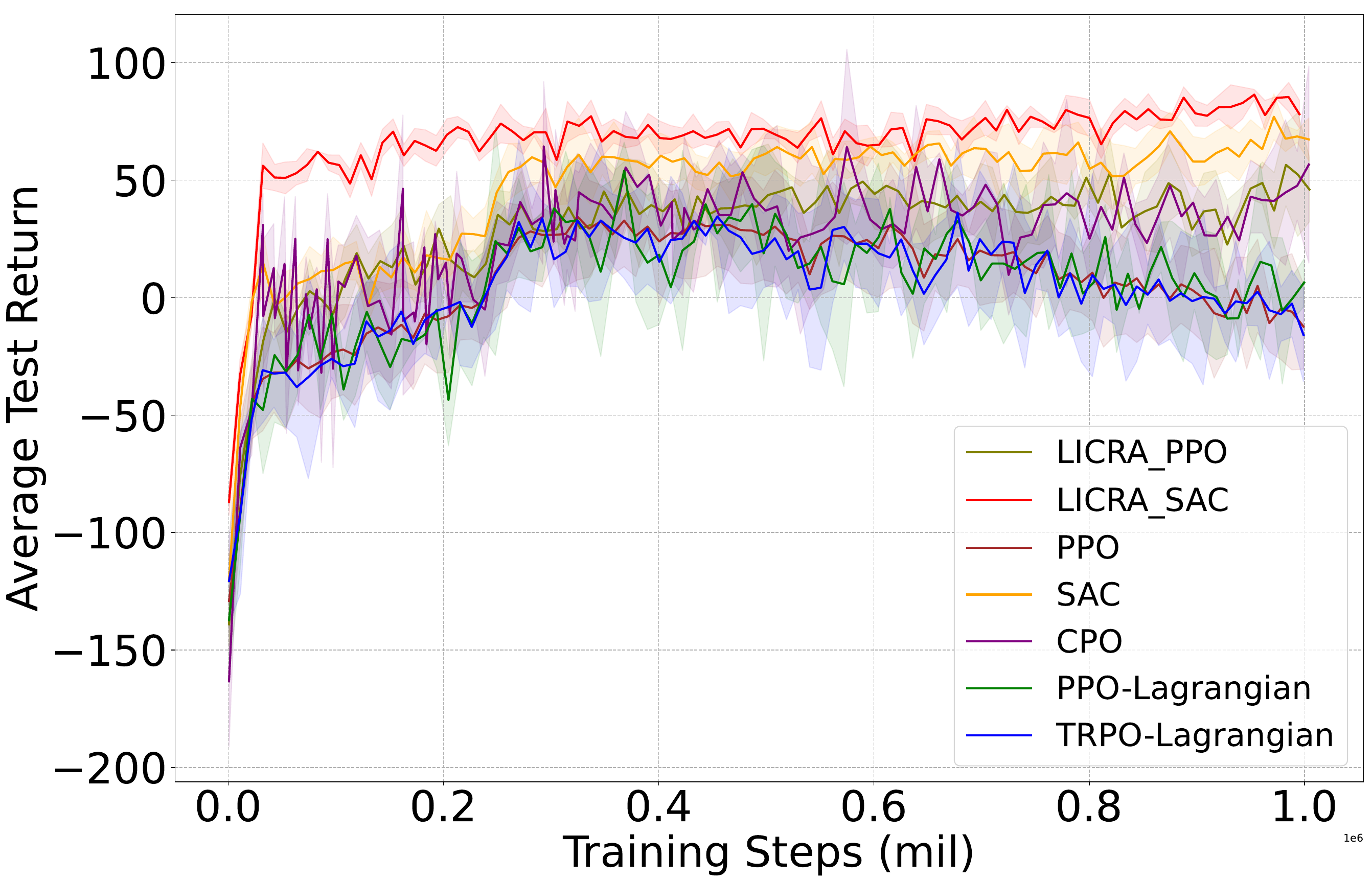}& \includegraphics[width=6cm, height=4cm]{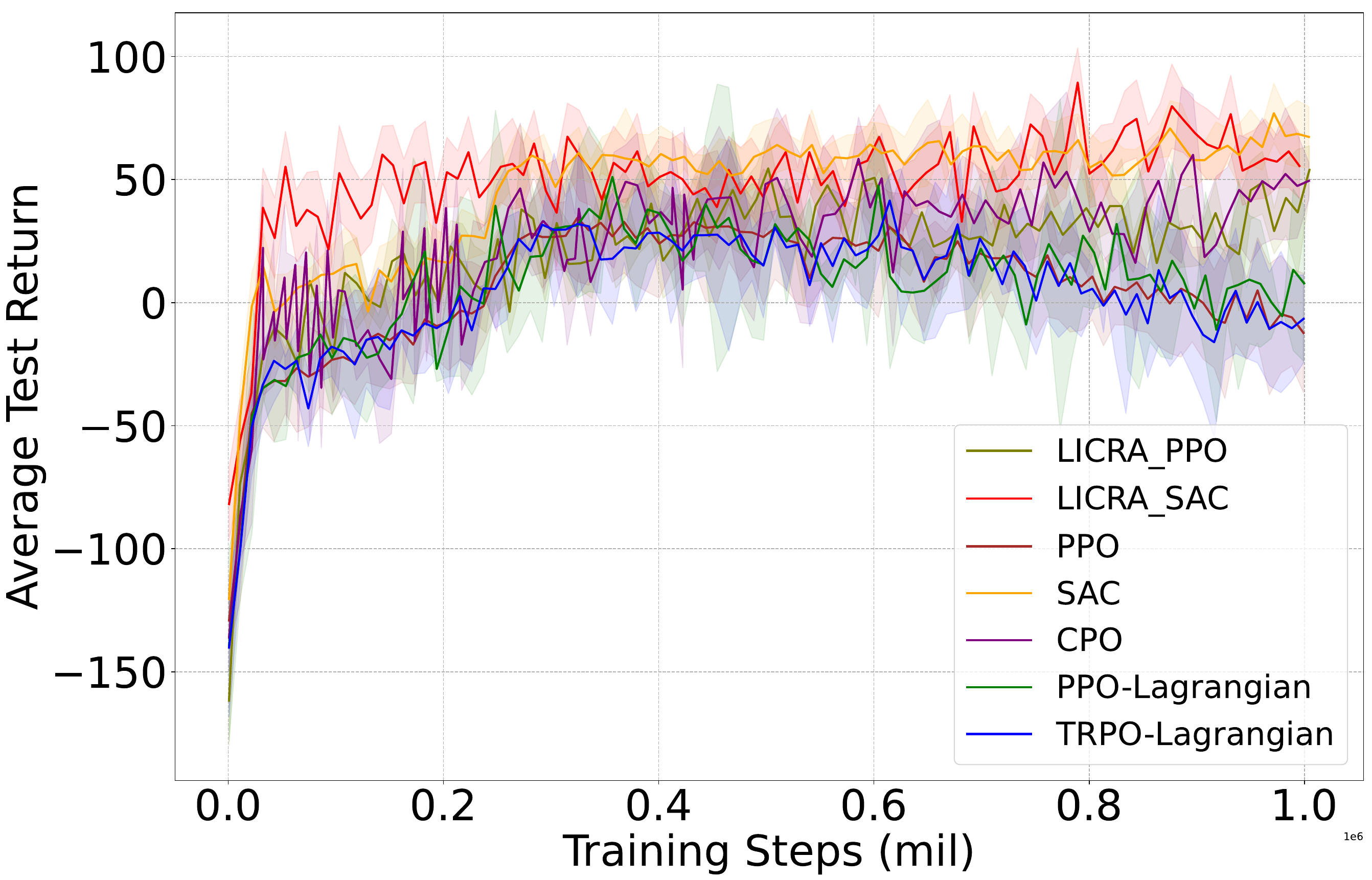}  \\
(c) $c(s,a)=5+|a|$ &(d) $c(s,a)=5+5\cdot|a|$ \\[6pt]
  \includegraphics[width=6cm, height=4cm]{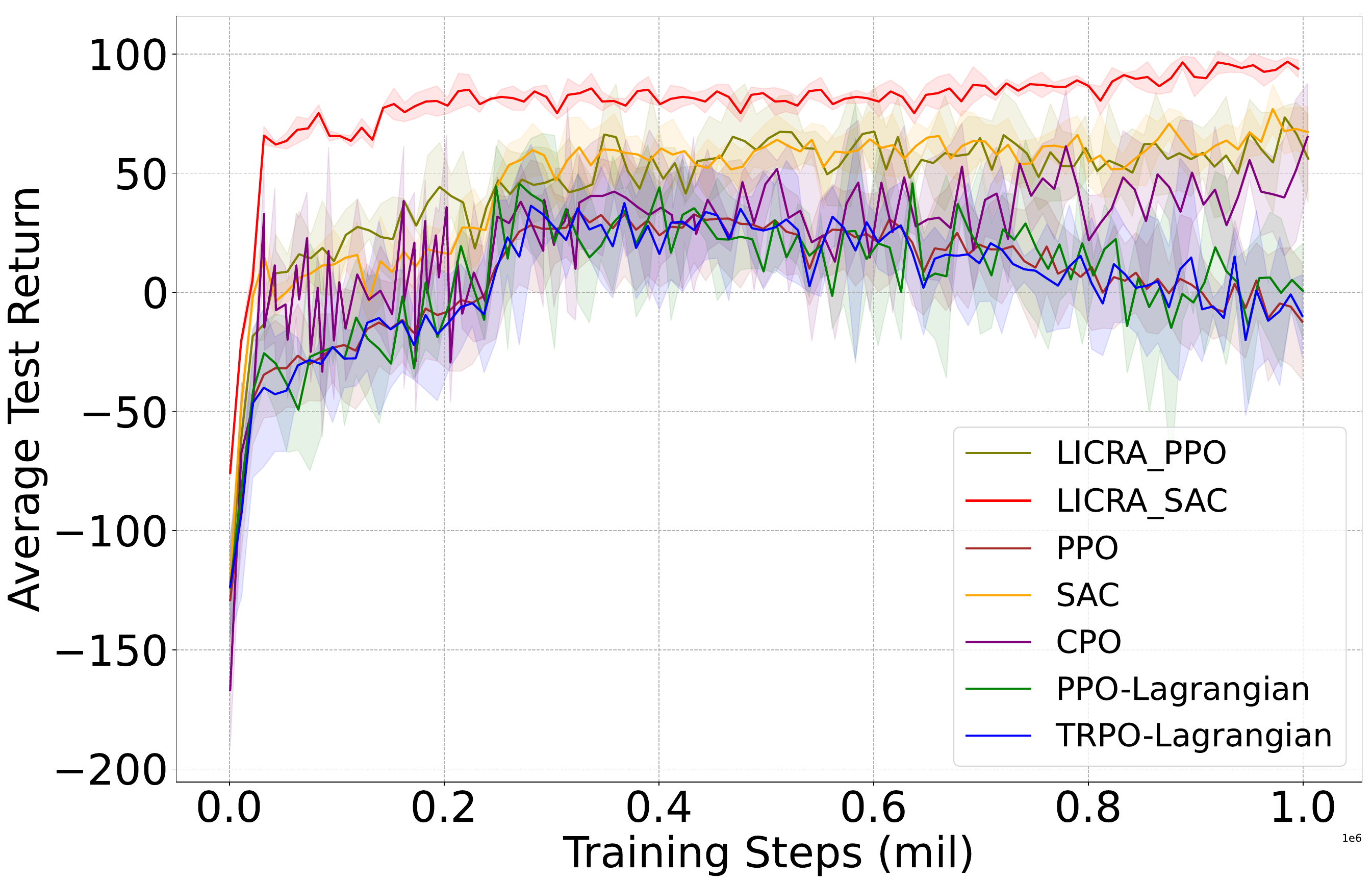}& \\
  (e) $c(s,a)=5+|d_s-d_{\rm target}|\cdot|a|$& \\[6pt]
\end{tabular}
\caption{Ablation Study when (a) $c(s,a)\equiv 10$, (b) $c(s,a)\equiv 20$, (c) $c(s,a)=5+|a|$, (d) $c(s,a)=5+5\cdot|a|$ and (e) $c(s,a)=5+|d_s-d_{\rm target}|\cdot|a|$ in Lunar Lander.}
\label{figure:ablation_cost_lunar}
\end{figure}

\clearpage\subsection{Ablation Study 3: Benefits of LICRA in Smaller Intervention Region} \label{subsec:intervention_region_ablation}

In Sec. \ref{sec:LICRA_framework}, we claimed that LICRA enables efficient learning in setting in which the set of states in which the agent should act, which we call the \textit{intervention region} is a subset of the state space. Moreover, we claimed that this advantage over existing methods is increased as the intervention region becomes small in relation to the entire state space.

To test these claims, we modified the Drive environment in Sec. \ref{sec:experiments} to a problem setting in which the agent is required to bring a moving vehicle to rest in a particular subregion of the lane or \textit{stop gap} which we denote by $\cS_I\subset \cS$. If the agent brings the vehicle to rest within $\cS_I$ the agent receives a reward $R>0$ and the episode terminates. If however the agent brings the vehicle to rest outside of $\cS_I$ the agent receives a lower reward $r<R$ the episode terminates. Lastly, if the agent fails to bring the vehicle to rest before the end of the lane the episode terminates and the agent receives $0$ reward. The length of the entire lane $\cS$ is $500$ units and we ablate the size of the region $\cS_I$ in which the agent is required to stop to receive the maximum reward.  Now the agent gets to decide a magnitude which decelerates the vehicle i.e. how heavily to brake (at any given point, the agent can also choose not to brake at all). Each deceleration $a\in [0,1]$ incurs a fixed minimal cost of $c(s,a)=\kappa+\lambda a$ where $\kappa,\lambda >0$.  

Fig. \ref{ablation_on_intervention_region} shows the results of the ablation on the stop gap $\cS_I$ when $\cS_I$ is a length of $50$ units or $10\%$ of the entire state space $\cS$ through to $\cS_I\equiv \cS$ i.e. when the intervention region is the entire state space.

As can be observed in Fig. \ref{ablation_on_intervention_region}, when the intervention region is comparatively small, LICRA\_PPO produces a significant performance advantage over the base learner PPO. This performance gap is gradually decreased as the size of the stop gap increases and eventually becomes the entire state space (which is the case when the stop gap is $500$). Interestingly, LICRA\_PPO still maintains a performance advantage over PPO even when $\cS_I\equiv \cS$. 

\begin{figure}[h!]
\centering
  \includegraphics[width=0.9\linewidth]{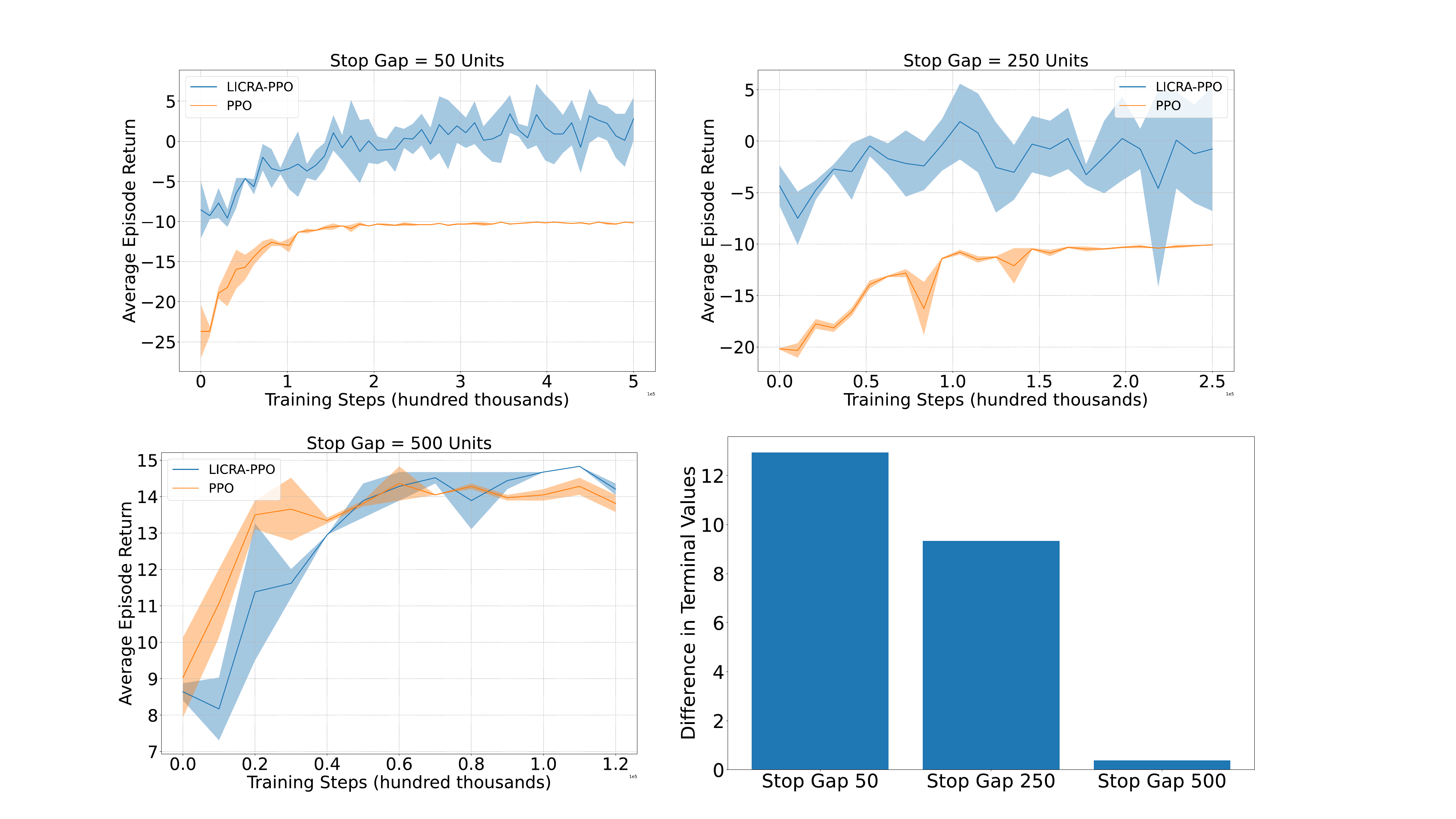}
\caption{Ablation on the intervention region. The 'Stop Gap' represents the intervention region in our modified Drive environment. When the Stop Gap is $50$ units, the intervention region is $10\%$ of the entire state space. When the Stop Gap is $509$ units, the intervention region is the entire state space. }
\label{ablation_on_intervention_region}
\end{figure}

\clearpage
\section{Analysis of LICRA Q-learning Variant}
In order to validate the convergence of the Q-learning variant of LICRA (c.f. \eqref{q_learning_update}), we ran an experiment where the LICRA Q-learning variant given in \eqref{q_learning_update} can choose whether to act or not to act in a given discrete state (and continuous action space). In keeping with the problem setting we consider, there is a cost associated with any non-zero action. We consider a $6$-by-$6$ grid world problem, where the start state is $(0, 0)$ and the algorithm receives a positive reward only if it reaches the point $(5, 5)$. The action space is two-dimensional and continuous and the agent is moved in $x$ and $y$ dimensions by the number of grid tiles corresponding to the value of action projected to the nearest integer. Every non-zero action is associated with a cost of $1$. Additionally, the region described by $0 \le x \le 2$ and $0 \le y \le 2$ is "windy" and the agent is pushed by one grid tile at each step in both dimensions. Hence the optimal policy is to wait for the first two time steps so that the agent is moved to $(2,2)$ without incurring any cost and then perform one final jump to $(5,5)$. % When LICRA decides to act we run .
LICRA decides whether to act or not to act based on a tabular Q-learning rule given in \eqref{q_learning_update} (using a normalised advantage function (NAF) to handle the continuous action space), where we store expected value for non acting in each cell. Fig. \ref{fig:tab_licra} shows that the TD-error of this tabular Q-learning setting converges to zero which validates the results of Theorem \ref{theorem:q_learning} and Theorem \ref{primal_convergence_theorem}.

\begin{figure}[h]
    \centering
    \includegraphics[width=0.6\textwidth]{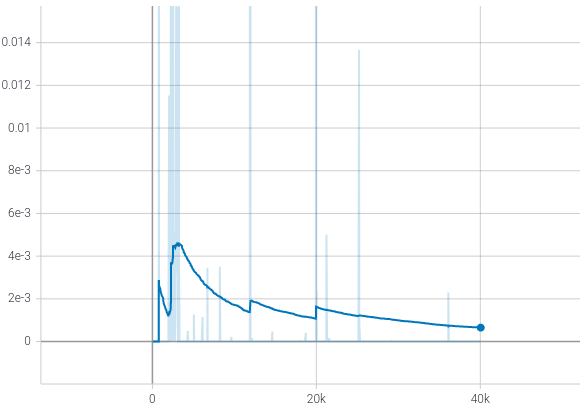}
    \caption{TD error of tabular version of LICRA (smoothed).}
    \label{fig:tab_licra}
\end{figure}

\clearpage\section{Hyperparameter Settings}
In the table below we report all hyperparameters used in Merton Investment problem experiments.
\begin{center}
    \begin{tabular}{c|c|c} 
    Hyperparameter & Value (PPO methods) & Value (SAC)\\
        \toprule
        Clip Gradient Norm & 0.5 & None\\
        Discount Factor & 0.99 & 0.99\\
        Learning rate & $1$x$10^{-3}$ & $1$x$10^{-4}$ \\
        Batch size & 32 & 1024\\
        Steps per epoch & 2000 & 2000\\
        Optimisation algorithm & ADAM & ADAM\\
        \bottomrule
    \end{tabular}
\end{center}

In the next table, we report hyperparameters used in remaining experiments.
\begin{center}
    \begin{tabular}{c|c} 
        Hyperparameter & Value \\
        \toprule
        Clip Gradient Norm & 1\\
        $\gamma_{E}$ & 0.99\\
        $\lambda$ & 0.95\\
        Learning rate & $1$x$10^{-4}$ \\
        Number of minibatches & 4\\
        Number of optimisation epochs & 4\\
        Number of parallel actors & 16\\
        Optimisation algorithm & ADAM\\
        Rollout length & 128\\
        Sticky action probability & 0.25\\
        Use Generalized Advantage Estimation & True\\
        \bottomrule
    \end{tabular}
\end{center}

\clearpage\section{Notation \& Assumptions}\label{sec:notation_appendix}

We assume that $\mathcal{S}$ is defined on a probability space $(\Omega,\mathcal{F},\mathbb{P})$ and any $s\in\mathcal{S}$ is measurable with respect
to the Borel $\sigma$-algebra associated with $\mathbb{R}^p$. We denote the $\sigma$-algebra of events generated by $\{s_t\}_{t\geq 0}$
by $\mathcal{F}_t\subset \mathcal{F}$. In what follows, we denote by $\left( \mathcal{V},\|\|\right)$ any finite normed vector space and by $\mathcal{H}$ the set of all measurable functions. 
% We prove our results for the general case in which $F:\mathbb{N}\times\mathcal{B}\times\mathbb{N}\times\mathcal{B}\to\mathbb{R}$. For this we employ the following shorthands $b_t:=(t,a^2_t)\in\mathbb{N}\times\mathcal{B}$ and $\times_{i=1}^2\mathcal{A}^i\equiv\boldsymbol\mathcal{A}$.

The results of the paper are built under the following assumptions which are standard within RL and stochastic approximation methods:

\textbf{Assumption 1.}
The stochastic process governing the system dynamics is ergodic, that is  the process is stationary and every invariant random variable of $\{s_t\}_{t\geq 0}$ is equal to a constant with probability $1$.

\textbf{Assumption 2.}
The function $R$ is in $L_2$.

\textbf{Assumption 3.}
For any positive scalar $c$, there exists a scalar $\mu_c$ such that for all $s\in\mathcal{S}$ and for any $t\in\mathbb{N}$ we have: $
    \mathbb{E}\left[1+\|s_t\|^c|s_0=s\right]\leq \mu_c(1+\|s\|^c)$.

\textbf{Assumption 4.}
There exists scalars $C_1$ and $c_1$ such that for any function $J$ satisfying $|v(s)|\leq C_2(1+\|s\|^{c_2})$ for some scalars $c_2$ and $C_2$ we have that: $
    \sum_{t=0}^\infty\left|\mathbb{E}\left[v(s_t)|s_0=s\right]-\mathbb{E}[v(s_0)]\right|\leq C_1C_2(1+\|s_t\|^{c_1c_2})$.

\textbf{Assumption 5.}
There exists scalars $c$ and $C$ such that for any $s\in\mathcal{S}$ we have that: $
    |\cR(s,\cdot)|\leq C(1+\|s\|^c)$.

\section{Proof of Technical Results}\label{sec:proofs_appendix}

We begin the analysis with some preliminary lemmata and definitions which are useful for proving the main results.

\begin{definition}{A.1}
An operator $T: \mathcal{V}\to \mathcal{V}$ is said to be a \textbf{contraction} w.r.t a norm $\|\cdot\|$ if there exists a constant $c\in[0,1[$ such that for any $V_1,V_2\in  \mathcal{V}$ we have that:
\begin{align}
    \|TV_1-TV_2\|\leq c\|V_1-V_2\|.
\end{align}
\end{definition}

\begin{definition}{A.2}
An operator $T: \mathcal{V}\to  \mathcal{V}$ is \textbf{non-expansive} if $\forall V_1,V_2\in  \mathcal{V}$ we have:
\begin{align}
    \|TV_1-TV_2\|\leq \|V_1-V_2\|.
\end{align}
\end{definition}
% \begin{definition}{A.3}
% The \textbf{residual} of a vector $V\in \mathcal{V}$ w.r.t the operator $T: \mathcal{V}\to  \mathcal{V}$ is:
% \begin{align}
%     \epsilon_T(V):= \|TV-V\|.
% \end{align}
% \end{definition}

\begin{lemma} \label{max_lemma}
For any
$f: \mathcal{V}\to\mathbb{R},g: \mathcal{V}\to\mathbb{R}$, we have that:
\begin{align}
\left\|\underset{a\in \mathcal{V}}{\max}\:f(a)-\underset{a\in \mathcal{V}}{\max}\: g(a)\right\| \leq \underset{a\in \mathcal{V}}{\max}\: \left\|f(a)-g(a)\right\|.    \label{lemma_1_basic_max_ineq}
\end{align}
\end{lemma}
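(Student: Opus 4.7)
The plan is to prove this standard inequality via the two-sided bound trick: establish $\max_a f(a) - \max_a g(a) \leq \max_a(f(a)-g(a))$ and its symmetric counterpart, then combine.

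First I would assume without loss of generality that $\max_{a\in\mathcal{V}} f(a) \geq \max_{a\in\mathcal{V}} g(a)$, so that the norm on the left-hand side (which for real numbers is just absolute value) equals $\max_a f(a) - \max_a g(a)$. Pick any maximiser $a^\star \in \arg\max_{a\in\mathcal{V}} f(a)$, assuming it exists (if the max is not attained, one runs the same argument with a supremum-approximating sequence $a_n$ with $f(a_n) \to \sup_a f(a)$, and takes limits at the end). Then using $\max_a g(a) \geq g(a^\star)$ I get
\begin{align*}
\max_{a\in\mathcal{V}} f(a) - \max_{a\in\mathcal{V}} g(a) &= f(a^\star) - \max_{a\in\mathcal{V}} g(a) \\
&\leq f(a^\star) - g(a^\star) \\
&\leq \bigl|f(a^\star) - g(a^\star)\bigr| \\
&\leq \max_{a\in\mathcal{V}} \bigl\|f(a) - g(a)\bigr\|,
\end{align*}
which gives one side. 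The symmetric case where $\max_a g(a) \geq \max_a f(a)$ is handled identically by swapping the roles of $f$ and $g$, yielding the same upper bound on the norm of the difference.

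There is no substantial obstacle here; the only thing to be slightly careful about is the existence of an attaining maximiser, which is why I would phrase the argument as above with a qualifying remark for the general sup case. The result then follows immediately by combining the two bounds, since $\|x\| = |x|$ for real $x$ and $|x| = \max(x, -x)$.
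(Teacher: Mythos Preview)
Your proof is correct and follows essentially the same two-sided symmetry argument as the paper. The only cosmetic difference is that the paper takes the pointwise inequality $f(a)\leq \|f(a)-g(a)\|+g(a)$ and applies $\max_a$ to both sides, which avoids picking a maximiser and hence the attainment caveat you flag; otherwise the structure (one-sided bound, then swap $f$ and $g$) is identical.
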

\begin{proof}
We restate the proof given in \citep{mguni2019cutting}:
\begin{align}
f(a)&\leq \left\|f(a)-g(a)\right\|+g(a)\label{max_inequality_proof_start}
\\\implies
\underset{a\in \mathcal{V}}{\max}f(a)&\leq \underset{a\in \mathcal{V}}{\max}\{\left\|f(a)-g(a)\right\|+g(a)\}
\leq \underset{a\in \mathcal{V}}{\max}\left\|f(a)-g(a)\right\|+\underset{a\in \mathcal{V}}{\max}\;g(a). \label{max_inequality}
\end{align}
Deducting $\underset{a\in \mathcal{V}}{\max}\;g(a)$ from both sides of (\ref{max_inequality}) yields:
\begin{align}
    \underset{a\in \mathcal{V}}{\max}f(a)-\underset{a\in \mathcal{V}}{\max}g(a)\leq \underset{a\in \mathcal{V}}{\max}\left\|f(a)-g(a)\right\|.\label{max_inequality_result_last}
\end{align}
After reversing the roles of $f$ and $g$ and redoing steps (\ref{max_inequality_proof_start}) - (\ref{max_inequality}), we deduce the desired result since the RHS of (\ref{max_inequality_result_last}) is unchanged.
\end{proof}

\begin{lemma}{A.4}\label{non_expansive_P}
The probability transition kernel $P$ is non-expansive, that is:
\begin{align}
    \|PV_1-PV_2\|\leq \|V_1-V_2\|.
\end{align}
\end{lemma}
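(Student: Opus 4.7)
The plan is to exploit the fact that applying $P$ to a value-like function is nothing more than taking a conditional expectation, $(PV)(s) = \sum_{s'\in\mathcal{S}} P(s'\mid s)\, V(s')$, and that $P(\cdot\mid s)$ is a probability distribution for each $s$. Since the preceding results (notably Theorem~\ref{primal_convergence_theorem}) take $\|\cdot\|$ to be the $L_2$ norm weighted by the stationary distribution $\mu$ of the ergodic chain (guaranteed to exist by Assumption~1), I will carry out the argument in that setting; the sup-norm version is strictly easier and follows from the same calculation with the triangle inequality in place of Jensen's inequality.

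First I would set $W = V_1 - V_2$ and write, by linearity of $P$,
\begin{equation*}
(PW)(s) \;=\; \sum_{s'\in\mathcal{S}} P(s'\mid s)\, W(s').
\end{equation*}
Because $P(\cdot\mid s)$ is a probability measure, Jensen's inequality applied to the convex function $x\mapsto x^2$ yields the pointwise bound
\begin{equation*}
\bigl((PW)(s)\bigr)^2 \;\le\; \sum_{s'\in\mathcal{S}} P(s'\mid s)\, W(s')^2.
\end{equation*}
Integrating (or summing) against the stationary distribution $\mu$ then gives
\begin{equation*}
\|PW\|^2 \;=\; \sum_{s} \mu(s)\bigl((PW)(s)\bigr)^2 \;\le\; \sum_{s}\mu(s)\sum_{s'} P(s'\mid s)\, W(s')^2.
\end{equation*}

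Next I would swap the order of summation and collapse the inner sum using the stationarity property $\sum_s \mu(s) P(s'\mid s) = \mu(s')$, which is exactly the invariance of $\mu$ guaranteed by Assumption~1. This reduces the right-hand side to $\sum_{s'}\mu(s') W(s')^2 = \|W\|^2$, yielding $\|PW\| \le \|W\|$, i.e. the non-expansiveness claim. Finally I would take the square root and restore $W = V_1 - V_2$ to conclude.

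The only step that requires any care is the appeal to stationarity when collapsing $\sum_s \mu(s) P(s'\mid s)$ to $\mu(s')$; this is where Assumption~1 (ergodicity, hence existence of an invariant measure against which the $L_2$ norm is defined) is used, and it is the reason the statement holds for $L_2$ at all rather than merely for $L_\infty$. If instead $\|\cdot\|$ is interpreted as the sup norm, the argument is even shorter: pull the absolute value inside the sum, bound $|W(s')|$ by $\|W\|_\infty$, and use $\sum_{s'} P(s'\mid s) = 1$. No genuine obstacle arises, as this lemma is a standard preparatory fact used later to combine with the $\gamma$-contraction of the reward/intervention part in Lemma~\ref{lemma:bellman_contraction} and Theorem~\ref{primal_convergence_theorem}.
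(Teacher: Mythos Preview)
Your proof is correct and follows essentially the same route as the paper: apply Jensen's inequality to the conditional expectation $(PW)(s)=\mathbb{E}[W(s_1)\mid s_0=s]$, then use stationarity (the paper phrases this as the iterated law of expectations together with $\mathbb{E}[W^2(s_1)]=\|W\|^2$) to collapse the outer average. The only cosmetic difference is that you write the argument out with explicit sums against $\mu$, whereas the paper keeps the probabilistic $\mathbb{E}[\cdot]$ notation throughout.
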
 
\begin{proof}
The result is well-known e.g. \citep{tsitsiklis1999optimal}. We give a proof using the Tonelli-Fubini theorem and the iterated law of expectations, we have that:
\begin{align*}
&\|PJ\|^2=\mathbb{E}\left[(PJ)^2[s_0]\right]=\mathbb{E}\left(\left[\mathbb{E}\left[J[s_1]|s_0\right]\right)^2\right]
\leq \mathbb{E}\left[\mathbb{E}\left[J^2[s_1]|s_0\right]\right] 
= \mathbb{E}\left[J^2[s_1]\right]=\|J\|^2,
\end{align*}
where we have used Jensen's inequality to generate the inequality. This completes the proof.
\end{proof}

\section*{Proof of Theorem \ref{theorem:existence}}

\begin{proof}[Proof of Lemma \ref{lemma:bellman_contraction}]

In what follows, we employ the following shorthands:
\begin{align*}
&\mathcal{P}^{{a}}_{ss'}=:\sum_{s'\in\mathcal{S}}P(s';{a},s), \quad\mathcal{P}^{{\pi}}_{ss'}=:\sum_{{a}\in\mathcal{A}}{\pi}({a}|s)\mathcal{P}^{{a}}_{ss'}, \quad \mathcal{R}^{{\pi}}(s_t):=\sum_{{a}_t\in\mathcal{A}}{\pi}({a}_t|s)R(s_t,a_t)
\end{align*}
For notational simplicity, where it will not cause confusion we also drop the dependence of the functions $v^{\pi,\mathfrak{g}}, Q^{\pi,\mathfrak{g}}$ on the policy pair $({\pi,\mathfrak{g}})$. With a slight abuse of notation we will write $
\mathcal{M}v^{\pi,\mathfrak{g}}(s_{\tau_k}):=\underset{a\in\cA}{\sup}\left\{\cR(s_{\tau_k},a)-c(s_{\tau_k},a)+\gamma\sum_{s'\in\mathcal{S}}P(s';a,s)v^{\pi,\mathfrak{g}}(s')\right\}$ in which case the Bellman operator $T$ acting on a function $v^{\pi,\mathfrak{g}}:\mathcal{S}\to\mathbb{R}$ by
% % 
\begin{align} 
T v^{\pi,\mathfrak{g}}(s):=&\max\Big\{\mathcal{M}[v^{\pi,\mathfrak{g}}(s)], \cR(s,0)+\gamma\sum_{s'\in\mathcal{S}}P(s';0,s)v^{\pi,\mathfrak{g}}(s')\Big\},\qquad \forall s \in\cS. \label{bellman_proof_start} 
\end{align}

To prove that $T$ is a contraction, we consider the three cases produced by \eqref{bellman_proof_start}, that is to say we prove the following statements:

i) $\qquad\qquad
\left| \cR(s_t,a_t)+\gamma\underset{{a}\in\mathcal{A}}{\max}\;\mathcal{P}^{{a}}_{s's_t}v(s')-\left( \cR(s_t,a_t)+\gamma\underset{{a}\in\mathcal{A}}{\max}\;\mathcal{P}^{{a}}_{s's_t}v'(s')\right)\right|\leq \gamma\left\|v-v'\right\|$

ii) $\qquad\qquad
\left\|\mathcal{M}v-\mathcal{M}v'\right\|\leq    \gamma\left\|v-v'\right\|,\qquad \qquad$
  (and hence $\mathcal{M}$ is a contraction).

iii) $\qquad\qquad
    \left\|\mathcal{M}v-\left[ \cR(\cdot,a)+\gamma\underset{{a}\in\mathcal{A}}{\max}\;\mathcal{P}^{{a}}v'\right]\right\|\leq \gamma\left\|v-v'\right\|
$.

We begin by proving i).

Indeed, for any ${a}\in\mathcal{A}$ and $\forall s_t\in\mathcal{S}, \forall s'\in\mathcal{S}$ we have that 
\begin{align*}
&\left| \cR(s_t,a_t)+\gamma\underset{{a}\in\mathcal{A}}{\max}\;\;\mathcal{P}^a_{s's_t}v(s')-\left[ \cR(s_t,a_t)+\gamma\underset{{a}\in\mathcal{A}}{\max}\;\;\mathcal{P}^{{a}}_{s's_t}v'(s')\right]\right|
\\&\leq \underset{{a}\in\mathcal{A}}{\max}\;\left|\gamma\mathcal{P}^{{a}}_{s's_t}v(s')-\gamma\mathcal{P}^{{a}}_{s's_t}v'(s')\right|
\\&\leq \gamma\left\|Pv-Pv'\right\|
\\&\leq \gamma\left\|v-v'\right\|,
\end{align*}
again using the fact that $P$ is non-expansive and Lemma \ref{max_lemma}.

We now prove ii).

For any $\tau\in\mathcal{F}$, define by $\tau'=\inf\{t>\tau|s_t\in \cS_I,\tau\in\mathcal{F}_t\}$. Now using the definition of $\mathcal{M}$ we have that for any $s_\tau\in\mathcal{S}$
\begin{align*}
&\left|(\mathcal{M}v-\mathcal{M}v')(s_{\tau})\right|
\\&\leq \underset{a_\tau\in \mathcal{A}}{\max}    \Bigg|\cR(s_\tau,a_\tau)+c(s_\tau,a_\tau)+\gamma\mathcal{P}^{{\pi}}_{s's_\tau}\mathcal{P}^{{a}}v(s_{\tau})-\left(\cR(s_\tau,a_\tau)+c(s_\tau,a_\tau)+\gamma\mathcal{P}^{{\pi}}_{s's_\tau}\mathcal{P}^{{a}}v'(s_{\tau})\right)\Bigg| 
\\&= \gamma\left|\mathcal{P}^{{\pi}}_{s's_\tau}\mathcal{P}^{{a}}v(s_{\tau})-\mathcal{P}^{{\pi}}_{s's_\tau}\mathcal{P}^{{a}}v'(s_{\tau})\right| 
\\&\leq \gamma\left\|Pv-Pv'\right\|
\\&\leq \gamma\left\|v-v'\right\|,
\end{align*}
using the fact that $P$ is non-expansive. The result can then be deduced easily by applying max on both sides.

We now prove iii). We split the proof of the statement into two cases:

\textbf{Case 1:} 
\begin{align}\mathcal{M}v(s_{\tau})-\left(\cR(s_\tau,a_\tau)+\gamma\underset{{a}\in\mathcal{A}}{\max}\;\mathcal{P}^{{a}}_{s's_\tau}v'(s')\right)<0.
\end{align}

We now observe the following:
\begin{align*}
&\mathcal{M}v(s_{\tau})-\cR(s_\tau,a_\tau)+\gamma\underset{{a}\in\mathcal{A}}{\max}\;\mathcal{P}^{{a}}_{s's_\tau}v'(s')
\\&\leq\max\left\{\cR(s_\tau,a_\tau)+\gamma\mathcal{P}^{{\pi}}_{s's_\tau}\mathcal{P}^{{a}}v(s'),\mathcal{M}v(s_{\tau})\right\}
-\cR(s_\tau,a_\tau)+\gamma\underset{{a}\in\mathcal{A}}{\max}\;\mathcal{P}^{{a}}_{s's_\tau}v'(s')
\\&\leq \Bigg|\max\left\{\cR(s_\tau,a_\tau)+\gamma\mathcal{P}^{{\pi}}_{s's_\tau}\mathcal{P}^{{a}}v(s'),\mathcal{M}v(s_{\tau})\right\}
-\max\left\{\cR(s_\tau,a_\tau)+\gamma\underset{{a}\in\mathcal{A}}{\max}\;\mathcal{P}^{{a}}_{s's_\tau}v'(s'),\mathcal{M}v(s_{\tau})\right\}
\\&+\max\left\{\cR(s_\tau,a_\tau)+\gamma\underset{{a}\in\mathcal{A}}{\max}\;\mathcal{P}^{{a}}_{s's_\tau}v'(s'),\mathcal{M}v(s_{\tau})\right\}
-\cR(s_\tau,a_\tau)+\gamma\underset{{a}\in\mathcal{A}}{\max}\;\mathcal{P}^{{a}}_{s's_\tau}v'(s')\Bigg|
\\&\leq \Bigg|\max\left\{\cR(s_\tau,a_\tau)+\gamma\underset{{a}\in\mathcal{A}}{\max}\;\mathcal{P}^{{a}}_{s's_\tau}v(s'),\mathcal{M}v(s_{\tau})\right\}
-\max\left\{\cR(s_\tau,a_\tau)+\gamma\underset{{a}\in\mathcal{A}}{\max}\;\mathcal{P}^{{a}}_{s's_\tau}v'(s'),\mathcal{M}v(s_{\tau})\right\}\Bigg|
\\&\qquad+\Bigg|\max\left\{\cR(s_\tau,a_\tau)+\gamma\underset{{a}\in\mathcal{A}}{\max}\;\mathcal{P}^{{a}}_{s's_\tau}v'(s'),\mathcal{M}v(s_{\tau})\right\}-\cR(s_\tau,a_\tau)+\gamma\underset{{a}\in\mathcal{A}}{\max}\;\mathcal{P}^{{a}}_{s's_\tau}v'(s')\Bigg|
\\&\leq \gamma\underset{a\in\mathcal{A}}{\sup}\;\left|\mathcal{P}^{{\pi}}_{s's_\tau}\mathcal{P}^{{a}}v(s')-\mathcal{P}^{{\pi}}_{s's_\tau}\mathcal{P}^{{a}}v'(s')\right|
\\&\qquad+\left|\max\left\{0,\mathcal{M}v(s_{\tau})-\left(\cR(s_\tau,a_\tau)+\gamma\underset{{a}\in\mathcal{A}}{\max}\;\mathcal{P}^{{a}}_{s's_\tau}v'(s')\right)\right\}\right|
\\&\leq \gamma\left\|Pv-Pv'\right\|
\\&\leq \gamma\|v-v'\|,
\end{align*}
where we have used the fact that for any scalars $a,b,c$ we have that $
    \left|\max\{a,b\}-\max\{b,c\}\right|\leq \left|a-c\right|$ and the non-expansiveness of $P$.

\textbf{Case 2: }
\begin{align*}\mathcal{M}v(s_{\tau})-\left(\cR(s_\tau,a_\tau)+\gamma\underset{{a}\in\mathcal{A}}{\max}\;\mathcal{P}^{{a}}_{s's_\tau}v'(s')\right)\geq 0.
\end{align*}

For this case, first recall that for any $\tau\in\mathcal{F}$, $-c(s_\tau,a_\tau)>\lambda$ for some $\lambda >0$.
\begin{align*}
&\mathcal{M}v(s_{\tau})-\left(\cR(s_\tau,a_\tau)+\gamma\underset{{a}\in\mathcal{A}}{\max}\;\mathcal{P}^{{a}}_{s's_\tau}v'(s')\right)
\\&\leq \mathcal{M}v(s_{\tau})-\left(\cR(s_\tau,a_\tau)+\gamma\underset{{a}\in\mathcal{A}}{\max}\;\mathcal{P}^{{a}}_{s's_\tau}v'(s')\right)-c(s_\tau,a_\tau)
\\&\leq \cR(s_\tau,a_\tau)+c(s_\tau,a_\tau)+\gamma\mathcal{P}^{{\pi}}_{s's_\tau}\mathcal{P}^{{a}}v(s')
\\&\qquad\qquad\qquad\qquad\quad-\left(\cR(s_\tau,a_\tau)+c(s_\tau,a_\tau)+\gamma\underset{{a}\in\mathcal{A}}{\max}\;\mathcal{P}^{{a}}_{s's_\tau}v'(s')\right)
\\&\leq \gamma\underset{{a}\in\mathcal{A}}{\max}\;\left|\mathcal{P}^{{\pi}}_{s's_\tau}\mathcal{P}^{{a}}\left(v(s')-v'(s')\right)\right|
\\&\leq \gamma\left|v(s')-v'(s')\right|
\\&\leq \gamma\left\|v-v'\right\|,
\end{align*}
again using the fact that $P$ is non-expansive. Hence we have succeeded in showing that for any $v\in L_2$ we have that
\begin{align}
    \left\|\mathcal{M}v-\underset{{a}\in\mathcal{A}}{\max}\;\left[ v(\cdot,a)+\gamma\mathcal{P}^{{a}}v'\right]\right\|\leq \gamma\left\|v-v'\right\|.\label{off_M_bound_gen}
\end{align}
Gathering the results of the three cases gives the desired result. 
\end{proof}

To prove part ii), we make use of the following result:
\begin{theorem}[Theorem 1, pg 4 in \citep{jaakkola1994convergence}]
Let $\Xi_t(s)$ be a random process that takes values in $\mathbb{R}^n$ and given by the following:
\begin{align}
    \Xi_{t+1}(s)=\left(1-\alpha_t(s)\right)\Xi_{t}(s)\alpha_t(s)L_t(s),
\end{align}
then $\Xi_t(s)$ converges to $0$ with probability $1$ under the following conditions:
\begin{itemize}
\item[i)] $0\leq \alpha_t\leq 1, \sum_t\alpha_t=\infty$ and $\sum_t\alpha_t<\infty$
\item[ii)] $\|\mathbb{E}[L_t|\mathcal{F}_t]\|\leq \gamma \|\Xi_t\|$, with $\gamma <1$;
\item[iii)] ${\rm Var}\left[L_t|\mathcal{F}_t\right]\leq c(1+\|\Xi_t\|^2)$ for some $c>0$.
\end{itemize}
\end{theorem}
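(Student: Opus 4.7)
The plan is to treat this as a classical stochastic approximation result and proceed by the supermartingale convergence framework (essentially the Robbins--Siegmund recipe, which is the template underlying Jaakkola--Jordan--Singh and Tsitsiklis). First I would correct what looks like a typo and read the recursion as $\Xi_{t+1}(s) = (1-\alpha_t(s))\Xi_t(s) + \alpha_t(s) L_t(s)$, then decompose $L_t = \mathbb{E}[L_t|\mathcal{F}_t] + \eta_t$, where $\eta_t := L_t - \mathbb{E}[L_t\mid\mathcal{F}_t]$ is a martingale difference sequence adapted to $\{\mathcal{F}_t\}$. This cleanly splits the update into a deterministic contractive part, controlled by hypothesis (ii), and a zero-mean noise part whose conditional second moment is bounded by hypothesis (iii).

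Second, I would derive the key quadratic recursion. Squaring the update and conditioning on $\mathcal{F}_t$ gives a bound of the form
\begin{align*}
\mathbb{E}\!\left[\|\Xi_{t+1}\|^2 \mid \mathcal{F}_t\right]
&\le (1-\alpha_t)^2 \|\Xi_t\|^2 + 2(1-\alpha_t)\alpha_t\, \gamma \|\Xi_t\|^2 + \alpha_t^2\, c\bigl(1+\|\Xi_t\|^2\bigr),
\end{align*}
after applying Cauchy--Schwarz on the cross term together with (ii), and (iii) on the residual. Rearranging yields an inequality of Robbins--Siegmund type, namely $\mathbb{E}[\|\Xi_{t+1}\|^2 \mid \mathcal{F}_t] \le (1+A_t)\|\Xi_t\|^2 - B_t + C_t$ with $B_t \ge 2\alpha_t(1-\gamma)\|\Xi_t\|^2$ absorbing the contraction gap, and $A_t, C_t = \mathcal{O}(\alpha_t^2)$ summable by hypothesis (i). This is the deterministic contraction-plus-summable-noise structure that the supermartingale machinery is designed to exploit.

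Third, I would apply the Robbins--Siegmund almost-supermartingale convergence theorem. Because $\sum_t \alpha_t^2 < \infty$ forces $\sum_t A_t$ and $\sum_t C_t$ to be finite almost surely, the theorem yields two conclusions simultaneously: that $\|\Xi_t\|^2$ converges almost surely to some finite limit, and that $\sum_t B_t < \infty$ a.s. The latter, combined with $\sum_t \alpha_t = \infty$, forces $\liminf_t \|\Xi_t\|^2 = 0$ almost surely, and together with convergence of $\|\Xi_t\|^2$ pins the limit to zero. For the componentwise version, I would treat each coordinate $s$ separately under its own step-size sequence $\alpha_t(s)$, noting that the argument is local in $s$ once the conditional-expectation and variance bounds are interpreted in the appropriate norm.

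The hard part will be step two: handling the variance bound (iii), which depends on $\|\Xi_t\|^2$ itself, creates a circular dependence where controlling the noise seems to require an a priori bound on $\|\Xi_t\|$ that we have not yet established. The elegance of the Robbins--Siegmund formulation is precisely that it bypasses this bootstrap by folding the $\|\Xi_t\|^2$-proportional noise contribution into the $A_t$ multiplicative term (through $c\alpha_t^2 \|\Xi_t\|^2$) and the constant noise contribution into $C_t$; verifying summability of $A_t$ and the sign/ordering of $B_t$ carefully is the most delicate bookkeeping, especially ensuring that the contraction gap $1-\gamma>0$ is not swamped by the $\mathcal{O}(\alpha_t^2)$ perturbations for all sufficiently large $t$.
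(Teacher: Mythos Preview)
The paper does not prove this theorem; it is quoted as an external result from Jaakkola, Jordan and Singh (1994) and used as a black box. The paper's only work here is to verify hypotheses (i)--(iii) for its particular Q-learning update (this is the content of the proof of Theorem~\ref{theorem:q_learning}). So there is no paper-side proof to compare your attempt against.

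On your sketch itself: the Robbins--Siegmund route is a natural instinct, but the ``treat each coordinate $s$ separately'' step does not go through. Condition (ii) is a sup-norm contraction, so the bound on the $s$-th component of $\mathbb{E}[L_t\mid\mathcal{F}_t]$ is $\gamma\max_{s'}|\Xi_t(s')|$, not $\gamma|\Xi_t(s)|$; the components are genuinely coupled through the norm on the right-hand side. At the same time the step sizes $\alpha_t(s)$ are asynchronous, so a single quadratic Lyapunov function $\|\Xi_t\|_2^2$ does not yield the clean recursion you wrote either, because the factors $(1-\alpha_t(s))$ differ across coordinates and do not pull outside the sum. This coupling-plus-asynchrony is exactly the obstruction that the original Jaakkola--Jordan--Singh proof handles by a rescaling/comparison argument against an auxiliary deterministic sup-norm process (and that Tsitsiklis handles via an ODE-type argument); a direct Robbins--Siegmund application in the manner you describe leaves that gap open.
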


To prove the result, we show (i) - (iii) hold. Condition (i) holds by choice of learning rate. It therefore remains to prove (ii) - (iii). We first prove (ii). For this, we consider our variant of the Q-learning update rule:
\begin{align*}
Q_{t+1}(s_t,a_t)=Q_{t}&(s_t,a_t)
\\&+\alpha_t(s_t,a_t)\left[\max\left\{\mathcal{M}Q_t(s_t,a_t), \cR(s_t,a_t)+\gamma\underset{a'\in\mathcal{A}}{\sup}\;Q_t(s_{t+1},a')\right\}-Q_{t}(s_t,a_t)\right].
\end{align*}
After subtracting $Q^\star(s_t,a_t)$ from both sides and some manipulation we obtain that:
\begin{align*}
&\Xi_{t+1}(s_t,a_t)
\\&=(1-\alpha_t(s_t,a_t))\Xi_{t}(s_t,a_t)
\\&\qquad\qquad\qquad\qquad\;\;+\alpha_t(s_t,a_t)\left[\max\left\{\mathcal{M}Q_t(s_t,a_t), \cR(s_t,a_t)+\gamma\underset{a'\in\mathcal{A}}{\sup}\;Q_t(s_{t+1},a')\right\}-Q^\star(s_t,a_t)\right],  \end{align*}
where $\Xi_{t}(s_t,a_t):=Q_t(s_t,a_t)-Q^\star(s_t,a_t)$.

Let us now define by 
\begin{align*}
L_t(s_{\tau_k},a):=\max\left\{\mathcal{M}Q(s_{\tau_k},a), \cR(s_{\tau_k},a)+\gamma\underset{a'\in\mathcal{A}}{\sup}\;Q(s',a')\right\}-Q^\star(s_t,a).
\end{align*}
Then
\begin{align}
\Xi_{t+1}(s_t,a_t)=(1-\alpha_t(s_t,a_t))\Xi_{t}(s_t,a_t)+\alpha_t(s_t,a_t))\left[L_t(s_{\tau_k},a)\right].   
\end{align}

% Let $z_{\tau_k}\equiv s_{\tau_k}$
We now observe that
\begin{align}\nonumber
\mathbb{E}\left[L_t(s_{\tau_k},a)|\mathcal{F}_t\right]&=\sum_{s'\in\mathcal{S}}P(s';a,s_{\tau_k})\max\left\{\mathcal{M}Q(s_{\tau_k},a), \cR(s_{\tau_k},a)+\gamma\underset{a'\in\mathcal{A}}{\sup}\;Q(s',a')\right\}-Q^\star(s_{\tau_k},a)
\\&= T_\phi Q_t(s,a)-Q^\star(s,a). \label{expectation_L}
\end{align}
Now, using the fixed point property that implies $Q^\star=T_\phi Q^\star$, we find that
\begin{align}\nonumber
    \mathbb{E}\left[L_t(s_{\tau_k},a)|\mathcal{F}_t\right]&=T_\phi Q_t(s,a)-T_\phi Q^\star(s,a)
    \\&\leq\left\|T_\phi Q_t-T_\phi Q^\star\right\|\nonumber
    \\&\leq \gamma\left\| Q_t- Q^\star\right\|_\infty=\gamma\left\|\Xi_t\right\|_\infty.
\end{align}
using the contraction property of $T$ established in Lemma \ref{lemma:bellman_contraction}. This proves (ii).

% Define by $\hat{F}(s_{\tau_k},a):=\left(\mathcal{M}Q(s_{\tau_k}), \cR(s_{\tau_k},a)+\gamma\underset{a\in\mathcal{A}}{\sup}\;\sum_{s'\in\mathcal{S}}P(s';a,s_{\tau_k})Q(s',I(\tau_k))\right)$
We now prove iii), that is
\begin{align}
    {\rm Var}\left[L_t|\mathcal{F}_t\right]\leq c(1+\|\Xi_t\|^2).
\end{align}
Now by \eqref{expectation_L} we have that
\begin{align*}
  {\rm Var}\left[L_t|\mathcal{F}_t\right]&= {\rm Var}\left[\max\left\{\mathcal{M}Q_t(s_t,a_t), \cR(s_t,a_t)+\gamma\underset{a'\in\mathcal{A}}{\sup}\;Q_t(s_{t+1},a')\right\}-Q^\star(s_t,a)\right]
  \\&= \mathbb{E}\Bigg[\Bigg(\max\left\{\mathcal{M}Q(s_{\tau_k},a), \cR(s_{\tau_k},a)+\gamma\underset{a'\in\mathcal{A}}{\sup}\;Q(s',a')\right\}
  \\&\qquad\qquad\qquad\qquad\qquad\quad\quad\quad-Q^\star(s_t,a)-\left(T Q_t(s,a)-Q^\star(s,a)\right)\Bigg)^2\Bigg]
      \\&= \mathbb{E}\left[\left(\max\left\{\mathcal{M}Q(s_{\tau_k},a), \cR(s_{\tau_k},a)+\gamma\underset{a'\in\mathcal{A}}{\sup}\;Q(s',a')\right\}-T Q_t(s,a)\right)^2\right]
    \\&= {\rm Var}\left[\max\left\{\mathcal{M}Q_t(s_t,a_t), \cR(s_t,a_t)+\gamma\underset{a'\in\mathcal{A}}{\sup}\;Q_t(s_{t+1},a')\right\}-T Q_t(s,a))^2\right]
    \\&\leq c(1+\|\Xi_t\|^2),
\end{align*}
for some $c>0$ where the last line follows due to the boundedness of $Q$ (which follows from Assumptions 2 and 4). This concludes the proof of the Theorem.
% With this, the result can also be extended to fitted Q learning using methods established in \citep{munos2008finite,antos2007fitted} see e.g. Theorem 2, pg 16 in \citep{munos2008finite}.
\section*{Proof of Convergence with Linear Function Approximation}
First let us recall the statement of the theorem:
\begin{customthm}{3}
LICRA converges to a limit point $r^\star$ which is the unique solution to the equation:
\begin{align}
\Pi \mathfrak{F} (\Phi r^\star)=\Phi r^\star, \qquad \text{a.e.}
\end{align}
where we recall that for any test function $\psi \in \mathcal{V}$, the operator $\mathfrak{F}$ is defined by $
    \mathfrak{F}\psi:=\cR+\gamma P \max\{\mathcal{M}\psi,\psi\}$.

Moreover, $r^\star$ satisfies the following:
\begin{align}
    \left\|\Phi r^\star - Q^\star\right\|\leq c\left\|\Pi Q^\star-Q^\star\right\|.
\end{align}
\end{customthm}

The theorem is proven using a set of results that we now establish. To this end, we first wish to prove the following bound:    
\begin{lemma}
For any $Q\in\mathcal{V}$ we have that
\begin{align}
    \left\|\mathfrak{F}Q-Q'\right\|\leq \gamma\left\|Q-Q'\right\|,
\end{align}
so that the operator $\mathfrak{F}$ is a contraction.
\end{lemma}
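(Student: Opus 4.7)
My plan is to prove the contraction property by peeling off the three components of $\mathfrak{F}$ in turn: the constant term $\Theta$ cancels in the difference, the transition operator $P$ is handled by its non-expansiveness (Lemma A.4), and the remaining $\max\{\mathcal{M}\cdot,\cdot\}$ term is controlled by combining an elementary max-difference inequality with the contraction property of the intervention operator $\mathcal{M}$ established inside the proof of Lemma \ref{lemma:bellman_contraction}.

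Concretely, I would first write
\begin{align*}
\mathfrak{F}Q - \mathfrak{F}Q' = \gamma P\bigl(\max\{\mathcal{M}Q, Q\} - \max\{\mathcal{M}Q', Q'\}\bigr),
\end{align*}
so that the $\Theta$ term drops out. Applying $\|P(\cdot)\| \le \|\cdot\|$ from Lemma \ref{non_expansive_P} yields
\begin{align*}
\|\mathfrak{F}Q - \mathfrak{F}Q'\| \le \gamma \bigl\|\max\{\mathcal{M}Q, Q\} - \max\{\mathcal{M}Q', Q'\}\bigr\|.
\end{align*}
Next, I would invoke the pointwise elementary inequality $|\max\{a,b\} - \max\{c,d\}| \le \max\{|a-c|, |b-d|\}$, which can be derived from a two-line application of Lemma \ref{max_lemma} treating $\max\{\cdot,\cdot\}$ as a maximum over a two-element set. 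This gives
\begin{align*}
\bigl\|\max\{\mathcal{M}Q, Q\} - \max\{\mathcal{M}Q', Q'\}\bigr\| \le \max\bigl\{\|\mathcal{M}Q - \mathcal{M}Q'\|,\ \|Q - Q'\|\bigr\}.
\end{align*}

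Finally, I would recall that inside the proof of Lemma \ref{lemma:bellman_contraction} it is already shown (case ii) that $\|\mathcal{M}^{\pi}\psi - \mathcal{M}^{\pi}\psi'\| \le \gamma\|\psi - \psi'\|$, and in particular $\|\mathcal{M}Q - \mathcal{M}Q'\| \le \gamma \|Q - Q'\| \le \|Q - Q'\|$ since $\gamma \in [0,1)$. Substituting this back gives $\max\{\|\mathcal{M}Q - \mathcal{M}Q'\|, \|Q - Q'\|\} \le \|Q - Q'\|$, and therefore $\|\mathfrak{F}Q - \mathfrak{F}Q'\| \le \gamma \|Q - Q'\|$, which is the desired contraction. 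The main obstacle here is not a technical one but a bookkeeping one, namely to justify cleanly that the difference of the two maxima is bounded by the max of pointwise differences in the chosen norm; once this step is in place the rest is an immediate chain of the two non-expansiveness/contraction properties already in hand.
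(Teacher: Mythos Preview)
Your argument is correct and follows the same scaffolding as the paper: cancel $\Theta$, apply the non-expansiveness of $P$ (Lemma~\ref{non_expansive_P}), and then control the difference of the pointwise maxima. The one place you diverge is in that last step. The paper bounds $|\max\{a,b\}-\max\{c,d\}|$ by the four-way maximum $\max\{|a-c|,|a-d|,|b-c|,|b-d|\}$, which then forces it to invoke not only the contraction of $\mathcal{M}$ but also the ``cross'' estimate $\|\mathcal{M}Q-\hat Q\|\le\gamma\|Q-\hat Q\|$ extracted from case~iii of Lemma~\ref{lemma:bellman_contraction}. You instead use the sharper two-term inequality $|\max\{a,b\}-\max\{c,d\}|\le\max\{|a-c|,|b-d|\}$, so you only need case~ii (the contraction of $\mathcal{M}$ itself). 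Your route is therefore slightly more economical---it avoids the cross-term bound altogether---while the paper's version has the minor advantage that the four-way inequality is marginally more obvious pointwise. Either way the conclusion is the same, and your bookkeeping concern about passing the pointwise max bound through the norm is handled exactly as in the paper (the sup of a pointwise bound dominates the norm).
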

\begin{proof}
Recall, for any test function $\psi$ , a projection operator $\Pi$ acting $\psi$ is defined by the following 
\begin{align*}
\Pi \psi:=\underset{\bar{\psi}\in\{\Phi r|r\in\mathbb{R}^p\}}{\arg\min}\left\|\bar{\psi}-\psi\right\|. 
\end{align*}
Now, we first note that in the proof of Lemma \ref{lemma:bellman_contraction}, we deduced that for any $\psi\in L_2$ we have that
\begin{align*}
    \left\|\mathcal{M}\psi-\left[ \cR(\cdot,a)+\gamma\underset{a\in\mathcal{A}}{\sup}\;\mathcal{P}^a\psi'\right]\right\|\leq \gamma\left\|\psi-\psi'\right\|,
\end{align*}
(c.f. Lemma \ref{lemma:bellman_contraction}). 

Setting $\psi=Q$ and $\psi'=Q'$ it can be straightforwardly deduced that for any $Q,\hat{Q}\in L_2$:
    $\left\|\mathcal{M}Q-\hat{Q}\right\|\leq \gamma\left\|Q-\hat{Q}\right\|$. Hence, using the contraction property of $\mathcal{M}$, we readily deduce the following bound:
\begin{align}\max\left\{\left\|\mathcal{M}Q-\hat{Q}\right\|,\left\|\mathcal{M}Q-\mathcal{M}\hat{Q}\right\|\right\}\leq \gamma\left\|Q-\hat{Q}\right\|,
\label{m_bound_q_twice}
\end{align}
    
We now observe that $\mathfrak{F}$ is a contraction. Indeed, since for any $Q,Q'\in L_2$ we have that:
% \begin{align*}
% \left\|\mathfrak{F}Q-\mathfrak{F}Q'\right\|&=\left\|\cR+\gamma P \max\{\mathcal{M}Q,Q\}-\left(\cR+\gamma P \max\{\mathcal{M}Q',Q'\}\right)\right\|
% \\&=\gamma \left\|P \max\{\mathcal{M}Q,Q\}-P \max\{\mathcal{M}Q',Q'\}\right\|
% \\&\leq\gamma \left\| \max\{\mathcal{M}Q,Q\}- \max\{\mathcal{M}Q',Q'\}\right\|
% \\&\leq\gamma \left\| \max\{\mathcal{M}Q,Q\}-\max\{\mathcal{M}\bar{Q},Q\}\right\|+\gamma\left\|\max\{\mathcal{M}\bar{Q},Q\}- \max\{\mathcal{M}Q',Q'\}\right\|
% \\&\leq\gamma \left\| \max\{\mathcal{M}Q,Q\}-\max\{\mathcal{M}Q',Q\}\right\|+\gamma\left\|\max\{\mathcal{M}Q',Q\}- \max\{\mathcal{M}Q',Q'\}\right\|
% \\&\leq\gamma \left\| \mathcal{M}Q,-\mathcal{M}Q'\right\|+\gamma\left\|Q- Q'\right\|
% \\&\leq\gamma(1+\gamma) \left\|Q-Q'\right\|
% \end{align*}
% 
% 
% 
\begin{align*}
\left\|\mathfrak{F}Q-\mathfrak{F}Q'\right\|&=\left\|\cR+\gamma P \max\{\mathcal{M}Q,Q\}-\left(\cR+\gamma P \max\{\mathcal{M}Q',Q'\}\right)\right\|
\\&=\gamma \left\|P \max\{\mathcal{M}Q,Q\}-P \max\{\mathcal{M}Q',Q'\}\right\|
\\&\leq\gamma \left\| \max\{\mathcal{M}Q,Q\}- \max\{\mathcal{M}Q',Q'\}\right\|
\\&\leq\gamma \left\| \max\{\mathcal{M}Q-\mathcal{M}Q',Q-\mathcal{M}Q',\mathcal{M}Q-Q',Q-Q'\}\right\|
\\&\leq\gamma \max\{\left\|\mathcal{M}Q-\mathcal{M}Q'\right\|,\left\|Q-\mathcal{M}Q'\right\|,\left\|\mathcal{M}Q-Q'\right\|,\left\|Q-Q'\right\|\}
\\&=\gamma\left\|Q-Q'\right\|,
\end{align*}
using \eqref{m_bound_q_twice} and again using the non-expansiveness of $P$.
\end{proof}
We next show that the following two bounds hold:
\begin{lemma}\label{projection_F_contraction_lemma}
For any $Q\in\mathcal{V}$ we have that
\begin{itemize}
    \item[i)] 
$\qquad\qquad
    \left\|\Pi \mathfrak{F}Q-\Pi \mathfrak{F}\bar{Q}\right\|\leq \gamma\left\|Q-\bar{Q}\right\|$,
    \item[ii)]$\qquad\qquad\left\|\Phi r^\star - Q^\star\right\|\leq \frac{1}{\sqrt{1-\gamma^2}}\left\|\Pi Q^\star - Q^\star\right\|$. 
\end{itemize}
\end{lemma}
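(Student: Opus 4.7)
The plan is to treat the two parts in sequence, first establishing that $\Pi\mathfrak{F}$ is a contraction and then using its unique fixed point together with a Pythagorean decomposition to bound the approximation error against $Q^\star$. The key structural fact I will rely on is that the projection $\Pi$ onto the subspace $\{\Phi r \mid r\in\mathbb{R}^p\}$ is an orthogonal projection with respect to the norm $\|\cdot\|$ and is therefore non-expansive, together with the contractivity of $\mathfrak{F}$ established in the preceding lemma.

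For part (i), I would simply compose the two facts: $\Pi$ is non-expansive because it is an orthogonal projection (minimiser of distance to a closed subspace), so for any $Q,\bar{Q}\in\mathcal{V}$,
\begin{align*}
\bigl\|\Pi\mathfrak{F}Q - \Pi\mathfrak{F}\bar{Q}\bigr\| \;\leq\; \bigl\|\mathfrak{F}Q - \mathfrak{F}\bar{Q}\bigr\| \;\leq\; \gamma\,\|Q-\bar{Q}\|,
\end{align*}
where the second inequality is the preceding lemma. As a by-product this shows $\Pi\mathfrak{F}$ has a unique fixed point $\Phi r^\star$ by the Banach fixed-point theorem, justifying the formulation of Theorem~\ref{primal_convergence_theorem}.

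For part (ii), I would exploit the Pythagorean identity available because $\Phi r^\star$ lies in the range of $\Pi$ while $Q^\star - \Pi Q^\star$ is orthogonal to that range:
\begin{align*}
\|\Phi r^\star - Q^\star\|^2 \;=\; \|\Phi r^\star - \Pi Q^\star\|^2 + \|\Pi Q^\star - Q^\star\|^2 .
\end{align*}
To control the first term on the right, I use that $\Phi r^\star = \Pi\mathfrak{F}(\Phi r^\star)$ and $\Pi Q^\star = \Pi\mathfrak{F} Q^\star$ (the latter because $Q^\star = \mathfrak{F} Q^\star$), so by part (i),
\begin{align*}
\|\Phi r^\star - \Pi Q^\star\| \;=\; \|\Pi\mathfrak{F}(\Phi r^\star) - \Pi\mathfrak{F}(Q^\star)\| \;\leq\; \gamma\,\|\Phi r^\star - Q^\star\|.
\end{align*}
Substituting this back into the Pythagorean identity and rearranging gives $(1-\gamma^2)\|\Phi r^\star - Q^\star\|^2 \leq \|\Pi Q^\star - Q^\star\|^2$, which yields the claimed bound.

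The only delicate point, and the one I would want to verify carefully, is that $\Pi$ is genuinely an orthogonal projection with respect to the norm under which $\mathfrak{F}$ is a contraction; this is what simultaneously gives both non-expansiveness of $\Pi$ and the Pythagorean decomposition. Under the ergodicity and integrability assumptions of the appendix, $\|\cdot\|$ is naturally the $L_2$ norm weighted by the invariant distribution, in which case $\Pi$ is the orthogonal projection onto the span of $\{\phi_1,\ldots,\phi_p\}$ and both properties follow from standard Hilbert-space arguments. With that identification in hand, the remainder of the argument is purely algebraic.
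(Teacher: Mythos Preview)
Your proposal is correct and follows essentially the same route as the paper: part (i) is exactly the paper's argument (non-expansiveness of $\Pi$ followed by the contraction property of $\mathfrak{F}$), and part (ii) is the same Pythagorean decomposition combined with the fixed-point identities $\Phi r^\star=\Pi\mathfrak{F}(\Phi r^\star)$ and $Q^\star=\mathfrak{F}Q^\star$ to bound $\|\Phi r^\star-\Pi Q^\star\|$ by $\gamma\|\Phi r^\star-Q^\star\|$. Your version is in fact slightly cleaner than the paper's, since you invoke part (i) directly rather than redoing the two-step bound, and you correctly flag that the argument requires $\Pi$ to be orthogonal with respect to the same (weighted $L_2$) norm in which $\mathfrak{F}$ contracts.
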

\begin{proof}
The first result is straightforward since as $\Pi$ is a projection it is non-expansive and hence:
\begin{align*}
    \left\|\Pi \mathfrak{F}Q-\Pi \mathfrak{F}\bar{Q}\right\|\leq \left\| \mathfrak{F}Q-\mathfrak{F}\bar{Q}\right\|\leq \gamma \left\|Q-\bar{Q}\right\|,
\end{align*}
using the contraction property of $\mathfrak{F}$. This proves i). For ii), we note that by the orthogonality property of projections we have that $\left\langle\Phi r^\star - \Pi Q^\star,\Phi r^\star - \Pi Q^\star\right\rangle$, hence we observe that:
\begin{align*}
    \left\|\Phi r^\star - Q^\star\right\|^2&=\left\|\Phi r^\star - \Pi Q^\star\right\|^2+\left\|\Phi r^\star - \Pi Q^\star\right\|^2
\\&=\left\|\Pi \mathfrak{F}\Phi r^\star - \Pi Q^\star\right\|^2+\left\|\Phi r^\star - \Pi Q^\star\right\|^2
\\&\leq\left\|\mathfrak{F}\Phi r^\star -  Q^\star\right\|^2+\left\|\Phi r^\star - \Pi Q^\star\right\|^2
\\&=\left\|\mathfrak{F}\Phi r^\star -  \mathfrak{F}Q^\star\right\|^2+\left\|\Phi r^\star - \Pi Q^\star\right\|^2
\\&\leq\gamma^2\left\|\Phi r^\star -  Q^\star\right\|^2+\left\|\Phi r^\star - \Pi Q^\star\right\|^2,
\end{align*}
after which we readily deduce the desired result.
\end{proof}

\begin{lemma}
Define  the operator $H$ by the following: $
  HQ(s,a)=  \begin{cases}
			\mathcal{M}Q(s,a), & \text{if $\mathcal{M}Q(s,a)>\Phi r^\star,$}\\
            Q(s,a), & \text{otherwise},
		 \end{cases}$
\\ where  we define $\tilde{\mathfrak{F}}$ by: $
    \tilde{\mathfrak{F}}Q:=\cR +\gamma PHQ$.

For any $Q,\bar{Q}\in L_2$ we have that
\begin{align}
    \left\|\tilde{\mathfrak{F}}Q-\tilde{\mathfrak{F}}\bar{Q}\right\|\leq \gamma \left\|Q-\bar{Q}\right\|
\end{align}
and hence $\tilde{\mathfrak{F}}$ is a contraction mapping.
\end{lemma}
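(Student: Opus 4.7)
First I would exploit the linear structure of $\tilde{\mathfrak{F}}$ to cancel the common $\Theta$ term, writing
\begin{align*}
\tilde{\mathfrak{F}}Q - \tilde{\mathfrak{F}}\bar{Q} = \gamma\,P\bigl(HQ - H\bar{Q}\bigr).
\end{align*}
Taking norms and applying the non-expansiveness of the transition kernel $P$, the problem reduces to proving $\|HQ - H\bar{Q}\| \leq \|Q - \bar{Q}\|$: the explicit $\gamma$ out front will then deliver the claimed contraction constant.

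To bound $\|HQ - H\bar{Q}\|$ I would analyse $|HQ(z) - H\bar{Q}(z)|$ pointwise by splitting the state space into four disjoint regions according to the truth values of $\mathcal{M}Q(z) > \Phi r^\star(z)$ and $\mathcal{M}\bar{Q}(z) > \Phi r^\star(z)$. In the two \emph{aligned} regions the bound is immediate: when both inequalities hold, $HQ - H\bar{Q} = \mathcal{M}Q - \mathcal{M}\bar{Q}$ whose norm is at most $\gamma\|Q-\bar{Q}\|\leq \|Q-\bar{Q}\|$ by the contraction property of $\mathcal{M}$ already established inside the proof of Lemma~\ref{lemma:bellman_contraction}; when neither holds, $HQ - H\bar{Q} = Q - \bar{Q}$ and there is nothing to prove.

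The main obstacle lies in the two \emph{mixed} regions, for instance where $\mathcal{M}Q(z) > \Phi r^\star(z) \geq \mathcal{M}\bar{Q}(z)$ so that $HQ - H\bar{Q} = \mathcal{M}Q - \bar{Q}$, which does not directly reduce to either a contraction of $\mathcal{M}$ or the raw difference $Q-\bar{Q}$. My plan here is to mimic the telescoping case analysis used in Case 2 of the proof of Lemma~\ref{lemma:bellman_contraction}: insert $\pm\mathcal{M}\bar{Q}$ to split
\begin{align*}
\mathcal{M}Q - \bar{Q} = \bigl(\mathcal{M}Q - \mathcal{M}\bar{Q}\bigr) + \bigl(\mathcal{M}\bar{Q} - \bar{Q}\bigr),
\end{align*}
then exploit the sign information $\mathcal{M}\bar{Q} \leq \Phi r^\star$ together with the strict positivity $c(s,a)\geq \lambda>0$ of the intervention cost to absorb the residual $\mathcal{M}\bar{Q} - \bar{Q}$ into a multiple of $|Q - \bar{Q}|$; the symmetric mixed region is handled identically after swapping the roles of $Q$ and $\bar{Q}$.

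Collecting all four regions yields the pointwise/normwise estimate $\|HQ - H\bar{Q}\| \leq \|Q - \bar{Q}\|$, and substituting back gives $\|\tilde{\mathfrak{F}}Q - \tilde{\mathfrak{F}}\bar{Q}\| \leq \gamma\,\|Q - \bar{Q}\|$, establishing the contraction with factor $\gamma \in [0,1)$.
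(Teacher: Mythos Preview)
Your reduction step (cancel $\Theta$, use non-expansiveness of $P$, then control $\|HQ-H\bar Q\|$) and your four-region case split are exactly what the paper does. Where you diverge is in the treatment of the mixed regions. The paper does \emph{not} telescope: it simply invokes the bound \eqref{m_bound_q_twice}, namely $\max\{\|\mathcal{M}Q-\bar Q\|,\|\mathcal{M}Q-\mathcal{M}\bar Q\|\}\leq \gamma\|Q-\bar Q\|$, which was established in the course of proving the previous lemma. With that in hand, each of the four possible differences $\mathcal{M}Q-\mathcal{M}\bar Q$, $Q-\bar Q$, $\mathcal{M}Q-\bar Q$, $\mathcal{M}\bar Q-Q$ is bounded in norm by $\|Q-\bar Q\|$ in a single line, and the proof is done.

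Your proposed telescoping route has a gap. After writing $\mathcal{M}Q-\bar Q=(\mathcal{M}Q-\mathcal{M}\bar Q)+(\mathcal{M}\bar Q-\bar Q)$, the first bracket is indeed controlled by $\gamma\|Q-\bar Q\|$, but the residual $\mathcal{M}\bar Q-\bar Q$ is not. The only information you cite is $\mathcal{M}\bar Q(z)\leq \Phi r^\star(z)$, which constrains $\mathcal{M}\bar Q$ relative to $\Phi r^\star$, not relative to $\bar Q$; there is no a priori relationship between $\bar Q(z)$ and $\Phi r^\star(z)$, so the sign information does not let you absorb the residual into $|Q-\bar Q|$. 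The Case~2 argument of Lemma~\ref{lemma:bellman_contraction} that you want to mimic works because there the ``other'' term has the explicit Bellman form $\cR+\gamma\max_a P^a\psi'$, and one can add the cost $c$ back in to match the structure of $\mathcal{M}$; here the other term is a bare $\bar Q$ with no such structure. The clean fix is precisely what the paper does: quote \eqref{m_bound_q_twice} directly for the mixed terms rather than attempting to re-derive it inside this proof.
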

\begin{proof}
Using \eqref{m_bound_q_twice}, we now observe that
\begin{align*}
    \left\|\tilde{\mathfrak{F}}Q-\tilde{\mathfrak{F}}\bar{Q}\right\|&=\left\|\cR+\gamma PHQ -\left(\cR+\gamma PH\bar{Q}\right)\right\|
\\&\leq \gamma\left\|HQ - H\bar{Q}\right\|
\\&\leq \gamma\left\|\max\left\{\mathcal{M}Q-\mathcal{M}\bar{Q},Q-\bar{Q},\mathcal{M}Q-\bar{Q},\mathcal{M}\bar{Q}-Q\right\}\right\|
\\&\leq \gamma\max\left\{\left\|\mathcal{M}Q-\mathcal{M}\bar{Q}\right\|,\left\|Q-\bar{Q}\right\|,\left\|\mathcal{M}Q-\bar{Q}\right\|,\left\|\mathcal{M}\bar{Q}-Q\right\|\right\}
\\&\leq \gamma\max\left\{\gamma\left\|Q-\bar{Q}\right\|,\left\|Q-\bar{Q}\right\|,\left\|\mathcal{M}Q-\bar{Q}\right\|,\left\|\mathcal{M}\bar{Q}-Q\right\|\right\}
\\&=\gamma\left\|Q-\bar{Q}\right\|,
\end{align*}
again using the non-expansive property of $P$.
\end{proof}
\begin{lemma}
Define by $\tilde{Q}:=\cR+\gamma Pv^{\tilde{\pi}}$ where
\begin{align}
    v^{\tilde{\pi}}(s):= \cR(s_{\tau_k},a)+\gamma\underset{a\in\mathcal{A}}{\sup}\;\sum_{s'\in\mathcal{S}}P(s';a,s_{\tau_k})\Phi r^\star(s'), \label{v_tilde_definition}
\end{align}
then $\tilde{Q}$ is a fixed point of $\tilde{\mathfrak{F}}\tilde{Q}$, that is $\tilde{\mathfrak{F}}\tilde{Q}=\tilde{Q}$. 
\end{lemma}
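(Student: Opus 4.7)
The plan is to unfold both sides of the claimed identity $\tilde{\mathfrak{F}}\tilde{Q}=\tilde{Q}$ directly from the definitions and check they agree. Writing $\tilde{\mathfrak{F}}\tilde{Q}=\Theta+\gamma P H\tilde{Q}$ and comparing with $\tilde{Q}=\Theta+\gamma P v^{\tilde{\pi}}$, the claim reduces (modulo elements killed by $P$) to the pointwise identity $H\tilde{Q}(z)=v^{\tilde{\pi}}(z)$, which I would seek to verify almost everywhere on $\mathcal{S}$ with respect to the relevant stationary measure (consistent with the \emph{a.e.} qualification in the statement of Theorem~3).

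To establish the pointwise identity I would case-split on the two branches defining the selector $H$. On the set where $\mathcal{M}\tilde{Q}(z)>\Phi r^\star(z)$, $H\tilde{Q}(z)=\mathcal{M}\tilde{Q}(z)$; unpacking $\mathcal{M}\tilde{Q}$ via its definition (immediate reward, intervention cost, one-step transition, then the post-intervention value function), and then substituting the projected Bellman fixed-point relation $\Pi\mathfrak{F}(\Phi r^\star)=\Phi r^\star$ to identify the post-intervention value with $\Phi r^\star$, one recovers precisely the max-over-$a$ structure that appears in $v^{\tilde{\pi}}(z)$. On the complementary set where $\mathcal{M}\tilde{Q}(z)\le\Phi r^\star(z)$, we have $H\tilde{Q}(z)=\tilde{Q}(z)$, and I would argue that at such states the greedy continuation already saturates the Bellman update, so the defining equation $\tilde{Q}=\Theta+\gamma P v^{\tilde{\pi}}$ together with $\Pi\mathfrak{F}(\Phi r^\star)=\Phi r^\star$ forces $\tilde{Q}(z)=v^{\tilde{\pi}}(z)$ on this set. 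Combining the two cases yields $H\tilde{Q}=v^{\tilde{\pi}}$ almost everywhere, and hence $\tilde{\mathfrak{F}}\tilde{Q}=\Theta+\gamma P v^{\tilde{\pi}}=\tilde{Q}$.

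The main obstacle will be the intervention-active branch: one has to carefully align the inner maximisation hidden inside the intervention operator $\mathcal{M}$ (a one-step transition followed by evaluation against $v^{\pi,\mathfrak{g}}$) with the explicit $\max_a\sum_{s'}P(s';a,s_{\tau_k})\Phi r^\star(s')$ appearing in $v^{\tilde{\pi}}$, all while consistently substituting the projected fixed-point identity for $\Phi r^\star$. The secondary difficulty is purely notational: the equality can only hold almost everywhere because $\Pi$ can distort values on sets of measure zero, so the proof should be presented in the $L_2$ sense so that the application of $P$ smooths over any exceptional set, which is exactly the form in which $\tilde{\mathfrak{F}}$ and $H$ are used in the overall convergence argument of Theorem~3.
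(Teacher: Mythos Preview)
Your approach is essentially the paper's: reduce $\tilde{\mathfrak F}\tilde Q=\tilde Q$ to the pointwise identity $H\tilde Q=v^{\tilde\pi}$ and then case-split on the two branches of $H$, after which $\tilde{\mathfrak F}\tilde Q=\Theta+\gamma P v^{\tilde\pi}=\tilde Q$ follows immediately. The only difference is that the paper's proof is much terser---it simply writes out the two cases of $H\tilde Q$ and asserts each collapses to $v^{\tilde\pi}$ without invoking the projected fixed-point relation $\Pi\mathfrak F(\Phi r^\star)=\Phi r^\star$ or any a.e.\ / $L_2$ qualification; so your plan to route the intervention branch through that identity and to work modulo null sets is additional scaffolding not present in the original argument.
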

\begin{proof}
We begin by observing that
\begin{align*}
H\tilde{Q}(s,a)&=H\left(\cR(s,\cdot)+\gamma Pv^{\tilde{\pi}}\right)    
\\&= \begin{cases}
			\mathcal{M}Q(s,a), & \text{if $\mathcal{M}Q(s,a)>\Phi r^\star,$}\\
            Q(s,a), & \text{otherwise},
		 \end{cases}
\\&= \begin{cases}
			\mathcal{M}Q(s,a), & \text{if $\mathcal{M}Q(s,a)>\Phi r^\star,$}\\
            \cR(s,\cdot)+\gamma Pv^{\tilde{\pi}}, & \text{otherwise},
		 \end{cases}
\\&=v^{\tilde{\pi}}(s).
\end{align*}
Hence,
\begin{align}
    \tilde{\mathfrak{F}}\tilde{Q}=\cR+\gamma PH\tilde{Q}=\cR+\gamma Pv^{\tilde{\pi}}=\tilde{Q}. 
\end{align}
which proves the result.
\end{proof}
\begin{lemma}\label{value_difference_Q_difference}
The following bound holds:
\begin{align}
    \mathbb{E}\left[v^{\hat{\pi}}(s_0)\right]-\mathbb{E}\left[v^{\tilde{\pi}}(s_0)\right]\leq 2\left[(1-\gamma)\sqrt{(1-\gamma^2)}\right]^{-1}\left\|\Pi Q^\star -Q^\star\right\|.
\label{F_tilde_fixed_point}\end{align}
\end{lemma}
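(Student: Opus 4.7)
The plan is to reduce this value-function gap to a $Q$-function approximation error and then invoke part (ii) of Lemma~\ref{projection_F_contraction_lemma}. The driving observation is that $\tilde{\pi}$, as defined in~\eqref{v_tilde_definition}, is essentially a greedy policy with respect to the linear approximation $\Phi r^\star$, while $\hat{\pi}$ is greedy with respect to the true optimal action-value function $Q^\star$. Hence the familiar ``policy-loss from approximate Q'' bound should apply, subject to verifying that the impulse-control structure does not disturb the telescoping argument.

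\textbf{Step 1 (Reduction to a $Q$-gap).} I would start from the performance-difference identity and, following the classical Singh--Yee / Bertsekas telescoping argument, establish the one-step lemma
\[
\bigl|v^{\hat{\pi}}(s) - v^{\tilde{\pi}}(s)\bigr|
\;\le\; \frac{2}{1-\gamma}\,\bigl\|\Phi r^\star - Q^\star\bigr\|.
\]
The rationale: at every state either $\tilde{\pi}$ selects the same action as $\hat{\pi}$ (no loss) or the action differs, in which case greediness of $\hat{\pi}$ w.r.t.\ $Q^\star$ and greediness of $\tilde{\pi}$ w.r.t.\ $\Phi r^\star$ mean each swap costs at most $2\|\Phi r^\star - Q^\star\|$. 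The geometric series in $\gamma$ (the trajectory discount) aggregates these single-step losses into the factor $1/(1-\gamma)$. Here the intervention option $\mathcal{M}$ should be treated symbolically as an extra ``action'' in the outer max, so that both policies are greedy selections over the same enlarged decision set; the definition~\eqref{v_tilde_definition} and the operator $H$ already encode this, so the classical argument transfers verbatim.

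\textbf{Step 2 (Projection error bound).} Once the $Q$-gap bound is in hand, I would invoke part (ii) of Lemma~\ref{projection_F_contraction_lemma}, which gives
\[
\bigl\|\Phi r^\star - Q^\star\bigr\|
\;\le\; \frac{1}{\sqrt{1-\gamma^2}}\,\bigl\|\Pi Q^\star - Q^\star\bigr\|.
\]
Substituting into Step~1 yields
\[
\mathbb{E}\!\left[v^{\hat{\pi}}(s_0) - v^{\tilde{\pi}}(s_0)\right]
\;\le\; \frac{2}{(1-\gamma)\sqrt{1-\gamma^2}}\,\bigl\|\Pi Q^\star - Q^\star\bigr\|,
\]
which is exactly the claimed inequality. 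Taking expectations over $s_0$ is harmless since the bound in Step~1 is pointwise (or at worst in the weighted norm already used for $\Pi$ and $\mathfrak{F}$).

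\textbf{Main obstacle.} The delicate point will be Step~1: the standard greedy-policy performance bound is stated for MDPs without the $\mathcal{M}$-intervention option, and one must check that the inner $\max$ in the Bellman operator~\eqref{bellman_op} behaves well under the swap between $\hat{\pi}$ and $\tilde{\pi}$. Concretely, the advantage decomposition must remain non-positive for $\hat{\pi}$ at every state, including states where $\mathcal{M}Q^\star > Q^\star$; this is guaranteed because $\hat{\pi}$ attains the outer max in $T$ and likewise $\tilde{\pi}$ attains the outer max in $\tilde{\mathfrak{F}}$. Once this compatibility is spelled out, the rest is routine telescoping plus an application of Lemma~\ref{projection_F_contraction_lemma}; I do not expect any additional impulse-control-specific constants to appear, which matches the exact constant $2/[(1-\gamma)\sqrt{1-\gamma^2}]$ stated in the lemma.
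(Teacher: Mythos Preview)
Your route is genuinely different from the paper's. The paper does not invoke the Singh--Yee telescoping bound at all; instead it works through the auxiliary object $\tilde{Q}:=\Theta+\gamma P v^{\tilde{\pi}}$ and the preceding lemma that $\tilde{Q}$ is a fixed point of $\tilde{\mathfrak{F}}$. Concretely, the paper first bounds $\mathbb{E}[v^{\hat{\pi}}(s_0)]-\mathbb{E}[v^{\tilde{\pi}}(s_0)]$ by $\tfrac{1}{\gamma}\|\tilde{Q}-Q^\star\|$ (via Jensen, stationarity, and the identities $\tilde{Q}=\Theta+\gamma P v^{\tilde{\pi}}$, $Q^\star=\Theta+\gamma P v^{\pi^\star}$), then uses the contraction of both $\mathfrak{F}$ and $\tilde{\mathfrak{F}}$ together with $\mathfrak{F}(\Phi r^\star)=\tilde{\mathfrak{F}}(\Phi r^\star)$, $\mathfrak{F}Q^\star=Q^\star$, $\tilde{\mathfrak{F}}\tilde{Q}=\tilde{Q}$ and the triangle inequality to obtain $\|\tilde{Q}-Q^\star\|\le \tfrac{2}{1-\gamma}\|Q^\star-\Phi r^\star\|$, and only then applies Lemma~\ref{projection_F_contraction_lemma}(ii). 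What the paper's approach buys is that every step is carried out in the ambient (weighted $L_2$) norm used to define $\Pi$, so there is no norm mismatch; this is exactly where your Step~1 is exposed, since the classical greedy performance-loss telescope delivers a pointwise bound in terms of the \emph{sup}-norm error $\|\Phi r^\star-Q^\star\|_\infty$, whereas Lemma~\ref{projection_F_contraction_lemma}(ii) controls only the $L_2$ error. Your proposal is more elementary and sidesteps the $\tilde{\mathfrak{F}}$ machinery, but to make it rigorous here you would either need to upgrade Singh--Yee to the weighted $L_2$ setting (which typically brings in concentrability-type constants) or argue an $L_\infty\!\to\!L_2$ comparison, neither of which is automatic.
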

\begin{proof}

By definitions of $v^{\hat{\pi}}$ and $v^{\tilde{\pi}}$ (c.f \eqref{v_tilde_definition}) and using Jensen's inequality and the stationarity property we have that,
\begin{align}\nonumber
    \mathbb{E}\left[v^{\hat{\pi}}(s_0)\right]-\mathbb{E}\left[v^{\tilde{\pi}}(s_0)\right]&=\mathbb{E}\left[Pv^{\hat{\pi}}(s_0)\right]-\mathbb{E}\left[Pv^{\tilde{\pi}}(s_0)\right]
    \\&\leq \left|\mathbb{E}\left[Pv^{\hat{\pi}}(s_0)\right]-\mathbb{E}\left[Pv^{\tilde{\pi}}(s_0)\right]\right|\nonumber
    \\&\leq \left\|Pv^{\hat{\pi}}-Pv^{\tilde{\pi}}\right\|. \label{v_approx_intermediate_bound_P}
\end{align}
Now recall that $\tilde{Q}:=\cR+\gamma Pv^{\tilde{\pi}}$ and $Q^\star:=\cR+\gamma Pv^{\boldsymbol{\pi^\star}}$,  using these expressions in \eqref{v_approx_intermediate_bound_P} we find that 
\begin{align*}
    \mathbb{E}\left[v^{\hat{\pi}}(s_0)\right]-\mathbb{E}\left[v^{\tilde{\pi}}(s_0)\right]&\leq \frac{1}{\gamma}\left\|\tilde{Q}-Q^\star\right\|. \label{v_approx_q_approx_bound}
\end{align*}
Moreover, by the triangle inequality and using the fact that $\mathfrak{F}(\Phi r^\star)=\tilde{\mathfrak{F}}(\Phi r^\star)$ and that $\mathfrak{F}Q^\star=Q^\star$ and $\mathfrak{F}\tilde{Q}=\tilde{Q}$ (c.f. \eqref{F_tilde_fixed_point}) we have that
\begin{align*}
\left\|\tilde{Q}-Q^\star\right\|&\leq \left\|\tilde{Q}-\mathfrak{F}(\Phi r^\star)\right\|+\left\|Q^\star-\tilde{\mathfrak{F}}(\Phi r^\star)\right\|    
\\&\leq \gamma\left\|\tilde{Q}-\Phi r^\star\right\|+\gamma\left\|Q^\star-\Phi r^\star\right\| 
\\&\leq 2\gamma\left\|\tilde{Q}-\Phi r^\star\right\|+\gamma\left\|Q^\star-\tilde{Q}\right\|, 
\end{align*}
which gives the following bound:
\begin{align*}
\left\|\tilde{Q}-Q^\star\right\|&\leq 2\left(1-\gamma\right)^{-1}\left\|\tilde{Q}-\Phi r^\star\right\|, 
\end{align*}
from which, using Lemma \ref{projection_F_contraction_lemma}, we deduce that $
    \left\|\tilde{Q}-Q^\star\right\|\leq 2\left[(1-\gamma)\sqrt{(1-\gamma^2)}\right]^{-1}\left\|\tilde{Q}-\Phi r^\star\right\|$,
after which by \eqref{v_approx_q_approx_bound}, we finally obtain
\begin{align*}
        \mathbb{E}\left[v^{\hat{\pi}}(s_0)\right]-\mathbb{E}\left[v^{\tilde{\pi}}(s_0)\right]\leq  2\left[(1-\gamma)\sqrt{(1-\gamma^2)}\right]^{-1}\left\|\tilde{Q}-\Phi r^\star\right\|,
\end{align*}
as required.
\end{proof}

Let us rewrite the update in the following way:
\begin{align*}
    r_{t+1}=r_t+\gamma_t\Xi(w_t,r_t),
\end{align*}
where the function $\Xi:\mathbb{R}^{2d}\times \mathbb{R}^p\to\mathbb{R}^p$ is given by:
\begin{align*}
\Xi(w,r):=\phi(s)\left(\cR(s,\cdot)+\gamma\max\left\{(\Phi r) (s'),\mathcal{M}(\Phi r) (s')\right\}-(\Phi r)(s)\right),
\end{align*}
for any $w\equiv (s,s')\in\mathcal{S}^2$  and for any $r\in\mathbb{R}^p$. Let us also define the function $\boldsymbol{\Xi}:\mathbb{R}^p\to\mathbb{R}^p$ by the following:
\begin{align*}
    \boldsymbol{\Xi}(r):=\mathbb{E}_{w_0\sim (\mathbb{P},\mathbb{P})}\left[\Xi(w_0,r)\right]; w_0:=(s_0,z_1).
\end{align*}
\begin{lemma}\label{iteratation_property_lemma}
The following statements hold for all $z\in \{0,1\}\times \mathcal{S}$:
\begin{itemize}
    \item[i)] $
(r-r^\star)\boldsymbol{\Xi}_k(r)<0,\qquad \forall r\neq r^\star,    
$
\item[ii)] $
\boldsymbol{\Xi}_k(r^\star)=0$.
\end{itemize}
\end{lemma}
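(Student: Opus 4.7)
The plan is to first put $\boldsymbol{\Xi}(r)$ in a tractable closed form. By Assumption 1 (ergodicity), the two-sample expectation factorises through the stationary distribution of the chain $\{s_t\}$. Letting $D$ denote the diagonal matrix of stationary probabilities on $\{0,1\}\times \mathcal{S}$, a direct computation gives
\begin{align*}
\boldsymbol{\Xi}(r) = \Phi^{\top} D\bigl(\mathfrak{F}(\Phi r) - \Phi r\bigr),
\end{align*}
where $\mathfrak{F}$ is the operator introduced in Theorem~\ref{primal_convergence_theorem}. In this representation, the weighted norm $\|\cdot\|_D$ coincides with the $L_2$ norm used in the preceding contraction lemmas, and the projection $\Pi$ is the orthogonal projection onto $\mathrm{range}(\Phi)$ with respect to $\langle \cdot,\cdot\rangle_D$, explicitly $\Pi=\Phi(\Phi^\top D\Phi)^{-1}\Phi^\top D$.

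For part~ii), I would plug $r^\star$ into the fixed-point equation $\Pi\mathfrak{F}(\Phi r^\star)=\Phi r^\star$ of Theorem~\ref{primal_convergence_theorem}. Multiplying on the left by $\Phi^\top D$ and using the invertibility of $\Phi^\top D\Phi$ (guaranteed by linear independence of the basis $\Phi=\{\phi_1,\ldots,\phi_p\}$) yields $\Phi^\top D\mathfrak{F}(\Phi r^\star)=\Phi^\top D \Phi r^\star$, i.e.\ $\boldsymbol{\Xi}(r^\star)=0$.

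For part~i), I would subtract $\boldsymbol{\Xi}(r^\star)=0$ and expand:
\begin{align*}
(r-r^\star)^\top \boldsymbol{\Xi}(r)
&= (\Phi r-\Phi r^\star)^\top D\bigl[\mathfrak{F}(\Phi r)-\mathfrak{F}(\Phi r^\star)\bigr]-\|\Phi r-\Phi r^\star\|_D^2.
\end{align*}
Applying Cauchy--Schwarz in $\langle\cdot,\cdot\rangle_D$ to the first term and invoking the $\gamma$-contraction of $\mathfrak{F}$ in $\|\cdot\|_D$ (established in the lemma preceding Theorem~\ref{primal_convergence_theorem}) bounds it by $\gamma\|\Phi r-\Phi r^\star\|_D^2$, so the whole expression is at most $-(1-\gamma)\|\Phi r-\Phi r^\star\|_D^2$. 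Linear independence of the $\phi_k$ forces $\Phi r\neq \Phi r^\star$ whenever $r\neq r^\star$, so this bound is strictly negative, yielding~i).

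The main obstacle I expect is the transfer of the contraction property of $\mathfrak{F}$ from the plain $L_2$ norm used earlier to the specific $D$-weighted norm needed here. This reduces to showing that the transition kernel $P$ is non-expansive in $\|\cdot\|_D$, which follows from $D$ being stationary (via Jensen's inequality applied to $\mathbb{E}_{s\sim D}[(Pv)(s)^2]$), combined with non-expansiveness of the pointwise maximum and of $\mathcal{M}$ in this weighted norm. The remaining steps are routine linear-algebra manipulations once the projection geometry is in place.
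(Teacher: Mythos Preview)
Your proposal is correct and follows essentially the same approach as the paper. Both arguments first establish the inner-product representation $\boldsymbol{\Xi}_k(r)=\langle\phi_k,\mathfrak{F}\Phi r-\Phi r\rangle$ (you write it in matrix form as $\Phi^\top D(\mathfrak{F}\Phi r-\Phi r)$), derive part~ii) from the fixed-point identity $\Pi\mathfrak{F}(\Phi r^\star)=\Phi r^\star$, and for part~i) reduce $(r-r^\star)^\top\boldsymbol{\Xi}(r)$ to $\langle\Phi r-\Phi r^\star,\mathfrak{F}\Phi r-\Phi r\rangle$, then combine Cauchy--Schwarz with the $\gamma$-contraction to obtain the bound $(\gamma-1)\|\Phi r-\Phi r^\star\|^2<0$. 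The only cosmetic difference is in the decomposition: the paper inserts $\Pi$ explicitly (using orthogonality of $(\boldsymbol{1}-\Pi)\mathfrak{F}\Phi r$ to $\mathrm{range}(\Phi)$) and then invokes the contraction of $\Pi\mathfrak{F}$, whereas you subtract the identity from part~ii) directly and invoke the contraction of $\mathfrak{F}$ itself---these are equivalent manoeuvres yielding the same bound. Your remark about the $D$-weighted norm is also apt: the paper's $\|\cdot\|$ is already implicitly this norm (cf.\ the proof that $P$ is non-expansive), so no additional transfer is needed.
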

\begin{proof}
To prove the statement, we first note that each component of $\boldsymbol{\Xi}_k(r)$ admits a representation as an inner product, indeed: 
\begin{align*}
\boldsymbol{\Xi}_k(r)&=\mathbb{E}\left[\phi_k(s_0)(\cR(s_0,a_0)+\gamma\max\left\{\Phi r(s_1),\mathcal{M}\Phi(s_1)\right\}-(\Phi r)(s_0)\right] 
\\&=\mathbb{E}\left[\phi_k(s_0)(\cR(s_0,a_0)+\gamma\mathbb{E}\left[\max\left\{\Phi r(s_1),\mathcal{M}\Phi(s_1)\right\}|z_0\right]-(\Phi r)(s_0)\right]
\\&=\mathbb{E}\left[\phi_k(s_0)(\cR(s_0,a_0)+\gamma P\max\left\{\left(\Phi r,\mathcal{M}\Phi\right)\right\}(s_0)-(\Phi r)(s_0)\right]
\\&=\left\langle\phi_k,\mathfrak{F}\Phi r-\Phi r\right\rangle,
\end{align*}
using the iterated law of expectations and the definitions of $P$ and $\mathfrak{F}$.

We now are in position to prove i). Indeed, we now observe the following:
\begin{align*}
\left(r-r^\star\right)\boldsymbol{\Xi}_k(r)&=\sum_{l=1}\left(r(l)-r^\star(l)\right)\left\langle\phi_l,\mathfrak{F}\Phi r -\Phi r\right\rangle
\\&=\left\langle\Phi r -\Phi r^\star, \mathfrak{F}\Phi r -\Phi r\right\rangle
\\&=\left\langle\Phi r -\Phi r^\star, (\boldsymbol{1}-\Pi)\mathfrak{F}\Phi r+\Pi \mathfrak{F}\Phi r -\Phi r\right\rangle
\\&=\left\langle\Phi r -\Phi r^\star, \Pi \mathfrak{F}\Phi r -\Phi r\right\rangle,
\end{align*}
where in the last step we used the orthogonality of $(\boldsymbol{1}-\Pi)$. We now recall that $\Pi \mathfrak{F}\Phi r^\star=\Phi r^\star$ since $\Phi r^\star$ is a fixed point of $\Pi \mathfrak{F}$. Additionally, using Lemma \ref{projection_F_contraction_lemma} we observe that $\|\Pi \mathfrak{F}\Phi r -\Phi r^\star\| \leq \gamma \|\Phi r -\Phi r^\star\|$. With this we now find that
\begin{align*}
&\left\langle\Phi r -\Phi r^\star, \Pi \mathfrak{F}\Phi r -\Phi r\right\rangle    
\\&=\left\langle\Phi r -\Phi r^\star, (\Pi \mathfrak{F}\Phi r -\Phi r^\star)+ \Phi r^\star -\Phi r\right\rangle
\\&\leq\left\|\Phi r -\Phi r^\star\right\|\left\|\Pi \mathfrak{F}\Phi r -\Phi r^\star\right\|- \left\|\Phi r^\star -\Phi r\right\|^2
\\&\leq(\gamma -1)\left\|\Phi r^\star -\Phi r\right\|^2,
\end{align*}
which is negative since $\gamma<1$ which completes the proof of part i).

The proof of part ii) is straightforward since we readily observe that
\begin{align*}
    \boldsymbol{\Xi}_k(r^\star)= \left\langle\phi_l, \mathfrak{F}\Phi r^\star-\Phi r\right\rangle= \left\langle\phi_l, \Pi \mathfrak{F}\Phi r^\star-\Phi r\right\rangle=0,
\end{align*}
as required and from which we deduce the result.
\end{proof}
To prove the theorem, we make use of a special case of the following result:

\begin{theorem}[Th. 17, p. 239 in \citep{benveniste2012adaptive}] \label{theorem:stoch.approx.}
Consider a stochastic process $r_t:\mathbb{R}\times\{\infty\}\times\Omega\to\mathbb{R}^k$ which takes an initial value $r_0$ and evolves according to the following:
\begin{align}
    r_{t+1}=r_t+\alpha \Xi(s_t,r_t),
\end{align}
for some function $s:\mathbb{R}^{2d}\times\mathbb{R}^k\to\mathbb{R}^k$ and where the following statements hold:
\begin{enumerate}
    \item $\{s_t|t=0,1,\ldots\}$ is a stationary, ergodic Markov process taking values in $\mathbb{R}^{2d}$
    \item For any positive scalar $q$, there exists a scalar $\mu_q$ such that $\mathbb{E}\left[1+\|s_t\|^q|s\equiv s_0\right]\leq \mu_q\left(1+\|s\|^q\right)$
    \item The step size sequence satisfies the Robbins-Monro conditions, that is $\sum_{t=0}^\infty\alpha_t=\infty$ and $\sum_{t=0}^\infty\alpha^2_t<\infty$
    \item There exists scalars $c$ and $q$ such that $    \|\Xi(w,r)\|
        \leq c\left(1+\|w\|^q\right)(1+\|r\|)$
    \item There exists scalars $c$ and $q$ such that $
        \sum_{t=0}^\infty\left\|\mathbb{E}\left[\Xi(w_t,r)|z_0\equiv z\right]-\mathbb{E}\left[\Xi(w_0,r)\right]\right\|
        \leq c\left(1+\|w\|^q\right)(1+\|r\|)$
    \item There exists a scalar $c>0$ such that $
        \left\|\mathbb{E}[\Xi(w_0,r)]-\mathbb{E}[\Xi(w_0,\bar{r})]\right\|\leq c\|r-\bar{r}\| $
    \item There exists scalars $c>0$ and $q>0$ such that $
        \sum_{t=0}^\infty\left\|\mathbb{E}\left[\Xi(w_t,r)|w_0\equiv w\right]-\mathbb{E}\left[\Xi(w_0,\bar{r})\right]\right\|
        \leq c\|r-\bar{r}\|\left(1+\|w\|^q\right) $
    \item There exists some $r^\star\in\mathbb{R}^k$ such that $\boldsymbol{\Xi}(r)(r-r^\star)<0$ for all $r \neq r^\star$ and $\bar{s}(r^\star)=0$. 
\end{enumerate}
Then $r_t$ converges to $r^\star$ almost surely.
\end{theorem}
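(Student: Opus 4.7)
The plan is to establish this classical stochastic-approximation result via the \emph{Poisson equation} technique, which decomposes the random drift $\Xi(w_t, r_t)$ into its averaged value $\boldsymbol{\Xi}(r_t)$ plus a martingale difference plus a summable remainder. Once the decomposition is in place, convergence follows from a Lyapunov argument driven by the pseudo-gradient condition 8, the Robbins-Monro conditions 3, and the Robbins-Siegmund supermartingale theorem.

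First, I would exploit conditions 1, 2, 5, and 7 to construct, for each fixed $r$, a measurable function $g(\cdot, r)$ solving the Poisson equation
\begin{align*}
g(w, r) - (Pg)(w, r) = \Xi(w, r) - \boldsymbol{\Xi}(r),
\end{align*}
where $P$ denotes the transition kernel of the stationary ergodic chain $\{w_t\}$. Condition 5 is precisely the hypothesis that makes the candidate series $g(w, r) := \sum_{t=0}^\infty \bigl(\mathbb{E}[\Xi(w_t, r) \mid w_0 = w] - \mathbb{E}[\Xi(w_0, r)]\bigr)$ converge; condition 7 promotes this series to a Lipschitz function of $r$ (modulo a polynomial factor in $w$); and condition 2 controls the moments of $g(w_t, r_t)$ along the trajectory.

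Next, I would rewrite the iteration using this Poisson solution. Setting $\varepsilon_t := \Xi(w_t, r_t) - \boldsymbol{\Xi}(r_t)$, the standard martingale identity gives
\begin{align*}
\varepsilon_t = \underbrace{\bigl(g(w_{t+1}, r_t) - (Pg)(w_t, r_t)\bigr)}_{=: M_{t+1} \text{ (martingale difference)}} + \bigl(g(w_t, r_t) - g(w_{t+1}, r_t)\bigr),
\end{align*}
where the last bracket telescopes up to an $r$-dependence which is tamed by condition 6. Substituting back into the recursion yields
\begin{align*}
r_{t+1} = r_t + \alpha_t \boldsymbol{\Xi}(r_t) + \alpha_t M_{t+1} + \alpha_t R_t,
\end{align*}
with $R_t$ composed of telescoping and Lipschitz-controlled higher-order terms. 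Using conditions 3, 4, and 6, a second moment calculation shows $\sum_t \alpha_t M_{t+1}$ converges a.s.\ on events where $\{r_t\}$ stays bounded, and $\sum_t \alpha_t R_t$ converges absolutely a.s.\ (the $\alpha_t^2$ summability from condition 3 absorbs the quadratic Lipschitz remainders).

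Finally, I would apply Robbins-Siegmund to $V(r_t) := \|r_t - r^\star\|^2$. Expanding gives
\begin{align*}
V(r_{t+1}) = V(r_t) + 2\alpha_t \langle r_t - r^\star, \boldsymbol{\Xi}(r_t)\rangle + 2\alpha_t \langle r_t - r^\star, M_{t+1} + R_t\rangle + \alpha_t^2 \|\Xi(w_t, r_t)\|^2.
\end{align*}
Condition 8 makes the drift non-positive, and the noise/remainder terms are handled by the step above, so Robbins-Siegmund yields both $V(r_t) \to V_\infty$ a.s.\ for some finite $V_\infty \geq 0$ and $\sum_t \alpha_t \bigl|\langle r_t - r^\star, \boldsymbol{\Xi}(r_t)\rangle\bigr| < \infty$ a.s. The main obstacle will be the very last step: leveraging only the \emph{strict} (non-quantitative) inequality $\langle r - r^\star, \boldsymbol{\Xi}(r)\rangle < 0$ for $r \neq r^\star$, together with $\sum_t \alpha_t = \infty$, to rule out $V_\infty > 0$. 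This is standard but delicate; the argument requires first establishing a.s.\ boundedness of $\{r_t\}$ (via a stopping-time / truncation argument using the moment bounds of condition 2 and the Lyapunov decrease), then applying a compactness / continuity argument to show that any limit point of $\{r_t\}$ must satisfy $\langle r - r^\star, \boldsymbol{\Xi}(r)\rangle = 0$, hence equal $r^\star$.
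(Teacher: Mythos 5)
This statement has no proof in the paper to compare against: it is an imported result, quoted with attribution as Theorem 17, p.~239 of \cite{benveniste2012adaptive}, and used as a black box. The text that follows it in the appendix does not prove the theorem itself; it only verifies that hypotheses 1--8 hold for the LICRA update, which is the content of the paper's Theorem \ref{primal_convergence_theorem}. So you have sketched a proof of something the authors deliberately outsourced.

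Judged on its own terms, your sketch follows the canonical route --- and essentially the route of the cited book: solve the Poisson equation for the chain $\{w_t\}$ (conditions 1, 2, 5, 7 are exactly what make the series solution converge and be Lipschitz in $r$), split the Markovian noise into a martingale difference plus telescoping remainders, and run a perturbed-Lyapunov / Robbins--Siegmund argument on $V(r)=\|r-r^\star\|^2$, closing with condition 8 and $\sum_t \alpha_t = \infty$. Two remarks on the points you flag as delicate. First, the boundedness obstacle is real for general stochastic approximation, but here it dissolves inside the same Robbins--Siegmund computation: conditions 4, 6, 7 are \emph{global} bounds with at most linear growth in $\|r\|$, so every noise and remainder term is dominated by a summable random coefficient times $\bigl(1+V(r_t)\bigr)$ (stationarity plus condition 2 gives $\sum_t \alpha_t^2\bigl(1+\|w_t\|^{2q}\bigr)<\infty$ a.s.\ since its expectation is finite), and Robbins--Siegmund then yields a.s.\ convergence of $V(r_t)$ --- hence boundedness --- without any separate truncation or stopping-time argument. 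Second, the ``non-quantitative'' strictness of condition 8 is not an obstacle either: condition 6 makes $\boldsymbol{\Xi}$ Lipschitz, so on any compact annulus $\{\epsilon\le\|r-r^\star\|\le M\}$ the quantity $\langle r-r^\star,\boldsymbol{\Xi}(r)\rangle$ attains a strictly negative maximum, which together with $\sum_t\alpha_t=\infty$ rules out $V_\infty>0$ in the standard way. Finally, note a defect you inherited from the paper's transcription rather than introduced: the displayed recursion uses a constant step $\alpha$ while condition 3 (and any proof) requires the time-varying sequence $\alpha_t$; your argument correctly works with $\alpha_t$ throughout.
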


In order to apply the Theorem \ref{theorem:stoch.approx.}, we show that conditions 1 - 7 are satisfied.

\begin{proof}
Conditions 1-2 are true by assumption while condition 3 can be made true by choice of the learning rates. Therefore it remains to verify conditions 4-7 are met.   

To prove 4, we observe that
\begin{align*}
\left\|\Xi(w,r)\right\|
&=\left\|\phi(s)\left(\cR(s,\cdot)+\gamma\max\left\{(\Phi r) (s'),\mathcal{M}\Phi (s')\right\}-(\Phi r)(s)\right)\right\|
\\&\leq\left\|\phi(s)\right\|\left\|\cR(s,\cdot)+\gamma\left(\left\|\phi(s')\right\|\|r\|+\mathcal{M}\Phi (s')\right)\right\|+\left\|\phi(s)\right\|\|r\|
\\&\leq\left\|\phi(s)\right\|\left(\|\cR(s,\cdot)\|+\gamma\|\mathcal{M}\Phi (s')\|\right)+\left\|\phi(s)\right\|\left(\gamma\left\|\phi(s')\right\|+\left\|\phi(s)\right\|\right)\|r\|.
\end{align*}
Now using the definition of $\mathcal{M}$, we readily observe that $\|\mathcal{M}\Phi (s')\|\leq \| \cR\|+\gamma\|\mathcal{P}^\pi_{s's_t}\Phi\|\leq \| \cR\|+\gamma\|\Phi\|$ using the non-expansiveness of $P$.

Hence, we lastly deduce that
\begin{align*}
\left\|\Xi(w,r)\right\|
&\leq\left\|\phi(s)\right\|\left(\|\cR(s,\cdot)\|+\gamma\|\mathcal{M}\Phi (s')\|\right)+\left\|\phi(s)\right\|\left(\gamma\left\|\phi(s')\right\|+\left\|\phi(s)\right\|\right)\|r\|
\\&\leq\left\|\phi(s)\right\|\left(\|\cR(s,\cdot)\|+\gamma\| \cR\|+\gamma\|\phi\|\right)+\left\|\phi(s)\right\|\left(\gamma\left\|\phi(s')\right\|+\left\|\phi(s)\right\|\right)\|r\|,
\end{align*}
we then easily deduce the result using the boundedness of $\phi$ and $\cR$.

Now we observe the following Lipschitz condition on $\Xi$:
\begin{align*}
&\left\|\Xi(w,r)-\Xi(w,\bar{r})\right\|
\\&=\left\|\phi(s)\left(\gamma\max\left\{(\Phi r)(s'),\mathcal{M}\Phi(s')\right\}-\gamma\max\left\{(\Phi \bar{r})(s'),\mathcal{M}\Phi(s')\right\}\right)-\left((\Phi r)(s)-\Phi\bar{r}(s)\right)\right\|
\\&\leq\gamma\left\|\phi(s)\right\|\left\|\max\left\{\phi'(s') r,\mathcal{M}\Phi'(s')\right\}-\max\left\{(\phi'(s') \bar{r}),\mathcal{M}\Phi'(s')\right\}\right\|+\left\|\phi(s)\right\|\left\|\phi'(s) r-\phi(s)\bar{r}\right\|
\\&\leq\gamma\left\|\phi(s)\right\|\left\|\phi'(s') r-\phi'(s') \bar{r}\right\|+\left\|\phi(s)\right\|\left\|\phi'(s) r-\phi'(s)\bar{r}\right\|
\\&\leq \left\|\phi(s)\right\|\left(\left\|\phi(s)\right\|+ \gamma\left\|\phi(s)\right\|\left\|\phi'(s') -\phi'(s') \right\|\right)\left\|r-\bar{r}\right\|
\\&\leq c\left\|r-\bar{r}\right\|,
\end{align*}
using Cauchy-Schwarz inequality and  that for any scalars $a,b,c$ we have that $
    \left|\max\{a,b\}-\max\{b,c\}\right|\leq \left|a-c\right|$.
    
Using Assumptions 3 and 4, we therefore deduce that
\begin{align}
\sum_{t=0}^\infty\left\|\mathbb{E}\left[\Xi(w,r)-\Xi(w,\bar{r})|w_0=w\right]-\mathbb{E}\left[\Xi(w_0,r)-\Xi(w_0,\bar{r})\right\|\right]\leq c\left\|r-\bar{r}\right\|(1+\left\|w\right\|^l).
\end{align}

Part 2 is assured by Lemma \ref{projection_F_contraction_lemma} while Part 4 is assured by Lemma \ref{value_difference_Q_difference} and lastly Part 8 is assured by Lemma \ref{iteratation_property_lemma}.
% \end{proof}
This result completes the proof of Theorem \ref{theorem:existence}. 
\end{proof}

\section*{Proof of Proposition \ref{prop:switching_times}}
\begin{proof}
First let us recall that the \textit{intervention time} $\tau_k$ is defined recursively $\tau_k=\inf\{t>\tau_{k-1}|s_t\in A,\tau_k\in\mathcal{F}_t\}$ where $A=\{s\in \mathcal{S},g(s_t)=1\}$.
The proof is given by establishing a contradiction.  Therefore suppose that $\mathcal{M}\psi(s_{\tau_k})\leq \psi(s_{\tau_k})$ and suppose that the intervention time $\tau'_1>\tau_1$ is an optimal intervention time. Construct the $\pi'\in\Pi$ and $\tilde{\pi}\in\Pi$ policy switching times by $(\tau'_0,\tau'_1,\ldots,)$ and $(\tau'_0,\tau_1,\ldots)$ respectively.  Define by $l=\inf\{t>0;\mathcal{M}\psi(s_t)= \psi(s_t)\}$ and $m=\sup\{t;t<\tau'_1\}$.
By construction we have that
\begin{align*}
& \quad v^{\pi'}(s)
\\&=\mathbb{E}\left[\cR(s_{0},a_{0})+\mathbb{E}\left[\ldots+\gamma^{l-1}\mathbb{E}\left[\cR(s_{\tau_1-1},a_{\tau_1-1})+\ldots+\gamma^{m-l-1}\mathbb{E}\left[ \cR(s_{\tau'_1-1},a_{\tau'_1-1})+\gamma\mathcal{M}^{\pi^1,\pi'}v^{\pi'}(s',I(\tau'_{1}))\right]\right]\right]\right]
\\&<\mathbb{E}\left[\cR(s_{0},a_{0})+\mathbb{E}\left[\ldots+\gamma^{l-1}\mathbb{E}\left[ \cR(s_{\tau_1-1},a_{\tau_1-1})+\gamma\mathcal{M}^{\tilde{\pi}}v^{\pi'}(s_{\tau_1})\right]\right]\right]
\end{align*}
We now use the following observation 
\begin{align}
&\mathbb{E}\left[ \cR(s_{\tau_1-1},a_{\tau_1-1})+\gamma\mathcal{M}^{\tilde{\pi}}v^{\pi'}(s_{\tau_1})\right]
\\&\leq \max\left\{\mathcal{M}^{\tilde{\pi}}v^{\pi'}(s_{\tau_1}),\underset{a_{\tau_1}\in\mathcal{A}}{\max}\;\left[ \cR(s_{\tau_{k}},a_{\tau_{k}})+\gamma\sum_{s'\in\mathcal{S}}P(s';a_{\tau_1},s_{\tau_1})v^{\pi}(s')\right]\right\}.
\end{align}

Using this we deduce that
\begin{align*}
&v^{\pi'}(s)\leq\mathbb{E}\Bigg[\cR(s_{0},a_{0})+\mathbb{E}\Bigg[\ldots
\\&+\gamma^{l-1}\mathbb{E}\left[ \cR(s_{\tau_1-1},a_{\tau_1-1})+\gamma\max\left\{\mathcal{M}^{\tilde{\pi}}v^{\pi'}(s_{\tau_1}),\underset{a_{\tau_1}\in\mathcal{A}}{\max}\;\left[ \cR(s_{\tau_{k}},a_{\tau_{k}})+\gamma\sum_{s'\in\mathcal{S}}P(s';a_{\tau_1},s_{\tau_1})v^{\pi}(s')\right]\right\}\right]\Bigg]\Bigg]
\\&=\mathbb{E}\left[\cR(s_{0},a_{0})+\mathbb{E}\left[\ldots+\gamma^{l-1}\mathbb{E}\left[ \cR(s_{\tau_1-1},a_{\tau_1-1})+\gamma\left[T v^{\tilde{\pi}}\right](s_{\tau_1})\right]\right]\right]=v^{\tilde{\pi}}(s)),
\end{align*}
where the first inequality is true by assumption on $\mathcal{M}$. This is a contradiction since $\pi'$ is an optimal policy for Player 2. Using analogous reasoning, we deduce the same result for $\tau'_k<\tau_k$ after which deduce the result. Moreover, by invoking the same reasoning, we can conclude that it must be the case that $(\tau_0,\tau_1,\ldots,\tau_{k-1},\tau_k,\tau_{k+1},\ldots,)$ are the optimal switching times. 
\end{proof}

\end{document}